\documentclass[11pt,letterpaper]{article}

\usepackage[margin=1in]{geometry}
\usepackage[hyphens]{url}
\usepackage{graphicx}
\usepackage{microtype}
\usepackage{booktabs}
\usepackage{multirow}
\usepackage{algorithm}
\usepackage{algorithmic}
\usepackage{amsmath}

\usepackage{amssymb}
\usepackage{mathtools}
\usepackage{amsthm}
\usepackage{natbib}
\usepackage{caption}
\usepackage{placeins}
\usepackage{float}
\usepackage[hidelinks]{hyperref}
\hypersetup{pdftitle={MOON: Multi-Objective OrthoNormalized Updates for Multitask Learning},pdfauthor={Shiji Zhou, Kunlin Lyu, Lei Zhang, Ruodong Wang, Yifan Sun}}
\usepackage[capitalize,noabbrev]{cleveref}
\providecommand{\Description}[1]{}

\theoremstyle{plain}
\newtheorem{theorem}{Theorem}
\newtheorem{proposition}{Proposition}
\newtheorem{lemma}{Lemma}
\newtheorem{corollary}{Corollary}

\theoremstyle{definition}
\newtheorem{definition}{Definition}
\newtheorem{assumption}{Assumption}

\theoremstyle{remark}

\def\argmin{\mathop{\arg\min}}

\AtBeginDocument{}
\title{MOON: Multi-Objective OrthoNormalized Updates for Multitask Learning}
\author{
Shiji Zhou$^{1,2,3}$ \quad Kunlin Lyu$^{4}$ \quad Lei Zhang$^{1}$ \quad 
Ruodong Wang$^{4}$ \quad Yifan Sun$^{2,4,*}$\\[0.5em]
\small $^{1}$Institute of Artificial Intelligence, Beihang University\\
\small $^{2}$Beijing Advanced Innovation Center for Future Blockchain and Privacy Computing,\\\small Beihang University\\
\small $^{3}$Beijing Academy of Artificial Intelligence (BAAI)\\
\small $^{4}$Center for Applied Statistics, School of Statistics, Renmin University of China\\
\small $^{*}$Corresponding author
}
\date{}

\begin{document}

\maketitle

\begin{abstract}
Multi-objective optimization (MOO) has demonstrated significant success in multi-task learning by mitigating task conflicts through gradient manipulation. However, most existing methods flatten model parameters into vectors and perform gradient manipulation under Euclidean geometry, thereby overlooking the matrix structure prevalent in modern architectures such as Transformers. In this paper, we show that gradient manipulation in Euclidean space does not generally yield the steepest descent direction under matrix geometry, potentially limiting optimization efficiency. Drawing from the theory of steepest descent for matrix-valued parameters, we propose MOON (Multi-Objective OrthoNormalized Updates), which performs gradient manipulation under spectral--nuclear norm geometry and uses the orthonormalized manipulated gradient for parameter updates. Theoretically, for smooth non-convex objectives, we establish convergence of the averaged Pareto-stationarity measure at rates of $\mathcal{O}(T^{-1/2})$ in the deterministic setting and $\mathcal{O}(T^{-1/4})$ under stochastic gradients. Empirical results across various benchmarks show that MOON consistently improves both optimization efficiency and final multi-task performance. Our code is available at \url{https://github.com/KunlinLyu/MOON}.
\end{abstract}

\section{Introduction}
\label{sec:intro}

A wide range of practical problems in machine learning are inherently multi-objective, where the objectives often conflict and must be traded off \citep{sener2018multi,peitz2025multi}. In deep learning, task conflicts often manifest as gradient conflicts, with task gradients exhibiting competing directions during optimization. Multi-objective optimization (MOO) methods \citep{zitzler1999multiobjective} mitigate task-gradient conflicts via gradient manipulation and have achieved great success in multi-task learning \citep{sener2018multi,yu2020gradient,navon2022multi,liu2024famo}.

Existing MOO methods such as MGDA \citep{sener2018multi}, PCGrad \citep{yu2020gradient}, FAMO \citep{liu2024famo}, and other variants \citep{liu2021conflict,liu2021towards,navon2022multi} typically flatten model parameters into a single vector and then perform gradient manipulation in this Euclidean vector space. However, modern architectures, such as Transformers \citep{vaswani2017attention}, are dominated by matrix-valued parameters. Blindly flattening all parameters into vectors ignores the linear-mapping structure of matrix-valued parameters and may fail to identify the steepest descent direction under matrix geometry. This motivates MOO methods tailored to network architectures with matrix-valued parameters.

One possible approach is to combine conventional MOO methods with matrix-aware optimizers such as Muon \citep{jordan2024muon}, and MuonClip \citep{team2025kimi}, which exploit matrix structure through preconditioning or orthonormalized updates. However, conventional MOO manipulates gradients under Euclidean geometry, whereas matrix-aware optimizers apply structure-preserving transformations induced by matrix geometry. Naively combining them, therefore, lacks a consistent geometry and creates a theoretical gap between their underlying assumptions, creates a mismatch between their underlying optimization principles. Moreover, establishing MOO convergence under this matrix geometry is nontrivial. Since orthonormalization discards the singular-value magnitudes of the aggregated gradient, standard Euclidean descent analyses based on squared gradient norms are no longer directly applicable. The analysis must instead characterize Pareto stationarity under matrix geometry and exploit spectral--nuclear norm duality to connect the orthonormalized direction with optimization progress.

To address these challenges, we propose Multi-Objective OrthoNormalized Updates (MOON), a geometrically consistent framework for multi-objective optimization with matrix-valued parameters \citep{bernstein2024old}. Starting from simultaneous matrix-smooth upper bounds of the objectives, we formulate the selection of a common descent direction as a spectral-norm-regularized minimax problem. Through spectral--nuclear norm duality, the corresponding dual problem determines the task weights by minimizing the nuclear norm of the weighted aggregate gradient, while the solution to the primal problem is given by its polar factor. This formulation provides a principled connection between multi-objective gradient manipulation and orthonormalized matrix updates, distinguishing MOON from conventional MOO methods based on Euclidean geometry. For smooth non-convex objectives, we establish convergence of the averaged Pareto-stationarity measure at rates of $\mathcal{O}(T^{-1/2})$ under deterministic gradients and $\mathcal{O}(T^{-1/4})$ under unbiased stochastic gradients. Extensive experiments further demonstrate that MOON accelerates optimization and achieves competitive or improved final performance across multiple benchmarks.

Our main contributions are: \textbf{(1)} We propose MOON, a structure-aware MOO method. MOON is derived by our analysis of multi-objective steepest descent for matrix values, providing a geometrically consistent update for modern architectures. \textbf{(2)} We establish convergence to Pareto stationarity for smooth non-convex objectives, where MOON attains an $O(T^{-1/2})$ rate under exact gradients and an $O(T^{-1/4})$ rate under unbiased stochastic gradients with bounded noise. \textbf{(3)} We conduct extensive experiments on diverse multi-task scenarios with various architectures, demonstrating faster optimization and competitive or improved final performance compared with representative MOO baselines.

\section{Background}
\label{sec:prelim}
\paragraph{Multi-Objective Optimization.}
Multi-objective optimization (MOO) aims to optimize multiple objectives simultaneously \citep{zitzler1999multiobjective}:
\begin{equation}\label{eq:moo_problem}
    \min_{\boldsymbol{\theta}} \boldsymbol{L}(\boldsymbol{\theta}) = (\ell^1(\boldsymbol{\theta}),\dots,\ell^m(\boldsymbol{\theta}))^\top,
\end{equation}
where $m \ge 2$ denotes the number of objectives, and $\ell^i:\mathbb{R}^n\rightarrow \mathbb{R}$ is the $i$-th loss function. Denote $\Delta_m$ as the $(m-1)$-dimensional probability simplex. The concept of Pareto optimality/stationarity~\citep{custodio2011direct} is introduced to determine whether a solution to MOO is optimal/critical. 
\begin{definition}\label{def::partial_order}\textbf{Pareto Optimality:}
    For any two solutions $\boldsymbol{\theta}, \boldsymbol{\theta}'$, we say that $\boldsymbol{\theta}$ dominates $\boldsymbol{\theta}'$, denoted as $\boldsymbol{\theta} \prec \boldsymbol{\theta}'$ or $\boldsymbol{\theta}'\succ \boldsymbol{\theta}$, if $\ell^i(\boldsymbol{\theta}) \leq \ell^i(\boldsymbol{\theta}')$ for all $i$, and there exists an $i$ such that $\ell^i(\boldsymbol{\boldsymbol{\theta}}) < \ell^i(\boldsymbol{\theta}')$. A solution $\boldsymbol{\theta}^\ast$ is called Pareto optimal if it is not dominated by any other solution. \textbf{Pareto Stationary:} A solution $\boldsymbol{\theta}$ is called Pareto stationary if there exists some $\boldsymbol{\lambda} \in \Delta_{m}$ such that $\sum_{i=1}^m \lambda_i \nabla_{\boldsymbol{\theta}} \ell^i(\boldsymbol{\theta})=0$.
\end{definition}

The typical \textbf{Multiple Gradient Descent Algorithm (MGDA)} method~\citep{sener2018multi} has been shown to be the \textbf{steepest method} for MOO~\cite{fliege2000steepest}. Specifically, 
under the assumption of $L$-smoothness with respect to the $\|\cdot\|_2$ norm, in each iteration $t$, MGDA aims to find a direction $\boldsymbol{d}_t$ to maximize the minimum decrease in losses by solving the following subproblem:
\begin{equation}\label{eq:MGDA}
    \boldsymbol{d}_t = \argmin_{\boldsymbol{d}\in \mathbb{R}^d} \max_{i\in[m]} \{ \nabla \ell^i(\boldsymbol{\theta}_t)^\top \boldsymbol{d} + \frac{L}{2}\|\boldsymbol{d}\|^2_2\}.
\end{equation}
The right-hand side $\nabla \ell^i(\boldsymbol{\theta}_t)^\top \boldsymbol{d} + \frac{L}{2}\|\boldsymbol{d}\|^2_2$ is actually the quadratic upper bound of the objective decrease $\ell^i (\boldsymbol{\theta}_t + \boldsymbol{d}) - \ell^i (\boldsymbol{\theta}_t)$. MOO aims to optimize all the objectives simultaneously, and mathematically to maximize the minimum objective decrease, which formulates the subproblem above. Other MOO methods such as PCGrad~\cite{yu2020gradient}, CAGrad~\cite{liu2021conflict} and other variants~\cite{zhou2022convergence,navon2022multi,fernando2023mitigating,he2024robust} also share a similar idea of seeking a mutual descent direction by optimizing towards some specific upper bounds. 

\paragraph{Steepest Descent via Orthonormalized Updates.}
\label{subsec:muon}
In modern neural network architectures, such as Transformer, matrix parameters are fundamental components. 

Suppose the objective $\mathcal{L}:\mathbb{R}^{p\times q}\rightarrow \mathbb{R}$ is $L$-smooth with respect to the spectral norm $\| \cdot \|_{S_\infty}$. Then $\mathcal{L}$ admits an upper bound that is quadratic in the spectral norm\citep{bernstein2024old}. 
\begin{equation}
    \mathcal{L}(\boldsymbol{\Theta}) \leq \mathcal{L}(\boldsymbol{\Theta}') + \langle \nabla \mathcal{L}(\boldsymbol{\Theta}'), \boldsymbol{\Theta} - \boldsymbol{\Theta}' \rangle + \frac{L}{2} \| \boldsymbol{\Theta} - \boldsymbol{\Theta}' \|_{S_{\infty}}^{2}.
    \label{eq:upperbound}
\end{equation}

By minimizing this quadratic upper bound, we obtain the steepest descent under the spectral norm. 

\begin{proposition}[Steepest Descent for Matrix Parameters \cite{bernstein2024old}]
\label{prop:steepest_spec}
In each iteration $t$,  steepest descent under the spectral norm aims to solve the following subproblem: 
\begin{equation}\label{eq:spectral_SGD}
    \Delta \boldsymbol{\Theta}_t =  \argmin_{\Delta \boldsymbol{\Theta}} \left[ \langle \nabla \mathcal{L}(\boldsymbol{\Theta}_t), \Delta \boldsymbol{\Theta} \rangle + \frac{L}{2} \| \Delta \boldsymbol{\Theta} \|_{S_{\infty}}^2 \right],
\end{equation}
where $\langle \cdot, \cdot \rangle$ denotes the Frobenius inner product.
\end{proposition}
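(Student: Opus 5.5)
The statement says that steepest descent under the spectral norm minimizes the upper bound \eqref{eq:upperbound}. Implicitly, it also asserts that the minimizer has the known closed form $\Delta\boldsymbol{\Theta}_t = -\frac{\|\nabla\mathcal{L}(\boldsymbol{\Theta}_t)\|_{S_1}}{L}\,\boldsymbol{U}\boldsymbol{V}^\top$, where $\nabla\mathcal{L}(\boldsymbol{\Theta}_t)=\boldsymbol{U}\boldsymbol{\Sigma}\boldsymbol{V}^\top$ is a reduced SVD. I would prove both parts.

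The first part is immediate. Substituting $\boldsymbol{\Theta}=\boldsymbol{\Theta}_t+\Delta\boldsymbol{\Theta}$ and $\boldsymbol{\Theta}'=\boldsymbol{\Theta}_t$ into \eqref{eq:upperbound} shows that $\mathcal{L}(\boldsymbol{\Theta}_t+\Delta\boldsymbol{\Theta})-\mathcal{L}(\boldsymbol{\Theta}_t)$ is bounded above by the objective in \eqref{eq:spectral_SGD}. Minimizing this bound over $\Delta\boldsymbol{\Theta}$ therefore guarantees the largest decrease certified by $L$-smoothness, which is the paper's sense of ``steepest.''

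To solve the subproblem, write $G=\nabla\mathcal{L}(\boldsymbol{\Theta}_t)$ and decompose $\Delta\boldsymbol{\Theta}=-\eta\boldsymbol{O}$, where $\eta\ge0$ is a magnitude and $\|\boldsymbol{O}\|_{S_\infty}\le1$ is a direction. The objective becomes $-\eta\langle G,\boldsymbol{O}\rangle+\frac{L}{2}\eta^2$, so I minimize in two stages.

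\textbf{Inner step (direction).} For fixed $\eta$, I need $\max_{\|\boldsymbol{O}\|_{S_\infty}\le1}\langle G,\boldsymbol{O}\rangle$. This is the key step and the only nontrivial one: it is the duality between the spectral and nuclear norms, and it equals $\|G\|_{S_1}$.
\begin{itemize}
\item \emph{Upper bound.} Write $\langle G,\boldsymbol{O}\rangle=\sum_i\sigma_i\,\boldsymbol{u}_i^\top\boldsymbol{O}\boldsymbol{v}_i$. Each term satisfies $\boldsymbol{u}_i^\top\boldsymbol{O}\boldsymbol{v}_i\le\|\boldsymbol{O}\|_{S_\infty}\le1$, so $\langle G,\boldsymbol{O}\rangle\le\sum_i\sigma_i=\|G\|_{S_1}$. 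Von Neumann's trace inequality gives the same bound.
\item \emph{Attainment.} Take $\boldsymbol{O}=\boldsymbol{U}\boldsymbol{V}^\top$. Then $\langle G,\boldsymbol{U}\boldsymbol{V}^\top\rangle=\mathrm{tr}(\boldsymbol{\Sigma})=\|G\|_{S_1}$.
\end{itemize}

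\textbf{Outer step (magnitude).} The problem reduces to $\min_{\eta\ge0}\,-\eta\|G\|_{S_1}+\frac{L}{2}\eta^2$. This scalar quadratic is minimized at $\eta=\|G\|_{S_1}/L$, where its value is $-\|G\|_{S_1}^2/(2L)$. This gives the claimed closed form.

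For completeness, I would also argue that no minimizer is lost by the decomposition. Any $\Delta\boldsymbol{\Theta}$ with spectral norm $\eta$ can be written as $-\eta\boldsymbol{O}$ for some $\boldsymbol{O}$ with $\|\boldsymbol{O}\|_{S_\infty}\le1$, so the two-stage minimization covers every candidate. When $G$ has distinct nonzero singular values, the minimizer is unique up to the non-uniqueness of $\boldsymbol{U}\boldsymbol{V}^\top$ on degenerate singular spaces. In the degenerate case I would only claim that the minimizer exists and that the polar factor achieves the minimum.
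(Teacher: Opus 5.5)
Your argument is correct and is essentially the standard dual-norm proof behind the result the paper cites from \citet{bernstein2024old}; the paper gives no proof of its own. As in that source, you split $\Delta\boldsymbol{\Theta}$ into a magnitude and a unit-spectral-norm direction, use spectral--nuclear duality with $\boldsymbol{U}\boldsymbol{V}^\top$ attaining $\|G\|_{S_1}$, and minimize the resulting scalar quadratic to get the step $\|G\|_{S_1}/L$.

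One correction to your uniqueness aside. The compact polar factor $\boldsymbol{U}\boldsymbol{V}^\top$ is uniquely determined by $G$ even when nonzero singular values repeat; what destroys uniqueness is rank deficiency. Every $\boldsymbol{U}\boldsymbol{V}^\top+\boldsymbol{W}$ with $\boldsymbol{U}^\top\boldsymbol{W}=\boldsymbol{0}$, $\boldsymbol{W}\boldsymbol{V}=\boldsymbol{0}$ and $\|\boldsymbol{W}\|_{S_\infty}\le 1$ is equally optimal. So the minimizer is unique if and only if $G$ has rank $\min\{p,q\}$ (or $G=\boldsymbol{0}$), whether or not the singular values are distinct; this is consistent with the full-rank assumption in Proposition~\ref{prop:moon_dual}.
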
 
From \citet{bernstein2024old} we know that, if $\nabla \mathcal{L}(\boldsymbol{\Theta}_t)$ has a reduced SVD given by $\nabla \mathcal{L}(\boldsymbol{\Theta}_t) = \boldsymbol{U}_t \boldsymbol{\Sigma}_t \boldsymbol{V}_t^{\top}$, the subproblem \eqref{eq:spectral_SGD} is solved with a step size $\alpha_t = \text{Tr}(\boldsymbol{\Sigma}_t) = \| \nabla \mathcal{L}(\boldsymbol{\Theta}_t) \|_{S_1}/L$ and an update: 
\begin{equation}\label{eq:spectral_SGD_update}
    \Delta \boldsymbol{\Theta}_t = -\alpha_t \cdot \boldsymbol{U}_t \boldsymbol{V}_t^{\top}.
\end{equation}

This result highlights a fundamental difference between steepest-descent updates for matrix-valued parameters and those derived under Euclidean vector geometry. Whereas the Euclidean gradient preserves the original singular-value magnitudes, the spectral-norm steepest-descent direction is characterized by the polar factor $\boldsymbol{U}_t\boldsymbol{V}_t^\top$, which preserves the singular subspaces while normalizing the nonzero singular values. Consequently, the resulting direction is less dominated by a few large singular components, while its overall update magnitude can be controlled separately by the learning rate.

Matrix-aware optimizers such as Shampoo \citep{gupta2018shampoo} and Muon \citep{jordan2024muon} have recently attracted increasing attention for exploiting matrix structure during optimization. Shampoo constructs structured preconditioners from gradient statistics, whereas Muon applies Newton--Schulz iterations to a momentum-smoothed gradient matrix to efficiently approximate its polar factor without an explicit singular value decomposition. Empirical studies have demonstrated the effectiveness and optimization efficiency of this strategy in large-scale Transformer training in large language models \citep{liu2025muon,team2025kimi}.

\section{Multi-Objective Orthonormalized Updates}\label{sec:method}
Although solving the Euclidean-norm minimax problem (Equation \ref{eq:MGDA}) yields the steepest descent direction in the Euclidean space, this direction may not be the steepest descent direction in the native matrix space of the matrix-valued parameters—such as the formulation shown in Equation \ref{eq:spectral_SGD}. Consequently, existing methods struggle to adapt to the training of modern neural network architectures. Therefore, we extend the multi-objective optimization framework to matrix-valued parameters. For clarity, we present the derivation using a single matrix-valued parameter. For networks containing multiple matrix-valued parameter blocks, the same construction is applied blockwise.

Suppose that for each objective $\ell^{i}: \mathbb{R}^{p\times q}\rightarrow \mathbb{R}$ that is $L$-smooth with respect to the spectral norm $\| \cdot \|_{S_\infty}$, we assume that an update $\boldsymbol{\Theta}_{t+1} = \boldsymbol{\Theta}_{t}-\alpha\boldsymbol{W}_t$ is performed at step $t$ and derive its quadratic upper bound from Equation (\ref{eq:upperbound})
\begin{equation}
    \ell^{i}(\boldsymbol{\Theta}_{t+1})\leq \ell^{i}(\boldsymbol{\Theta}_{t}) - \alpha\langle\nabla\ell^{i}(\boldsymbol{\Theta}_{t}), \boldsymbol{W}_t\rangle + \frac{L\alpha^2}{2}\|\boldsymbol{W}_t\|_{S_{\infty}}^{2}.
\end{equation}

Similar to Equation \ref{eq:MGDA}, we seek a mutual update direction in the matrix space that minimizes the worst-case upper bound. Thus, we aim to solve the following problem.
\begin{proposition}
    Let $\alpha \leq 1/L$. Minimizing the common upper bound is equivalent to solving the following spectral-norm-regularized minimax problem:
    \begin{equation}
        \min_{\boldsymbol{W}_t}\max_{i} - \langle\nabla\ell^{i}(\boldsymbol{\Theta}_{t}), \boldsymbol{W}_t\rangle + \frac{1}{2}\|\boldsymbol{W}_t\|_{S_{\infty}}^{2}.
        \label{eq:moon_primal}
    \end{equation}
\end{proposition}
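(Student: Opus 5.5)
The plan is to write the common upper bound explicitly and reduce it to \eqref{eq:moon_primal} by two elementary manipulations: removing the additive constants, then rescaling. The exact meaning of ``equivalent'' is discussed at the end.

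By $L$-smoothness of each $\ell^i$ with respect to $\|\cdot\|_{S_\infty}$, the worst-case bound on the loss change under the update $\boldsymbol{\Theta}_{t+1}=\boldsymbol{\Theta}_t-\alpha\boldsymbol{W}_t$ is
\begin{equation*}
\max_i\big[\ell^i(\boldsymbol{\Theta}_{t+1})-\ell^i(\boldsymbol{\Theta}_t)\big]\le \max_i\Big\{-\alpha\langle\nabla\ell^i(\boldsymbol{\Theta}_t),\boldsymbol{W}_t\rangle+\tfrac{L\alpha^2}{2}\|\boldsymbol{W}_t\|_{S_\infty}^2\Big\}.
\end{equation*}
We minimize this over $\boldsymbol{W}_t$, in the same spirit as \eqref{eq:MGDA}. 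The constants $\ell^i(\boldsymbol{\Theta}_t)$ differ across $i$, so they cannot be pulled out of the max. For this reason we work with decreases $\ell^i(\boldsymbol{\Theta}_{t+1})-\ell^i(\boldsymbol{\Theta}_t)$, exactly as MGDA does. The regularizer does not depend on $i$, so it factors out of the max:
\begin{equation*}
\max_i\{\cdots\}=\alpha\Big(\max_i\{-\langle\nabla\ell^i(\boldsymbol{\Theta}_t),\boldsymbol{W}_t\rangle\}+\tfrac{L\alpha}{2}\|\boldsymbol{W}_t\|_{S_\infty}^2\Big).
\end{equation*}

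The remaining issue is the coefficient $L\alpha$ in front of the regularizer.
\begin{itemize}
\item \emph{Case $\alpha=1/L$.} The coefficient equals $1$. The objective is then exactly $\alpha$ times \eqref{eq:moon_primal}, and since $\alpha>0$ the minimizers coincide.
\item \emph{Case $\alpha<1/L$.} Since $L\alpha\le 1$, we have $\tfrac{L\alpha}{2}\|\boldsymbol{W}\|^2\le\tfrac12\|\boldsymbol{W}\|^2$. Hence \eqref{eq:moon_primal} is itself a valid (looser) common upper bound on the decrease, and minimizing it still guarantees the minimax decrease.
\end{itemize}
For exact equivalence of directions in the second case, substitute $\boldsymbol{W}=c\boldsymbol{V}$ with $c=1/(L\alpha)$. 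This turns $\max_i\{-\langle g_i,\boldsymbol{W}\rangle\}+\tfrac{L\alpha}{2}\|\boldsymbol{W}\|^2$ into $c\big(\max_i\{-\langle g_i,\boldsymbol{V}\rangle\}+\tfrac12\|\boldsymbol{V}\|^2\big)$. So the minimizers differ only by the positive scale $c$, which can be absorbed into the step size, and the update direction is unchanged.

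The main obstacle is pinning down the sense of ``equivalent''. I will state it as follows: the minimizers of the two problems agree up to the positive rescaling $1/(L\alpha)$, and at $\alpha=1/L$ they agree exactly. The condition $\alpha\le 1/L$ additionally ensures that the bound in \eqref{eq:moon_primal} dominates the true smoothness bound. I will also note that both objectives are strictly convex and coercive in $\boldsymbol{W}$, because the max of linear functions plus a squared norm is convex and coercive. Hence the minimum is attained, and the rescaling argument applies to the minimizer set.
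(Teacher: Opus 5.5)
Your argument is correct, and it proves slightly more than the paper does. The paper gives no separate proof of this proposition. Its implicit justification is just the relaxation in your second bullet: for $\alpha\le 1/L$,
\[
\tfrac{L\alpha^2}{2}\|\boldsymbol{W}_t\|_{S_\infty}^2\le\tfrac{\alpha}{2}\|\boldsymbol{W}_t\|_{S_\infty}^2 .
\]
So $\alpha$ times the objective of \eqref{eq:moon_primal} is a valid common upper bound on $\ell^i(\boldsymbol{\Theta}_{t+1})-\ell^i(\boldsymbol{\Theta}_t)$, and minimizing it is what the paper calls ``minimizing the common upper bound''. That is why the hypothesis $\alpha\le1/L$ appears. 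On its own, however, this only shows that \eqref{eq:moon_primal} is a looser surrogate.

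Your rescaling step is the extra ingredient that makes ``equivalent'' literal: substituting $\boldsymbol{W}=c\boldsymbol{V}$ with $c=1/(L\alpha)$ multiplies the objective by $c$. Hence the minimizer set of the exact smoothness bound is $c$ times the minimizer set of \eqref{eq:moon_primal}. The direction, and in fact the displacement $\alpha\boldsymbol{W}=\boldsymbol{V}/L$, does not depend on $\alpha$. This part holds for every $\alpha>0$. The condition $\alpha\le 1/L$ is needed only for the surrogate-bound reading, and strictly speaking it is $\alpha$ times the objective of \eqref{eq:moon_primal}, not the objective itself, that dominates the smoothness bound. This is also consistent with the paper's later choice to keep only the polar direction and let the learning rate set the magnitude.

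One correction: the objectives are convex and coercive but not strictly convex. The function $\|\cdot\|_{S_\infty}^2$ is constant on the segment from $\mathrm{diag}(1,0)$ to $\mathrm{diag}(1,1)$. Concretely, if $\nabla\ell^1(\boldsymbol{\Theta}_t)=\nabla\ell^2(\boldsymbol{\Theta}_t)=\mathrm{diag}(1,0)$, then every $\mathrm{diag}(1,s)$ with $|s|\le1$ minimizes \eqref{eq:moon_primal}. Existence of minimizers already follows from continuity and coercivity. Since you state the rescaling for minimizer sets, nothing in your argument depends on uniqueness; just drop the word ``strictly''.
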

Consequently, we derive the quadratic upper bound for MOO with matrix parameters, which distinguishes our approach from traditional MOO methods.

Since the primal problem (\ref{eq:moon_primal}) is difficult to solve, we usually consider the following dual problem. 

\begin{proposition}
    The dual problem of (\ref{eq:moon_primal}) is
    \begin{equation}
        \min_{\boldsymbol{z}\in\Delta_{m}} \quad \frac{1}{2}\|\sum_{i=1}^{m} z_{i,t}\nabla\ell^{i}(\boldsymbol{\Theta}_{t})\|_{S_{1}}^{2},
        \label{eq:moon_dual}
    \end{equation}
    where $\Delta_{m}$ denotes the probability simplex. Assume the optimum of the dual (\ref{eq:moon_dual}) is $z^{\ast}$ and the manipulated gradient $\boldsymbol{G}_t =\sum_{i=1}^{m} z_{i,t}^{\ast}\nabla\ell^{i}(\boldsymbol{\Theta}_t)$ has full rank. The exact optimal solution of (\ref{eq:moon_primal}) is given by
    $$
    \widehat{\boldsymbol{W}}_t = C\cdot\boldsymbol{U}_t \boldsymbol{V}_t^{\top},
    $$
    where $C = \|\sum_{i=1}^{m} z_{i,t}^{\ast}\nabla\ell^{i}(\boldsymbol{\Theta}_{t})\|_{S_{1}}$ is the nuclear norm. If $C=0$, then $\boldsymbol{\Theta}_t$ is Pareto stationary, and we set $\widehat{\boldsymbol{W}}_t=\boldsymbol{W}_t=\boldsymbol{0}$; otherwise, the following construction applies. $\boldsymbol{U}_t, \boldsymbol{V}_t$ are the left and right (compact) singular matrices of the manipulated gradient $\boldsymbol{G}_t$. 
    \label{prop:moon_dual}
\end{proposition}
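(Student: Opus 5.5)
We rewrite the inner maximum over the finite index set as a maximum over the simplex, since a linear function of $\boldsymbol{z}$ attains its maximum over $\Delta_m$ at a vertex:
\begin{equation*}
\max_{i} -\langle\nabla\ell^{i}(\boldsymbol{\Theta}_t),\boldsymbol{W}\rangle
=\max_{\boldsymbol{z}\in\Delta_m} -\langle \boldsymbol{G}(\boldsymbol{z}),\boldsymbol{W}\rangle,
\qquad \boldsymbol{G}(\boldsymbol{z})=\sum_{i=1}^m z_i\nabla\ell^{i}(\boldsymbol{\Theta}_t).
\end{equation*}
The primal (\ref{eq:moon_primal}) thus becomes $\min_{\boldsymbol{W}}\max_{\boldsymbol{z}\in\Delta_m}\phi(\boldsymbol{W},\boldsymbol{z})$, where $\phi(\boldsymbol{W},\boldsymbol{z})=-\langle\boldsymbol{G}(\boldsymbol{z}),\boldsymbol{W}\rangle+\frac12\|\boldsymbol{W}\|_{S_\infty}^2$. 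This $\phi$ is convex and coercive in $\boldsymbol{W}$ and linear (hence concave) in $\boldsymbol{z}$, and $\Delta_m$ is compact and convex. Sion's minimax theorem therefore allows us to swap min and max.

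Next we compute the inner minimum in $\boldsymbol{W}$ for fixed $\boldsymbol{z}$ via spectral--nuclear duality. Write $\boldsymbol{W}=s\boldsymbol{D}$ with $\|\boldsymbol{D}\|_{S_\infty}=1$ and $s\ge0$. Since $\sup_{\|\boldsymbol{D}\|_{S_\infty}\le1}\langle \boldsymbol{G},\boldsymbol{D}\rangle=\|\boldsymbol{G}\|_{S_1}$, we get
\begin{equation*}
\min_{\boldsymbol{W}}\phi=\min_{s\ge0}\Big(-s\|\boldsymbol{G}(\boldsymbol{z})\|_{S_1}+\tfrac12 s^2\Big)=-\tfrac12\|\boldsymbol{G}(\boldsymbol{z})\|_{S_1}^2,
\end{equation*}
with minimizer $s=\|\boldsymbol{G}\|_{S_1}$ and $\boldsymbol{D}$ a maximizer of $\langle\boldsymbol{G},\boldsymbol{D}\rangle$. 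Negating the resulting outer problem $\max_{\boldsymbol{z}}-\frac12\|\boldsymbol{G}(\boldsymbol{z})\|_{S_1}^2$ gives exactly (\ref{eq:moon_dual}). This is the same computation as in Proposition~\ref{prop:steepest_spec}, applied to the weighted gradient.

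To recover the primal solution, let $\boldsymbol{z}^\ast$ be optimal for the dual (it exists by compactness and continuity) and put $\boldsymbol{G}_t=\boldsymbol{G}(\boldsymbol{z}^\ast)=\boldsymbol{U}_t\boldsymbol{\Sigma}_t\boldsymbol{V}_t^\top$. By von Neumann's trace inequality, $\langle\boldsymbol{G}_t,\boldsymbol{D}\rangle=\|\boldsymbol{G}_t\|_{S_1}$ with $\|\boldsymbol{D}\|_{S_\infty}\le1$ holds for $\boldsymbol{D}=\boldsymbol{U}_t\boldsymbol{V}_t^\top$. Full rank makes this maximizer unique, because any such $\boldsymbol{D}$ must act as $\boldsymbol{U}_t\boldsymbol{V}_t^\top$ on the full singular subspaces. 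The candidate is therefore $\widehat{\boldsymbol{W}}_t=C\,\boldsymbol{U}_t\boldsymbol{V}_t^\top$.

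The main obstacle is showing that this candidate actually solves the primal, and not merely the inner problem at $\boldsymbol{z}^\ast$. We plan a saddle-point argument. The function $\Psi(\boldsymbol{W})=\max_{\boldsymbol{z}}\phi(\boldsymbol{W},\boldsymbol{z})$ is strongly convex in the sense of $\frac12\|\cdot\|_{S_\infty}^2$, and its minimizer $\boldsymbol{W}^\ast$ exists. Together with $\boldsymbol{z}^\ast$, it forms a saddle point by Sion's theorem and attainment on both sides. Hence $\boldsymbol{W}^\ast$ minimizes $\phi(\cdot,\boldsymbol{z}^\ast)$, and by the uniqueness just established $\boldsymbol{W}^\ast=\widehat{\boldsymbol{W}}_t$. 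This is exactly where the full-rank assumption is needed, since without it the inner minimizer is a whole face and may not be the primal optimum.

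Finally, if $C=0$ then $\sum_i z_i^\ast\nabla\ell^i(\boldsymbol{\Theta}_t)=\boldsymbol{0}$ with $\boldsymbol{z}^\ast\in\Delta_m$. This is Pareto stationarity by Definition~\ref{def::partial_order}, and the primal value $0$ is attained at $\boldsymbol{W}=\boldsymbol{0}$.
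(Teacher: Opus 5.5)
Your argument is correct. The paper gives no proof of this proposition to compare it with: it is stated in Section~\ref{sec:method} by analogy with the MGDA dual and with Proposition~\ref{prop:steepest_spec}, and the appendix never returns to it.

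Your steps form a complete derivation: the simplex relaxation of the inner max, Sion's theorem using compactness of $\Delta_m$, the identity $\min_{\boldsymbol W}\phi(\boldsymbol W,\boldsymbol z)=-\tfrac12\|\boldsymbol G(\boldsymbol z)\|_{S_1}^2$, and the saddle-point step combined with uniqueness of the minimizer of $\phi(\cdot,\boldsymbol z^\ast)$. You also locate the role of full rank correctly. It makes $\boldsymbol U_t\boldsymbol V_t^\top$ the only element of $\partial\|\boldsymbol G_t\|_{S_1}$; for rank-deficient $\boldsymbol G_t$, the particular choice $C\,\boldsymbol U_t\boldsymbol V_t^\top$ need not be primal optimal.

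One sentence should be removed. $\Psi$ is not strongly convex, nor even strictly convex, because $\tfrac12\|\cdot\|_{S_\infty}^2$ is constant along $\operatorname{diag}(1,\tau)$ for $\tau\in[0,1]$. If you meant strong convexity relative to $\tfrac12\|\cdot\|_{S_\infty}^2$ (i.e.\ $\Psi-\tfrac12\|\cdot\|_{S_\infty}^2$ convex), that is true but does no work. Existence of $\boldsymbol W^\ast$ follows from continuity and coercivity of $\Psi$. Uniqueness of the primal solution is a consequence of your argument, not an input to it.

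If you prefer to avoid Sion, you can verify $\widehat{\boldsymbol W}_t$ directly. At full rank the nuclear norm is differentiable at $\boldsymbol G_t$ with gradient $\boldsymbol U_t\boldsymbol V_t^\top$. Testing the first-order optimality condition for $\boldsymbol z^\ast$ on the simplex against the vertices then gives $\langle\nabla\ell^i(\boldsymbol\Theta_t),\widehat{\boldsymbol W}_t\rangle\ge\langle\boldsymbol G_t,\widehat{\boldsymbol W}_t\rangle=C^2$ for every $i$. Hence $\Psi(\widehat{\boldsymbol W}_t)\le-\tfrac12C^2$, which meets the weak-duality bound $\Psi(\boldsymbol W)\ge\phi(\boldsymbol W,\boldsymbol z^\ast)\ge-\tfrac12C^2$. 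This version shows explicitly that full rank enters through differentiability of $\|\cdot\|_{S_1}$ at $\boldsymbol G_t$.
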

Proposition \ref{prop:moon_dual} derives the dual of problem (\ref{eq:moon_primal}) and shows how to recover the exact primal solution from the dual optimum. For $C>0$, the orthonormalized direction used in Algorithm \ref{alg:moon} is $\boldsymbol{W}_t=\boldsymbol{U}_t\boldsymbol{V}_t^{\top}=\frac{1}{C}\widehat{\boldsymbol{W}}_t.$
Thus, $\boldsymbol{W}_t$ retains the direction of the exact minimax solution, while the update magnitude is controlled separately by the learning rate $\alpha$ in Step 6.

\paragraph{Remark: Comparison with Euclidean MOO methods.} 
For a fixed iterate $\boldsymbol{\Theta}_t$, define the weighted
aggregate gradient as $ \boldsymbol{G}_t(\boldsymbol{z}) = \sum_{i=1}^{m} z_i\nabla\ell^i(\boldsymbol{\Theta}_t), \boldsymbol{z}\in\Delta_m.$ Convexity and spectral--nuclear norm duality can imply the optimization meaning of its nuclear norm (Appendix \ref{app:remark2_proof}):
\begin{equation}
\label{eq:remark_convex_certificate}
\boldsymbol{z}^{\top}
    \boldsymbol{L}(\boldsymbol{\Theta}_t)
    -
    \min_{\boldsymbol{\Theta}\in\Omega}
    \boldsymbol{z}^{\top}\boldsymbol{L}(\boldsymbol{\Theta})
    \leq
    O\left(\left\|
    \boldsymbol{G}_t(\boldsymbol{z})
    \right\|_{S_1}\right).
\end{equation}
Thus,
$\|\boldsymbol{G}_t(\boldsymbol{z})\|_{S_1}$
provides a matrix-geometric certificate of the scalarized
suboptimality at $\boldsymbol{\Theta}_t$.

Let $\boldsymbol{z}_t^{\mathrm{MOON}}$ be an exact minimizer of
the nuclear-norm dual problem in
Equation~\eqref{eq:moon_dual}, and let
$\boldsymbol{z}_t^{\mathrm{E}}\in\Delta_m$ denote the weights
obtained by a Euclidean min-norm method such as MGDA. Since
$\boldsymbol{z}_t^{\mathrm{E}}$ is feasible for the MOON dual
problem, the optimality of
$\boldsymbol{z}_t^{\mathrm{MOON}}$ gives
\begin{equation}
\label{eq:remark_nuclear_dominance}
    \left\|
    \boldsymbol{G}_t
    \left(\boldsymbol{z}_t^{\mathrm{MOON}}\right)
    \right\|_{S_1}
    \leq
    \left\|
    \boldsymbol{G}_t
    \left(\boldsymbol{z}_t^{\mathrm{E}}\right)
    \right\|_{S_1}.
\end{equation}
Therefore, at the same iterate, the exact MOON weighting
provides a no-looser nuclear-norm-dependent certificate than
Euclidean min-norm weighting. Define the corresponding
certificate gap as
\begin{equation}
\label{eq:remark_weighting_gap}
    \operatorname{gap}_t
    :=
    \left\|
    \boldsymbol{G}_t
    \left(\boldsymbol{z}_t^{\mathrm{E}}\right)
    \right\|_{S_1}
    -
    \left\|
    \boldsymbol{G}_t
    \left(\boldsymbol{z}_t^{\mathrm{MOON}}\right)
    \right\|_{S_1}
    \geq 0.
\end{equation}
For a fixed sequence of iterates, the difference between the
corresponding certificate terms is
\begin{equation}
\label{eq:remark_certificate_gap}
    \Delta_{\mathrm{cert}}
    =
    \frac{1}{T}
    \sum_{t=1}^{T}
    \operatorname{gap}_t
    \geq 0,
\end{equation}
with a strict inequality whenever the Euclidean weighting is
not nuclear-norm optimal at least at one iterate\footnote{We provide a case study to show this advantage in Appendix~\ref{app:two_task_example}}.

\subsection{Practical Implementation}
\label{sec:practical_implementation}

Applying the exact MOON formulation to large-scale training
presents three practical challenges: fluctuations in the
aggregate gradient, the cost of exact polar-factor computation,
and the inner-loop optimization required to solve the dual
problem (\ref{eq:moon_dual}). To address these challenges, practical MOON incorporates
three components: (1) gradient momentum to stabilize the
aggregate gradient across iterations, (2) Newton--Schulz
iterations to efficiently approximate its polar factor, and
(3) a single-step online update to track the dual task weights. Together, these components yield the practical MOON procedure summarized in Algorithm~\ref{alg}. 

\paragraph{Gradient Momentum.}
At iteration $t$, MOON constructs the weighted aggregate
gradient
$
    \boldsymbol{G}_t
    =
    \sum_{i=1}^{m}
    z_{i,t}\nabla\ell^i(\boldsymbol{\Theta}_t).$
Because both the stochastic task gradients and the task weights
may change over time, directly constructing the matrix-aware
update from $\boldsymbol{G}_t$ can lead to substantial
iteration-to-iteration variation. We therefore maintain an
exponential moving average
\begin{equation}
\label{eq:gradient_momentum}
    \boldsymbol{M}_t
    =
    (1-\mu)\boldsymbol{M}_{t-1}
    +
    \mu\boldsymbol{G}_t,
    \qquad
    0<\mu\leq 1.
\end{equation}
The momentum matrix filters
short-term variations in the aggregate gradient and provides a
more stable matrix signal for constructing the subsequent
orthonormalized direction. 

\paragraph{Polar-Factor Approximation.}
Given the momentum matrix $\boldsymbol{M}_t$, MOON uses its
polar factor as the matrix-aware update direction:
\begin{equation}
\label{eq:momentum_polar}
    \boldsymbol{W}_t
    =
    \operatorname{Polar}(\boldsymbol{M}_t).
\end{equation}
Since
computing the exact polar factor through singular value
decomposition can be expensive, we use a finite number of
Newton--Schulz iterations to approximate
$\operatorname{Polar}(\boldsymbol{M}_t)$ similar to previous works
\cite{jordan2024muon,bernstein2024old,bjorck1971iterative,
kovarik1970some}. By convention, we define $\operatorname{Polar}(\boldsymbol{0})=\boldsymbol{0}$, so that $\boldsymbol{W}_t=\boldsymbol{0}$ whenever $\boldsymbol{M}_t=\boldsymbol{0}$.

\paragraph{Online Approximation of the Dual Weights.} Solving the nuclear-norm dual problem ~\eqref{eq:moon_dual} would introduce an expensive inner optimization loop. We instead use a single online update to track its solution, following the approximation strategy of FAMO \citep{liu2024famo}. We use the following momentum-based, scale-normalized surrogate direction for tracking the dual weights.:
\begin{equation} \label{eq:practical_dual_direction} \boldsymbol{\delta}_t = \left[ \left\langle \nabla\ell^1(\boldsymbol{\Theta}_t), \boldsymbol{W}_t \right\rangle, \ldots, \left\langle \nabla\ell^m(\boldsymbol{\Theta}_t), \boldsymbol{W}_t \right\rangle \right]^\top. 
\end{equation} 
To maintain the simplex constraint, we parameterize the task weights using logits and update 
\begin{equation} \label{eq:logit_update} 
\boldsymbol{\xi}_{t+1} = \boldsymbol{\xi}_t - \beta \left( \boldsymbol{\delta}_t + \gamma\boldsymbol{\xi}_t \right), \boldsymbol{z}_{t+1} = \operatorname{Softmax} \left( \boldsymbol{\xi}_{t+1} \right), 
\end{equation} 
where $\beta$ is the weight update stepsize, and $\gamma$ regularizes the logits. It provides an efficient momentum-based approximation to the exact dual solution while ensuring $\boldsymbol{z}_{t+1}\in\Delta_m$. 

\begin{algorithm}[t]
   \caption{Multi-Objective OrthoNormalized updates}
   \label{alg}
\begin{algorithmic}[1]
   \STATE \label{moon:1} {\bfseries Input:} Initial parameter $\boldsymbol{\Theta}_1, \boldsymbol{M}_0 \leftarrow 0$, loss functions $\{\ell^{i}\}_{i=1}^{m}$, learning rate $\alpha$, $\beta$, momentum $0<\mu\leq1$, weight decay of logits update $\gamma$, $\boldsymbol{\xi}_1\leftarrow \boldsymbol{0}$.
   \FOR{$t=1$ {\bfseries to} $T$} \label{moon:2}
   \STATE \label{moon:3} Normalize weight $\boldsymbol{z}_t\leftarrow \text{Softmax}(\boldsymbol{\xi}_t)$
   \STATE \label{moon:4} Compute composite gradient: $$\boldsymbol{G}_t = \sum_{i=1}^{m}z_{i,t}\nabla \ell^{i}(\boldsymbol{\Theta}_t)$$
   \STATE \label{moon:momemtum} Compute momentum: $\boldsymbol{M}_t = (1-\mu)\boldsymbol{M}_{t-1}+\mu \boldsymbol{G}_t$
   \STATE \label{moon:5} Perform orthonormalization on momentum by: $$\boldsymbol{W}_t = \boldsymbol{U}_t \boldsymbol{V}_t^{\top}\text{ (approximated by Newton-Schulz)}$$
   \STATE \label{moon:6} Update model parameter $\boldsymbol{\Theta}_{t+1} = \boldsymbol{\Theta}_t - \alpha\boldsymbol{W}_t$
   \STATE \label{moon:7}Update weight logits by: $$\boldsymbol{\xi}_{t+1} = \boldsymbol{\xi}_{t} - \beta(\boldsymbol{\delta}_{t} + \gamma\boldsymbol{\xi}_{t})$$ where $\boldsymbol{\delta}_{t} = [\langle \boldsymbol{W}_t, \nabla\ell^{1}(\boldsymbol{\Theta}_t)\rangle,\dots,\langle \boldsymbol{W}_t, \nabla\ell^{m}(\boldsymbol{\Theta}_t)\rangle]^\top$
   \ENDFOR
\end{algorithmic}
\label{alg:moon}
\end{algorithm}

\subsection{Convergence Analysis}
We analyze the convergence of MOON for smooth non-convex objectives. For any $\boldsymbol{\Theta}$, define the nuclear-norm Pareto-stationarity measure 
\begin{equation} \label{eq:pareto_stationarity_measure} 
\mathcal{G}(\boldsymbol{\Theta}) := \min_{\boldsymbol{z}\in\Delta_m} \left\| \sum_{i=1}^{m} z_i\nabla\ell^i(\boldsymbol{\Theta}) \right\|_{S_1}. 
\end{equation} 
In the unconstrained setting, $\mathcal{G}(\boldsymbol{\Theta})=0$ if and only if $\boldsymbol{\Theta}$ is Pareto stationary. We therefore study the averaged measure $\frac{1}{T}\sum_{t=1}^{T}\mathcal{G}(\boldsymbol{\Theta}_t)$. We first consider the deterministic setting.

\begin{theorem}[Deterministic Convergence]
\label{thm:deterministic_convergence}
Suppose that each objective
$\ell^i:\mathbb{R}^{p\times q}\rightarrow\mathbb{R}$ is
$L$-smooth with respect to the spectral norm. Assume that there exist constants $H>0$ and $B>0$,
independent of both $T$ and $t$, such that $\max_{t\in [T+1]}\max_{i\in[m]}\|\nabla\ell^i(\boldsymbol{\Theta}_t)\|_{S_1}\le H$ and $\max_{t\in [T+1]}\max_{i\in[m]}|\ell^i(\boldsymbol{\Theta}_t)|\le B$. Choose $\alpha=\sqrt{\frac{B}{LT}}$, $\beta=\frac{1}{mHT}$ and $0<\gamma<mHT$. 
Then, for every fixed $\mu\in(0,1]$, the iterates generated by
Algorithm~\ref{alg:moon} satisfy
\[
\frac{1}{T}
\sum_{t=1}^{T}
\min_{\boldsymbol{z}_t^\ast\in\Delta_m}
\left\|
\nabla L(\boldsymbol{\Theta}_t)\boldsymbol{z}_t^\ast
\right\|_{S_1}
\le
O(T^{-1/2}). 
\]
\end{theorem}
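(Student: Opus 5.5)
The plan is to combine a one-step descent inequality for a weighted potential with an online-learning bound for the logit update on $\boldsymbol{\xi}_t$. Polar-factor duality then converts the descent term into a nuclear norm.

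\textbf{Step 1 (per-step descent).} Since $\|\boldsymbol{W}_t\|_{S_\infty}\le 1$, $L$-smoothness in the spectral norm \eqref{eq:upperbound} gives, for every task $i$,
\[
\ell^i(\boldsymbol{\Theta}_{t+1})\le \ell^i(\boldsymbol{\Theta}_t)-\alpha\,\delta_{i,t}+\tfrac{L\alpha^2}{2},
\qquad \delta_{i,t}=\langle\nabla\ell^i(\boldsymbol{\Theta}_t),\boldsymbol{W}_t\rangle .
\]
Weighting by $z_{i,t}$ gives $\boldsymbol{z}_t^\top\boldsymbol{\delta}_t=\langle \boldsymbol{G}_t,\boldsymbol{W}_t\rangle$.

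\textbf{Step 2 (lower-bound the inner product).} For $\mu=1$ we have $\boldsymbol{W}_t=\operatorname{Polar}(\boldsymbol{G}_t)$, so $\langle\boldsymbol{G}_t,\boldsymbol{W}_t\rangle=\|\boldsymbol{G}_t\|_{S_1}\ge\mathcal{G}(\boldsymbol{\Theta}_t)$. For general $\mu$ I would use $\langle\boldsymbol{M}_t,\boldsymbol{W}_t\rangle=\|\boldsymbol{M}_t\|_{S_1}$ and write
\[
\langle\boldsymbol{G}_t,\boldsymbol{W}_t\rangle\ge\|\boldsymbol{G}_t\|_{S_1}-2\|\boldsymbol{G}_t-\boldsymbol{M}_t\|_{S_1}.
\]
Unrolling $\boldsymbol{M}_t-\boldsymbol{G}_t=(1-\mu)(\boldsymbol{M}_{t-1}-\boldsymbol{G}_{t-1})-(1-\mu)(\boldsymbol{G}_t-\boldsymbol{G}_{t-1})$ bounds the error by $\frac{1-\mu}{\mu}\max_s\|\boldsymbol{G}_s-\boldsymbol{G}_{s-1}\|_{S_1}$ plus a geometrically decaying initial term. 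Each difference $\boldsymbol{G}_s-\boldsymbol{G}_{s-1}$ has two sources.

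\emph{Weight change.} $\|\boldsymbol{z}_s-\boldsymbol{z}_{s-1}\|_1\,H\le 2\beta(|\delta|+\gamma|\xi|)H$. The choice $\beta=1/(mHT)$ and $\gamma<mHT$ make this $O(1/T)$, with $\|\boldsymbol{\xi}\|$ kept bounded by the contraction $1-\beta\gamma\in(0,1)$.

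\emph{Parameter change.} Smoothness gives $\|\nabla\ell^i(\boldsymbol{\Theta}_s)-\nabla\ell^i(\boldsymbol{\Theta}_{s-1})\|_{S_1}\le \min(p,q)\,L\alpha$, which is $O(T^{-1/2})$ for fixed dimensions.

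So for fixed $\mu$ the momentum error is $O(T^{-1/2})$, with constants depending on $\mu$.

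\textbf{Step 3 (telescoping with weights).} I would work with the potential $\Phi_t=\boldsymbol{z}_t^\top\boldsymbol{L}(\boldsymbol{\Theta}_t)$. The change from moving the weights is $(\boldsymbol{z}_{t+1}-\boldsymbol{z}_t)^\top\boldsymbol{L}(\boldsymbol{\Theta}_{t+1})$, bounded by $B\|\boldsymbol{z}_{t+1}-\boldsymbol{z}_t\|_1=O(B\beta(H+\gamma\|\xi\|))$. Summed over $T$ steps this is $O(1)$ because $\beta\propto 1/T$. Telescoping Step 1 with $|\Phi|\le B$ then gives
\[
\alpha\sum_t\|\boldsymbol{G}_t\|_{S_1}\le 2B+O(1)+\tfrac{TL\alpha^2}{2}+2\alpha\sum_t\|\boldsymbol{G}_t-\boldsymbol{M}_t\|_{S_1}.
\]
Dividing by $\alpha T$ with $\alpha=\sqrt{B/(LT)}$ yields $\frac1T\sum_t\mathcal{G}(\boldsymbol{\Theta}_t)\le\frac1T\sum_t\|\boldsymbol{G}_t\|_{S_1}=O(\sqrt{BL/T})+O(T^{-1/2})$, using $\mathcal{G}(\boldsymbol{\Theta}_t)\le\|\boldsymbol{G}_t\|_{S_1}$ since $\boldsymbol{z}_t\in\Delta_m$.

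\textbf{Main obstacle.} The hard part is Step 2 for $\mu<1$. The momentum mixes gradients taken at different iterates and under different weights, and orthonormalization discards singular-value magnitudes, so the error must be handled through the duality inequality $|\langle\boldsymbol{A},\boldsymbol{W}\rangle|\le\|\boldsymbol{A}\|_{S_1}$ with $\|\boldsymbol{W}\|_{S_\infty}\le1$.

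The drift term also carries a dimension factor, because spectral-norm smoothness controls gradient differences only in $S_1$ up to a factor of $\min(p,q)$. If the $\mu$-dependent constants fail to stay $O(T^{-1/2})$, my fallback is a Lyapunov function $\Phi_t+c\|\boldsymbol{M}_t-\boldsymbol{G}_t\|_{S_1}$ to absorb the drift.
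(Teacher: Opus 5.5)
Your proposal is correct and follows essentially the same route as the paper. Both arguments use the $\boldsymbol{z}_t$-weighted descent inequality and the polar-duality bound $\langle\boldsymbol{G}_t,\boldsymbol{W}_t\rangle\ge\|\boldsymbol{G}_t\|_{S_1}-2\|\boldsymbol{G}_t-\boldsymbol{M}_t\|_{S_1}$. They unroll the momentum error into parameter drift and softmax-controlled weight drift, with logits bounded via the $1-\beta\gamma$ contraction. They then telescope $\boldsymbol{z}_t^\top\boldsymbol{L}(\boldsymbol{\Theta}_t)$ with an $O(\beta)$ weight-change correction and finish with $\mathcal{G}(\boldsymbol{\Theta}_t)\le\|\boldsymbol{G}_t\|_{S_1}$.

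Your one inaccuracy is harmless. The paper's appendix assumption defines spectral-norm smoothness as $\|\nabla\ell^i(\boldsymbol{\Theta})-\nabla\ell^i(\boldsymbol{\Theta}')\|_{S_1}\le L\|\boldsymbol{\Theta}-\boldsymbol{\Theta}'\|_{S_\infty}$, which is Lipschitz into the dual norm. So the parameter drift is simply $L\alpha$ with no $\min\{p,q\}$ factor, the bound is dimension-free, and your Lyapunov fallback is not needed.
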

The detailed proof is provided in Appendix \ref{app:deterministic proof}. Consequently, we demonstrate that our algorithm achieves a convergence rate of $O(1/\sqrt{T})$, showing that MOON converges to Pareto stationarity at a standard sublinear rate for smooth non-convex optimization. We next consider unbiased stochastic task gradients. 

\begin{theorem}[Stochastic Convergence]
    Under the same assumption as Theorem~\ref{thm:deterministic_convergence}. We assume that the stochastic gradient estimator is unbiased and has variance bounded by $\sigma$. Choosing $\alpha = \sqrt{\frac{\mu B}{TL}}$, $\beta = \frac{1}{(H+\sqrt{m}\rho\sigma)mT}$, $0<\gamma<(H+\sqrt{m}\rho\sigma)mT$ and $\mu = \min\{ \frac{\sqrt{LB}}{\sigma\rho\sqrt{T}},1\}$, where $\rho := \sqrt{\min \{p,q\}}$, then the iterations of Algorithm \ref{alg:moon} satisfy 
    \begin{equation}
            \frac{1}{T}\sum_{t=1}^T \mathbb{E}\left[\min_{\boldsymbol{z}_t^\ast \in \Delta_m}\|\nabla L(\boldsymbol{\Theta_t})\boldsymbol{z}_t^\ast \|_{S_1}\right] \leq \mathcal{O}(1/T^{1/4}). 
    \end{equation}
\end{theorem}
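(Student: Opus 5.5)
We follow the template of the deterministic proof (Theorem~\ref{thm:deterministic_convergence}) and add two error terms: the momentum tracking error and the dual-weight tracking error. Write $\boldsymbol{G}_t=\sum_i z_{i,t}\nabla\ell^i(\boldsymbol{\Theta}_t)$ for the true aggregate and $\tilde{\boldsymbol{G}}_t$ for its stochastic counterpart used in the algorithm. Define the momentum error $\boldsymbol{E}_t:=\boldsymbol{M}_t-\boldsymbol{G}_t$. Since $\|\boldsymbol{W}_t\|_{S_\infty}\le 1$, spectral smoothness and spectral--nuclear duality give, for every $i$,
\begin{equation*}
\ell^i(\boldsymbol{\Theta}_{t+1})\le \ell^i(\boldsymbol{\Theta}_t)-\alpha\langle\nabla\ell^i(\boldsymbol{\Theta}_t),\boldsymbol{W}_t\rangle+\tfrac{L\alpha^2}{2}.
\end{equation*}
Weighting by $z_{i,t}$ and using $\langle \boldsymbol{M}_t,\boldsymbol{W}_t\rangle=\|\boldsymbol{M}_t\|_{S_1}$, we obtain
\begin{equation*}
\langle \boldsymbol{G}_t,\boldsymbol{W}_t\rangle\ge\|\boldsymbol{M}_t\|_{S_1}-\|\boldsymbol{E}_t\|_{S_1}\ge \|\boldsymbol{G}_t\|_{S_1}-2\|\boldsymbol{E}_t\|_{S_1}.
\end{equation*}
Hence
\begin{equation*}
\alpha\|\boldsymbol{G}_t\|_{S_1}\le \boldsymbol{z}_t^\top\bigl(\boldsymbol{L}(\boldsymbol{\Theta}_t)-\boldsymbol{L}(\boldsymbol{\Theta}_{t+1})\bigr)+2\alpha\|\boldsymbol{E}_t\|_{S_1}+\tfrac{L\alpha^2}{2}.
\end{equation*}
Since $\mathcal{G}(\boldsymbol{\Theta}_t)\le\|\boldsymbol{G}_t\|_{S_1}$, it suffices to bound the average of the right-hand side divided by $\alpha$.

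\textbf{Telescoping with moving weights.} The sum $\sum_t \boldsymbol{z}_t^\top(\boldsymbol{L}(\boldsymbol{\Theta}_t)-\boldsymbol{L}(\boldsymbol{\Theta}_{t+1}))$ equals the telescoped term, at most $2B$, plus the drift $\sum_t(\boldsymbol{z}_{t+1}-\boldsymbol{z}_t)^\top\boldsymbol{L}(\boldsymbol{\Theta}_{t+1})$. The logit step is bounded as
\begin{equation*}
\|\boldsymbol{\xi}_{t+1}-\boldsymbol{\xi}_t\|_\infty\le\beta\bigl(\|\tilde{\boldsymbol{\delta}}_t\|_\infty+\gamma\|\boldsymbol{\xi}_t\|_\infty\bigr).
\end{equation*}
Here $|\langle\tilde\nabla\ell^i,\boldsymbol{W}_t\rangle|\le\|\tilde\nabla\ell^i\|_{S_1}\le H+\rho\|\text{noise}\|_F$, and in expectation this is $\le H+\sqrt{m}\rho\sigma$ after a crude Cauchy--Schwarz over tasks. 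The constraint $\gamma\beta<1$ keeps $\|\boldsymbol{\xi}_t\|_\infty$ bounded inductively by $O(1/\gamma)\cdot(H+\sqrt m\rho\sigma)$. Softmax is $1$-Lipschitz from $\ell_\infty$ to $\ell_1$ up to a factor $2$. With $\beta=1/((H+\sqrt m\rho\sigma)mT)$, the total drift is therefore $O(B)$. This gives $\frac1T\sum_t\mathbb{E}\,\mathcal{G}(\boldsymbol{\Theta}_t)\le O\bigl(\tfrac{B}{\alpha T}\bigr)+\tfrac{L\alpha}{2}+\tfrac{2}{T}\sum_t\mathbb{E}\|\boldsymbol{E}_t\|_{S_1}$.

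\textbf{Momentum error (main obstacle).} Unroll the recursion:
\begin{equation*}
\boldsymbol{E}_t=(1-\mu)\boldsymbol{E}_{t-1}+(1-\mu)(\boldsymbol{G}_{t-1}-\boldsymbol{G}_t)+\mu(\tilde{\boldsymbol{G}}_t-\boldsymbol{G}_t).
\end{equation*}
The change $\boldsymbol{G}_{t-1}-\boldsymbol{G}_t$ has two parts.
\begin{itemize}
\item A parameter-drift part, bounded by $L\|\boldsymbol{\Theta}_t-\boldsymbol{\Theta}_{t-1}\|_{S_\infty}=L\alpha$. This needs smoothness read in the dual norm, $\|\nabla\ell(\boldsymbol{\Theta})-\nabla\ell(\boldsymbol{\Theta}')\|_{S_1}\le L\|\boldsymbol{\Theta}-\boldsymbol{\Theta}'\|_{S_\infty}$, which I take as the meaning of spectral $L$-smoothness.
\item A weight-drift part, bounded by $H\|\boldsymbol{z}_t-\boldsymbol{z}_{t-1}\|_1=O(1/T)$.
\end{itemize}
Summing the geometric series, this deterministic part contributes $O\bigl((L\alpha+H/T)/\mu\bigr)$. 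For the noise part, the terms $\mu\sum_k(1-\mu)^{t-k}\boldsymbol{\zeta}_k$ form a martingale-difference sum. The main difficulty is that $\boldsymbol{z}_t$ depends on past noise, but it is measurable before step $t$, so conditional unbiasedness survives. I bound it in Frobenius norm, $\mathbb{E}\|\cdot\|_F\le\mu\sigma/\sqrt{1-(1-\mu)^2}\le\sigma\sqrt{\mu}$, and convert to nuclear norm with the factor $\rho$. The result is $\mathbb{E}\|\boldsymbol{E}_t\|_{S_1}\lesssim (1-\mu)^t\|\boldsymbol{G}_1\|_{S_1}+L\alpha/\mu+H/(\mu T)+\rho\sigma\sqrt{\mu}$. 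The initial transient averages to $O(H/(\mu T))$.

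\textbf{Balancing.} The final bound is
\begin{equation*}
\frac{B}{\alpha T}+L\alpha+\frac{L\alpha}{\mu}+\rho\sigma\sqrt\mu+\frac{H}{\mu T}.
\end{equation*}
With $\alpha=\sqrt{\mu B/(TL)}$, the first and third terms are both $\sqrt{LB/(\mu T)}$. With $\mu=\sqrt{LB}/(\sigma\rho\sqrt T)$, the terms $\sqrt{LB/(\mu T)}$ and $\rho\sigma\sqrt\mu$ each equal $(LB)^{1/4}(\rho\sigma)^{1/2}T^{-1/4}$. The remaining terms $L\alpha$ and $H/(\mu T)=O(T^{-1/2})$ are lower order. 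If instead $\mu=1$, meaning the noise is small, the bound reduces to the deterministic $O(T^{-1/2})$ rate. In both cases we obtain $O(T^{-1/4})$.
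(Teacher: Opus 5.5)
Your proposal is correct and follows essentially the same route as the paper. Both arguments rest on the same three pieces:
\begin{itemize}
\item the descent step via $\langle \boldsymbol{G}_t,\boldsymbol{W}_t\rangle\ge\|\boldsymbol{G}_t\|_{S_1}-2\|\boldsymbol{M}_t-\boldsymbol{G}_t\|_{S_1}$;
\item telescoping with the weight-drift term controlled in expectation through the logit step $\beta$;
\item a momentum-error bound split into the initial transient, $L\alpha/\mu$, an $O(\beta)/\mu$ weight-drift term, and the martingale noise term $\sqrt{\mu}\,\rho\sigma$.
\end{itemize}
The final balancing of $\alpha$ and $\mu$ is also the same. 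The only differences are cosmetic: you measure the weight change in $\ell_1$ via softmax's $\ell_\infty\to\ell_1$ Lipschitz bound, where the paper uses $m\|\cdot\|_\infty$, and you explicitly check the $\mu=1$ regime, which the paper's corollary leaves implicit.
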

The detailed proof is provided in Appendix \ref{section:algo_stochastic_case}. The expected Pareto-stationarity measure of MOON satisfies $\mathcal{O}(T^{-1/4})$ bound. This slower rate reflects the additional error introduced by gradient noise and its interaction with momentum, and is also standard under stochastic gradients.

\section{Related Work}

\paragraph{Multi-objective Optimization.}
Multi-objective optimization (MOO) optimizes multiple tasks simultaneously, with the central challenge of resolving conflicts among task gradients \citep{sener2018multi, yu2020gradient, liu2021conflict, navon2022multi, liu2024famo}. Methods most relevant to our work follow two main lines. Loss balancing approaches \citep{kendall2018multi, liu2019end, lin2021closer} adjust objective weights using loss statistics, optimizing an explicit weighted objective whose update is a linear combination of task-specific updates. These methods are often motivated by empirical heuristics and have limited theoretical support. Gradient manipulation methods \citep{sener2018multi, yu2020gradient, liu2021conflict, liu2021towards, navon2022multi, fernando2023mitigating, liu2024famo} instead modify task gradients to obtain a common update direction that balances optimization across tasks. They generally provide stronger theoretical guarantees and often achieve better performance than loss balancing. However, existing methods typically flatten all parameters into a single vector, preventing them from attaining the steepest descent direction in the native non-Euclidean geometry of structured parameters. Our method preserves the matrix structure of the parameters and performs gradient manipulation directly in this geometry, improving both training efficiency and performance.

\paragraph{Optimization for matrix-valued parameters.}
Modern architectures such as Transformers \citep{vaswani2017attention} are dominated by matrix-valued parameters, motivating optimizers that exploit this structure. K-FAC approximates layerwise Fisher blocks using Kronecker factors \citep{martens2015optimizing}; Adafactor factorizes row- and column-wise second moments to reduce memory usage \citep{shazeer2018adafactor}; and Shampoo uses tensor-structured preconditioners to improve conditioning and accelerate convergence \citep{gupta2018shampoo}. Muon instead approximately orthogonalizes momentum matrices for hidden-layer weights, producing efficient matrix-aware updates \citep{jordan2024muon}. Its theoretical properties have been studied from complementary perspectives: Li and Hong establish convergence guarantees \citep{li2025note}, while Kovalev interprets gradient orthogonalization as non-Euclidean trust-region optimization \citep{kovalev2025understanding}.

Subsequent work improves the efficiency, stability, and scalability of Muon. MuonBP reduces distributed communication through block-periodic orthogonalization \citep{khaled2025muonbp}, while Polar Express develops GPU-friendly matrix-sign iterations for efficient polar-factor computation \citep{amsel2025polar}. MuonClip improves large-model training stability by controlling attention-logit growth through QK clipping \citep{team2025kimi}, and large-scale studies further demonstrate Muon's practical efficiency and scalability in language-model pretraining \citep{shah2025practical, liu2025muon}. Despite these advances, such methods optimize a single aggregate objective and do not directly address conflicts among task objectives. MOON extends matrix-aware optimization to the multi-objective setting by combining task gradients according to the geometry of matrix-valued parameters and applying orthonormalized updates. This preserves parameter structure while resolving task conflicts, improving training efficiency and final performance in our experiments.

\section{Experiments}

In this section, we conduct experiments with MOON on several benchmarks to evaluate the performance and convergence. Our experiments cover simulation, multi-task scene understanding, classification, and regression. 

\paragraph{Experimental Setup.} We consider 5 benchmarks widely used in multi-task learning research: MultiMNIST (2 tasks) \citep{sabour2017dynamic}, NYU-v2 (3 tasks) \citep{silberman2012indoor}, CityScapes (2 tasks) \citep{Cordts2016Cityscapes}, QM9 (11 tasks) \citep{blum2009970} and CelebA (40 tasks) \citep{liu2015deep}. We compare the proposed method with 12 multi-objective optimization baselines: Single-task learning (STL), Linear Scalarization (LS), a scale-invariant baseline (SI) that applies equal weighting to the log losses, Random Loss Weighting (RLW) \citep{lin2021closer}, Dynamic Weight Average (DWA) \citep{liu2019end}, Uncertainty Weighting (UW) \citep{kendall2018multi}, MGDA \citep{sener2018multi}, PCGrad \citep{yu2020gradient}, CAGrad \citep{liu2021conflict}, IMTL-G \citep{liu2021towards}, Nash-MTL \citep{navon2022multi}, FAMO \citep{liu2024famo}. We discuss each experiment in the following subsections.\footnote{For more details about these experiments, see Appendix \ref{section:experimental_details}.}

\begin{table}[!t]
\begingroup
\setcounter{table}{1}
\renewcommand{\theHtable}{nyuv2}
\centering
\begin{small}
\begin{sc}
\setlength{\tabcolsep}{4pt}

\resizebox{\textwidth}{!}{%
\begin{tabular}{lcc cc cc ccc c}
\toprule
\multirow{3}{*}{\textbf{Method}} &
\multicolumn{2}{c}{\textbf{Segmentation}} &
\multicolumn{2}{c}{\textbf{Depth}} &
\multicolumn{5}{c}{\textbf{Surface Normal}} &
\multirow{3}{*}{$\boldsymbol{\Delta m}\%\,\downarrow$} \\
\cmidrule(lr){2-3}\cmidrule(lr){4-5}\cmidrule(lr){6-10}
& mIoU$\uparrow$ & Pix Acc$\uparrow$ &
Abs Err$\downarrow$ & Rel Err$\downarrow$ &
\multicolumn{2}{c}{Angle Dist$\downarrow$} &
\multicolumn{3}{c}{Within $t^\circ\uparrow$} & \\
\cmidrule(lr){6-7}\cmidrule(lr){8-10}
& & & & &
Mean & Median & 11.25 & 22.5 & 30 & \\
\midrule

STL      & 38.30 & 63.76 & 0.6754 & 0.2780 & 25.01 & 19.21 & 30.14 & 57.20 & 69.15 & \\
\cmidrule(lr){1-11}
LS       & 39.29 & 65.33 & 0.5493 & 0.2263 & 28.15 & 23.96 & 22.09 & 47.50 & 61.08 & 5.59 \\
SI       & 38.45 & 64.27 & 0.5354 & 0.2201 & 27.60 & 23.37 & 22.53 & 48.57 & 62.32 & 4.39 \\
RLW      & 37.17 & 63.77 & 0.5759 & 0.2410 & 28.27 & 24.18 & 22.26 & 47.05 & 60.62 & 7.78 \\
DWA      & 39.11 & 65.31 & 0.5510 & 0.2285 & 27.61 & 23.18 & 24.17 & 50.18 & 62.39 & 3.57 \\
UW       & 36.87 & 63.17 & 0.5446 & 0.2260 & 27.04 & 22.61 & 23.54 & 49.05 & 63.65 & 4.05 \\
MGDA     & 28.40 & 58.24 & 0.5979 & 0.2335 & \textbf{25.00} & 19.49 & 29.56 & 56.84 & 69.11 & 1.23 \\
PCGRAD   & 39.22 & 65.98 & 0.5373 & 0.2273 & 27.27 & 23.08 & 23.17 & 49.23 & 62.98 & 3.40 \\
GradDrop & 39.39 & 65.12 & 0.5455 & 0.2279 & 27.48 & 22.96 & 23.38 & 49.44 & 62.87 & 3.58 \\
CAGrad   & 39.00 & 65.77 & 0.5561 & 0.2135 & 26.30 & 21.53 & 25.86 & 52.47 & 65.65 & -0.12 \\
IMTL-G   & 39.35 & 65.60 & 0.5426 & 0.2256 & 26.02 & 21.19 & 26.20 & 53.13 & 66.24 & -0.76 \\
NashMTL  & \textbf{40.13} & 65.93 & 0.5261 & 0.2171 & 25.26 & 20.08 & 28.40 & 55.47 & 68.15 & -4.04 \\
FAMO     & 36.41 & 63.20 & 0.5406 & 0.2182 & 25.01 & \textbf{19.14} & \textbf{30.22} & \textbf{57.45} & \textbf{69.37} & -4.11 \\
\cmidrule(lr){1-11}
MOON (Ours)     & 39.41 & \textbf{67.03} & \textbf{0.4891} & \textbf{0.2091} & 25.61 & 20.27 & 28.89 & 54.87 & 67.32 & \textbf{-4.63} \\
\bottomrule
\end{tabular}
}

\end{sc}
\end{small}
\caption{Results on NYU-v2 (3 tasks) dataset. Each experiment is repeated over 3 random seeds and the mean is reported. The best average result is marked in bold. The task specific metrics and the average performance drop $\boldsymbol{\Delta m}\%$ are reported.}

\label{tab:nyuv2}
\endgroup
\setcounter{table}{0}
\end{table}

\subsection{A Toy Case}
\label{subsec:toycase}

To better understand why we cannot directly apply structure-aware optimizers such as Muon \citep{jordan2024muon} to existing MOO methods, we design a toy experiment to provide an intuitive illustration.

In this experiment, we construct a 2-task optimization problem based on a multi-objective least-squares formulation. Given two 6-dimensional vectors $\boldsymbol{x}$ and $\boldsymbol{y}$, we define two objectives $L_{1} = \frac{1}{6}\|\boldsymbol{\Theta}\boldsymbol{x}-\boldsymbol{y}\|_{2}^{2}$ and $L_{2} = \frac{1}{6}\|\boldsymbol{\Theta}\boldsymbol{x}+\boldsymbol{y}\|_{2}^{2}$ over a matrix parameter $\boldsymbol{\Theta}\in\mathbb{R}^{6\times 6}$. We optimize $\boldsymbol{\Theta}$ using three methods: (i) MGDA \citep{sener2018multi}, (ii) MGDA with Muon optimizer \citep{jordan2024muon} applied directly, and (iii) MOON. Then we plot the average of the two losses during the 1500-step optimization in Figure \ref{fig:toy1_avgloss}.

Figure \ref{fig:toy1_avgloss} shows the average loss of the two tasks over training steps in this toy experiment. MOON converges faster and reaches a lower final average loss, indicating a more balanced optimization on matrix-valued parameters. In contrast, directly applying Muon to MGDA fails to achieve the same improvement.

\begin{figure}[t]
\begin{center}
\centering
\includegraphics[width=0.65\columnwidth]{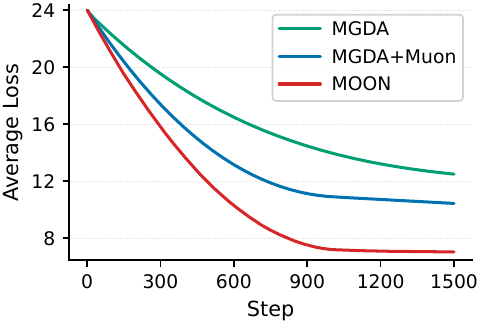}
\caption{Average loss curve on the toy case.}
\Description{Line chart comparing MGDA, MGDA combined with Muon, and MOON over 1500 optimization steps. MOON decreases the average loss fastest and reaches the lowest final value.}
\label{fig:toy1_avgloss}
\end{center}
\end{figure}

\subsection{Multi-task Classification}

We evaluate our method on MultiMNIST \citep{sabour2017dynamic} with Vision Transformer (ViT) \citep{dosovitskiy2020image} backbone to assess convergence and performance under matrix-structured parameters. The ViT contains matrix-valued weights in the Transformer encoder (attention projections and MLP linear layers). ViT is composed of $76.88\%$ matrix parameters. We treat classifying the left and right digits as two tasks, reporting per-task cross-entropy training loss for convergence and test accuracy for performance. An additional classification experiment is conducted on CelebA, adopting a CNN backbone with $99.46\%$ learnable convolutional weights treated as matrix parameters. In our experiments, we follow the treatment of convolutional weights in Muon \citep{jordan2024muon}, converting 4D convolutional parameters into matrices by flattening the last three dimensions. It evaluates training efficiency and performance with a large number of objectives (40 tasks).

In the MultiMNIST experiment, we observe from Figure \ref{fig:multimnist_training_loss} that our method reduces the training loss for both tasks more rapidly in the early stage and maintains a consistent advantage throughout training. Table \ref{tab:multimnist} reports the test accuracies for the two classification tasks and their average. MOON achieves the best performance on both tasks and the highest average accuracy among all compared MOO methods. The right-hand side of Table \ref{tab:cityscapes} shows that MOON achieves performance comparable to state-of-the-art methods on the 40-task CelebA benchmark, further supporting the results.

\begin{figure}[!t]
\centering
\begin{minipage}[t]{0.49\columnwidth}
\centering
\includegraphics[width=\linewidth]{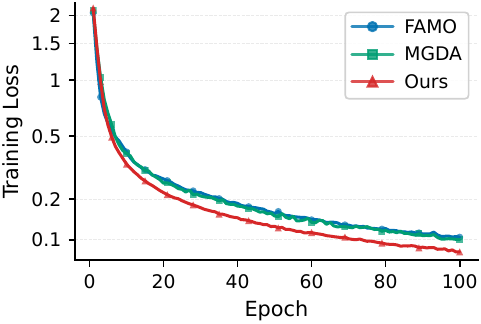}\par
\small (a) ViT left task
\end{minipage}\hfill
\begin{minipage}[t]{0.49\columnwidth}
\centering
\includegraphics[width=\linewidth]{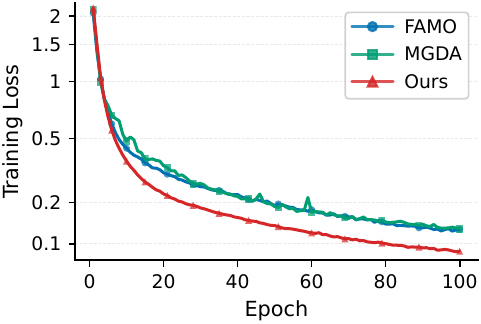}\par
\small (b) ViT right task
\end{minipage}
\caption{Convergence of per-task training cross-entropy losses vs. training epochs on MultiMNIST.}

\label{fig:multimnist_training_loss}
\end{figure}

\begin{table}[!t]
\centering
\begin{small}
\begin{sc}

\centering
\begin{tabular}{lccc}
\toprule
Method & Left Acc & Right Acc & Average Acc \\
\midrule
MGDA      & 95.58 & 94.10 & 94.84 \\
PCGrad    & 94.26 & 93.60 & 93.93 \\
CAGrad    & 95.65 & 94.72 & 95.19 \\
IMTL-G    & 94.57 & 94.43 & 94.50 \\
Nash-MTL  & 95.33 & 94.38 & 94.86 \\
FAMO      & 95.89 & 94.88 & 95.39 \\
MOON (Ours) & \textbf{95.99} & \textbf{95.31} & \textbf{95.65} \\
\bottomrule
\end{tabular}
\par\smallskip

\end{sc}
\end{small}
\caption{Test accuracy (\%) on MultiMNIST (2 tasks). Per-task accuracies (left/right) and their average are reported.}
\label{tab:multimnist}
\vspace{-3mm}
\end{table}
\setcounter{table}{2}

\begin{figure}[!t]
\centering
\begin{minipage}[t]{0.66\textwidth}
\vspace{0pt}
\centering
\begin{small}
\begin{sc}
\setlength{\tabcolsep}{3.2pt}
\resizebox{\linewidth}{!}{
\begin{tabular}{lcccccc}
\toprule
& \multicolumn{5}{c}{\textbf{CityScapes}} & \textbf{CelebA} \\
\cmidrule(lr){2-6}\cmidrule(lr){7-7}
\multirow{2}{*}{\textbf{Method}} & \multicolumn{2}{c}{Segmentation} & \multicolumn{2}{c}{Depth} & \multirow{2}{*}{$\boldsymbol{\Delta m}\%\,\downarrow$} & \multirow{2}{*}{$\boldsymbol{\Delta m}\%\,\downarrow$} \\
\cmidrule(lr){2-3}\cmidrule(lr){4-5}
& mIoU$\uparrow$ & Pix Acc$\uparrow$ & Abs Err$\downarrow$ & Rel Err$\downarrow$ & & \\
\midrule
STL & 74.01 & 93.16 & 0.0125 & 27.77 & & \\
\cmidrule{1-7}
LS & 70.95 & 91.73 & 0.0161 & 33.83 & 14.11 & 6.28 \\
RLW & 74.57 & 93.41 & 0.0158 & 47.79 & 24.38 & 5.22 \\
DWA & 75.24 & 93.52 & 0.0160 & 44.37 & 21.45 & 6.95 \\
UW & 72.02 & 92.85 & 0.0140 & 30.13 & 5.89 & 5.78 \\
MGDA & 69.60 & 91.87 & 0.0134 & 36.05 & 11.02 & 10.93 \\
PCGrad & 75.13 & 93.48 & 0.0154 & 42.07 & 18.29 & 6.65 \\
GradDrop & 75.27 & 93.53 & 0.0157 & 47.54 & 23.73 & 7.80 \\
CAGrad & 75.16 & 93.48 & 0.0141 & 37.60 & 11.64 & 6.20 \\
IMTL-G & 75.33 & 93.49 & 0.0135 & 38.41 & 11.10 & 4.67 \\
NashMTL & 75.41 & 93.66 & 0.0129 & 35.02 & 6.82 & 4.97 \\
FAMO & 72.48 & 92.56 & 0.0148 & \textbf{29.52} & 6.93 & 4.72 \\
\cmidrule{1-7}
MOON (Ours) & \textbf{78.61} & \textbf{94.36} & \textbf{0.0126} & 31.41 & \textbf{1.54} & \textbf{4.65} \\
\bottomrule
\end{tabular}
}
\end{sc}
\end{small}
\captionof{table}{Results on CityScapes (2 tasks) and CelebA (40 tasks) dataset. Each experiment is repeated over 3 random seeds and the mean is reported. The best average result is marked in bold. Task-specific metrics and the average performance drop $\boldsymbol{\Delta m}\%$ are reported.}
\label{tab:cityscapes}
\end{minipage}
\hfill
\begin{minipage}[t]{0.31\textwidth}
\vspace{0pt}
\centering
\includegraphics[width=0.90\linewidth]{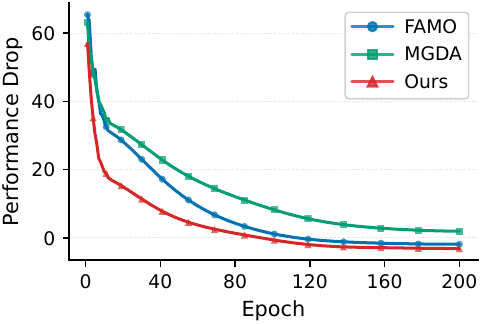}\par
\makebox[\linewidth][c]{\small (a) NYU-v2}\par
\vspace{1.5mm}
\includegraphics[width=0.90\linewidth]{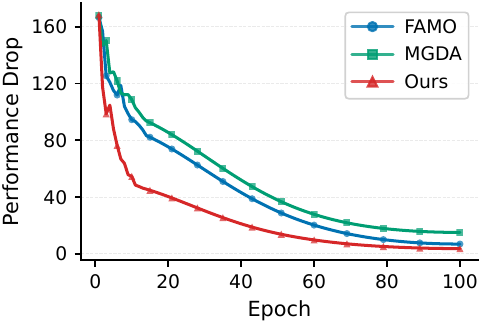}\par
\makebox[\linewidth][c]{\small (b) CityScapes}\par
\captionof{figure}{Performance drop $\boldsymbol{\Delta m}\%$ relative to STL vs. training epochs on NYU-v2 and CityScapes benchmarks.}
\label{fig:nyu_city_deltap}
\Description{Two line charts compare performance drop during training on NYU-v2 and CityScapes. MOON reduces the performance drop more rapidly and reaches lower values than MGDA and FAMO.}
\end{minipage}
\end{figure}

\begin{table}[!t]
\centering
\begin{small}
\begin{sc}
\setlength{\tabcolsep}{4pt}

\resizebox{\textwidth}{!}{%
\begin{tabular}{lcccccccccccc}
\toprule
\multirow{2}{*}{Method} &
$\mu$ & $\alpha$ & $\epsilon_{\mathrm{HOMO}}$ & $\epsilon_{\mathrm{LUMO}}$ &
$\langle R^2\rangle$ & ZPVE & $U_0$ & $U$ & $H$ & $G$ & $c_v$ &
$\boldsymbol{\Delta m}\%\downarrow$ \\
\cmidrule(lr){2-12}
& \multicolumn{11}{c}{MAE$\downarrow$} & \\
\midrule
STL     & 0.07 & 0.18 & 60.6  & 53.9  & 0.50 & 4.53  & 58.8  & 64.2  & 63.8  & 66.2  & 0.07 & \\
\midrule
LS      & 0.11 & 0.33 & 73.6  & 89.7  & 5.20 & 14.06 & 143.4 & 144.2 & 144.6 & 140.3 & 0.13 & 177.6 \\
SI      & 0.31 & 0.35 & 149.8 & 135.7 & \textbf{1.00} & \textbf{4.51}  & 55.3  & 55.8  & 55.8  & 55.3  & 0.11 & 77.8  \\
RLW     & 0.11 & 0.34 & 76.9  & 92.8  & 5.87 & 15.47 & 156.3 & 157.1 & 157.6 & 153.0 & 0.14 & 203.8 \\
DWA     & 0.11 & 0.33 & 74.1  & 90.6  & 5.09 & 13.99 & 142.3 & 143.0 & 143.4 & 139.3 & 0.13 & 175.3 \\
UW      & 0.39 & 0.43 & 166.2 & 155.8 & 1.07 & 4.99  & 66.4  & 66.8  & 66.8  & 66.2  & 0.12 & 108.0 \\
MGDA    & 0.22 & 0.37 & 126.8 & 104.6 & 3.23 & 5.69  & 88.4  & 89.4  & 89.3  & 88.0  & 0.12 & 120.5 \\
PCGrad  & 0.11 & 0.29 & 75.9  & 88.3  & 3.94 & 9.15  & 116.4 & 116.8 & 117.2 & 114.5 & 0.11 & 125.7 \\
CAGrad  & 0.12 & 0.32 & 83.5  & 94.8  & 3.22 & 6.93  & 114.0 & 114.3 & 114.5 & 112.3 & 0.12 & 112.8 \\
IMTL-G  & 0.14 & 0.29 & 98.3  & 93.9  & 1.75 & 5.70  & 101.4 & 102.4 & 102.0 & 100.1 & 0.10 & 77.2  \\
NashMTL & 0.10 & 0.25 & 82.9  & 81.9  & 2.43 & 5.38  & 74.5  & 75.0  & 75.1  & 74.2  & 0.09 & 62.0  \\
FAMO    & 0.16 & 0.28 & 85.0  & 87.0  & 1.81 & 5.13  & 68.2 & 68.5  & 68.7  & 67.8  & 0.09 & 57.3  \\
\midrule
MOON (Ours)    & \textbf{0.07} & \textbf{0.24} & \textbf{52.9}  & \textbf{71.0}  & 3.35 & 5.67  & \textbf{46.4} & \textbf{46.7}  & \textbf{46.9}  & \textbf{46.3}  & \textbf{0.08} & \textbf{49.9}  \\

\bottomrule
\end{tabular}
}

\end{sc}
\end{small}
\caption{Results on QM9 dataset (11 tasks). Each experiment is repeated over 3 random seeds and the mean is reported. The best average result is marked in bold. The task specific metrics and the average performance drop $\boldsymbol{\Delta m}\%$ are reported.}

\label{tab:qm9}
\end{table}

\subsection{Multi-task Scene Understanding}

We apply MOON to multi-task scene understanding and evaluate it on the NYU-v2 \citep{silberman2012indoor} and CityScapes \citep{Cordts2016Cityscapes} benchmarks. We adopt the SegNet-MTAN architecture \cite{badrinarayanan2017segnet, liu2019end}, a standard backbone for multi-task dense prediction. In this model, matrix-structured parameters or convolutional filters arise in the shared SegNet encoder-decoder as well as the MTAN attention, which account for $99.57\%$ of all parameters. We measure multi-objective optimization performance using the average performance drop $\boldsymbol{\Delta m}\%$ relative to the single-task learning (STL) baseline \citep{navon2022multi}. For each method $m$, the performance drop is calculated by $\boldsymbol{\Delta m}\% = \frac{1}{K} \sum_{i=1}^{K} (-1)^{\delta_{i}} (M_{i}^{m}-M_{i}^{STL})\cdot 100/M_{i}^{STL}$, where $M_{i}^{m}$ denotes the $i$-th metric achieved by method $m$, and $\delta_{i}=0$ if higher values are better for metric $i$; otherwise $\delta_{i}=1$. Table \ref{tab:nyuv2} and Table \ref{tab:cityscapes} summarize the results on NYU-v2 and CityScapes, respectively. Figure \ref{fig:nyu_city_deltap} demonstrates how the test average performance drop $\boldsymbol{\Delta m}\%$ decreases during training.

From Table \ref{tab:nyuv2} and Table \ref{tab:cityscapes}, we observe that our method achieves state-of-the-art performance on both benchmarks. In particular, on CityScapes, it attains the best results on most task metrics. Figure \ref{fig:nyu_city_deltap} shows that our method exhibits faster descent on the performance drop $\boldsymbol{\Delta m}\%$ than FAMO. This demonstrates that MOON not only converges faster but also achieves better results than traditional MOO methods, supporting our theoretical claims with practical evidence.

\subsection{Multi-task Regression}

We further evaluate MOON on multi-task regression using the QM9 molecular property prediction benchmark \citep{blum2009970}. Following standard practice, we treat predicting each of the 11 quantum-chemical properties as a separate task. We adopt a graph-level multi-task regression network, including matrix-valued parameters like edge-conditioned linear transformations in NNConv and the Set2Set internal LSTM weights \citep{gilmer2017neural}, which account for $98.91\%$ of all parameters. Details of the parameterization are provided in Appendix \ref{section:experimental_details}. In Table \ref{tab:qm9} we report the mean absolute error (MAE) for each property. To summarize multi-objective performance, we also report the average performance drop $\boldsymbol{\Delta m}\%$ relative to the STL baseline.

Table \ref{tab:qm9} shows that MOON achieves the best overall performance, attaining the lowest performance drop $\boldsymbol{\Delta m}\%$ and the lowest MAE on most individual tasks. The reason is that gradient manipulation conducted in non-Euclidean spaces is more suitable for addressing matrix gradient conflicts.

\section{Conclusion}

This work addresses the geometric mismatch between conventional Euclidean MOO methods and the matrix-valued parameters prevalent in modern architectures such as Transformers. We proposed MOON (Multi-Objective OrthoNormalized Updates), which performs gradient manipulation under spectral--nuclear norm geometry and constructs matrix-aware orthonormalized updates. At any fixed iterate, the exact MOON weighting provides a nuclear-norm-based stationarity certificate no larger than that obtained by Euclidean min-norm weighting, thereby establishing its optimality for the matrix-geometric dual subproblem. For smooth non-convex objectives, we established convergence of the averaged Pareto-stationarity measure at rates of $\mathcal{O}(T^{-1/2})$ in the deterministic setting and $\mathcal{O}(T^{-1/4})$ in the stochastic setting. Experiments demonstrate that MOON improves optimization efficiency while achieving competitive or improved final performance.

\section{Acknowledgment}
This work was supported in part by the Beijing Major Science and Technology Project under Contract no. Z251100008125031, National Science Foundation of China (62401327), and the MOE Project of
Key Research Institute of Humanities and Social Sciences (22JJD110001).  This work  was supported  by Beijing Academy of Artificial Intelligence (BAAI).

\bibliographystyle{plainnat}
\bibliography{references}

\clearpage
\appendix
\setcounter{secnumdepth}{2}

\begin{center}
{\LARGE\bfseries Appendix}
\end{center}

\allowdisplaybreaks[4]
\setlength{\emergencystretch}{2em}

\section{Missing Proofs}
\label{section:missiong_proofs}  

Throughout this appendix, we consider the idealized setting in which
$\boldsymbol W_t$ is the exact polar factor of the momentum matrix
$\boldsymbol M_t$, i.e.,
\[
    \boldsymbol W_t
    =
    \operatorname{Polar}(\boldsymbol M_t).
\]
The effect of replacing the exact polar factor with a finite-step
Newton--Schulz approximation is analyzed separately in
Appendix~\ref{app:NS_error}.

\subsection{Description for Assumption}

In this paper, we have the following assumptions. 

\begin{assumption}\label{ass:L-smoothness}
For each $i \in [m]$, the objective function $\ell^i:\mathbb{R}^{p \times q}  \rightarrow \mathbb{R}$ is differentiable and $L$-smooth with respect to the spectral norm $\| \cdot \|_{S_\infty}$, i.e., there exist a bounded constant $L>0$ such that for any $\boldsymbol{\Theta},\boldsymbol{\Theta}' \in \mathbb{R}^{p\times q}$, $\| \nabla \ell^i (\boldsymbol{\Theta}) - \nabla \ell^i (\boldsymbol{\Theta}') \|_{S_1} \leq L \| \boldsymbol{\Theta} - \boldsymbol{\Theta}' \|_{S_\infty}$.
\end{assumption}

\begin{assumption}\label{ass:bounded_gradient}
There exist a constant $H > 0$ such that for any $i\in [m]$ and $t=1,\dots,T+1$, $\| \nabla \ell^i(\boldsymbol{\Theta}_t) \|_{S_1} \leq H$.
\end{assumption}

\begin{assumption}\label{ass:bounded_function}
There exist a constant $B > 0$ such that for any $i\in [m]$ and $t=1,\dots,T+1$, $| \ell^i(\boldsymbol{\Theta}_t) | \leq B$.
\end{assumption}

\subsection{Proof of Remark 2}\label{app:remark2_proof}

Suppose that every $\ell^i:\Omega \subseteq \mathbb{R}^{p\times q}\rightarrow\mathbb{R}$ is convex, where $\Omega$ is a nonempty compact
convex set, and that the spectral-norm diameter of $\Omega$ is uniformly bounded 
\[
    D
    :=
    \sup_{\boldsymbol{\Theta},\boldsymbol{\Theta}'\in \Omega}
    \left\|
    \boldsymbol{\Theta}-\boldsymbol{\Theta}'
    \right\|_{S_\infty}.
\]
For any $\boldsymbol{z}\in\Delta_m$, let
\[
    \boldsymbol{\Theta}_t^\ast(\boldsymbol{z})
    \in
    \argmin_{\boldsymbol{\Theta}\in\Omega}
    \boldsymbol{z}^{\top}\boldsymbol{L}(\boldsymbol{\Theta}).
\]
Convexity and spectral--nuclear norm duality imply
\begin{equation}
\begin{aligned}
    &\boldsymbol{z}^{\top}
    \boldsymbol{L}(\boldsymbol{\Theta}_t)
    -
    \min_{\boldsymbol{\Theta}\in\Omega}
    \boldsymbol{z}^{\top}\boldsymbol{L}(\boldsymbol{\Theta})
    \\
    &\quad\leq
    \left\langle
    \boldsymbol{G}_t(\boldsymbol{z}),
    \boldsymbol{\Theta}_t
    -
    \boldsymbol{\Theta}_t^\ast(\boldsymbol{z})
    \right\rangle
    \\
    &\quad\leq
    D
    \left\|
    \boldsymbol{G}_t(\boldsymbol{z})
    \right\|_{S_1}.
\end{aligned}
\end{equation}

\subsection{Convergence Analysis for Deterministic Gradient}\label{app:deterministic proof}

\begin{theorem}[Deterministic Convergence]
\label{thm:deterministic_convergence_app}
Suppose that each objective
$\ell^i:\mathbb{R}^{p\times q}\rightarrow\mathbb{R}$ is
$L$-smooth with respect to the spectral norm. Assume that there exist constants $H>0$ and $B>0$,
independent of both $T$ and $t$, such that $\max_{t\in [T+1]}\max_{i\in[m]}\|\nabla\ell^i(\boldsymbol{\Theta}_t)\|_{S_1}\le H$ and $\max_{t\in [T+1]}\max_{i\in[m]}|\ell^i(\boldsymbol{\Theta}_t)|\le B$. Choose $\alpha=\sqrt{\frac{B}{LT}}$, $\beta=\frac{1}{mHT}$ and $0<\gamma<mHT$. 
Then, for every fixed $\mu\in(0,1]$, the iterates generated by
Algorithm~\ref{alg:moon} satisfy
\[
\frac{1}{T}
\sum_{t=1}^{T}
\min_{\boldsymbol{z}_t^\ast\in\Delta_m}
\left\|
\nabla L(\boldsymbol{\Theta}_t)\boldsymbol{z}_t^\ast
\right\|_{S_1}
\le
O(T^{-1/2}). 
\]
\end{theorem}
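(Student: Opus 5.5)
The plan is to run a descent-lemma argument on the \emph{time-varying scalarized objective} $\boldsymbol{z}_t^\top\boldsymbol{L}$. Then $\mathcal{G}(\boldsymbol{\Theta}_t)\le\|\boldsymbol{G}_t\|_{S_1}$, because $\boldsymbol{z}_t\in\Delta_m$ is feasible, and this bounds the left-hand side of Theorem~\ref{thm:deterministic_convergence_app}. Since $\boldsymbol{W}_t=\operatorname{Polar}(\boldsymbol{M}_t)$, we have $\|\boldsymbol{W}_t\|_{S_\infty}\le 1$.

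\textbf{Step 1: descent inequality.} Applying the quadratic upper bound \eqref{eq:upperbound} to each $\ell^i$ and averaging with weights $z_{i,t}$ gives
\[
\boldsymbol{z}_t^\top\boldsymbol{L}(\boldsymbol{\Theta}_{t+1})\le \boldsymbol{z}_t^\top\boldsymbol{L}(\boldsymbol{\Theta}_t)-\alpha\langle \boldsymbol{G}_t,\boldsymbol{W}_t\rangle+\tfrac{L\alpha^2}{2}.
\]

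\textbf{Step 2: inner-product lower bound.} Spectral--nuclear duality gives $\langle \boldsymbol{M}_t,\boldsymbol{W}_t\rangle=\|\boldsymbol{M}_t\|_{S_1}$ and $|\langle \boldsymbol{A},\boldsymbol{W}_t\rangle|\le\|\boldsymbol{A}\|_{S_1}$. Combining these with the triangle inequality,
\[
\langle \boldsymbol{G}_t,\boldsymbol{W}_t\rangle\ge \|\boldsymbol{G}_t\|_{S_1}-2\|\boldsymbol{G}_t-\boldsymbol{M}_t\|_{S_1}.
\]
This inequality also covers the case $\boldsymbol{M}_t=\boldsymbol{0}$.

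\textbf{Step 3: telescoping with moving weights.} Write $\boldsymbol{z}_t^\top\boldsymbol{L}(\boldsymbol{\Theta}_{t+1})=\boldsymbol{z}_{t+1}^\top\boldsymbol{L}(\boldsymbol{\Theta}_{t+1})+(\boldsymbol{z}_t-\boldsymbol{z}_{t+1})^\top\boldsymbol{L}(\boldsymbol{\Theta}_{t+1})$. By the bound $B$, the correction term is at most $B\|\boldsymbol{z}_{t+1}-\boldsymbol{z}_t\|_1$.

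\textbf{Step 4: control the weight drift.} Each entry of $\boldsymbol{\delta}_t$ satisfies $|\delta_{i,t}|\le H$, since $|\langle\nabla\ell^i,\boldsymbol{W}_t\rangle|\le\|\nabla\ell^i\|_{S_1}$. The logit recursion $\boldsymbol{\xi}_{t+1}=(1-\beta\gamma)\boldsymbol{\xi}_t-\beta\boldsymbol{\delta}_t$ has $0<\beta\gamma<1$ by the choice $\gamma<mHT$, which is exactly where that condition is used. An induction from $\boldsymbol{\xi}_1=\boldsymbol{0}$ then gives $\|\boldsymbol{\xi}_t\|_\infty\le H/\gamma$, and hence $\|\boldsymbol{\xi}_{t+1}-\boldsymbol{\xi}_t\|_\infty\le 2\beta H=2/(mT)$. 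Softmax is $1$-Lipschitz from $\ell_\infty$ to $\ell_1$, up to a constant factor of at most $2$. Therefore $\|\boldsymbol{z}_{t+1}-\boldsymbol{z}_t\|_1=O(1/(mT))$, and summing over $t$ shows the total drift cost from Step 3 is $O(B)$.

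\textbf{Step 5: momentum tracking error (main obstacle).} I expect this step to carry the real work. Set $\boldsymbol{E}_t=\boldsymbol{M}_t-\boldsymbol{G}_t$. Then
\[
\boldsymbol{E}_t=(1-\mu)\boldsymbol{E}_{t-1}+(1-\mu)(\boldsymbol{G}_{t-1}-\boldsymbol{G}_t),\qquad \boldsymbol{E}_1=-(1-\mu)\boldsymbol{G}_1.
\]
The increment splits as
\[
\boldsymbol{G}_t-\boldsymbol{G}_{t-1}=\textstyle\sum_i z_{i,t}\bigl(\nabla\ell^i(\boldsymbol{\Theta}_t)-\nabla\ell^i(\boldsymbol{\Theta}_{t-1})\bigr)+\sum_i(z_{i,t}-z_{i,t-1})\nabla\ell^i(\boldsymbol{\Theta}_{t-1}).
\]
Its nuclear norm is at most $L\alpha+H\|\boldsymbol{z}_t-\boldsymbol{z}_{t-1}\|_1=L\alpha+O(H/T)$. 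For the first sum I use Assumption~\ref{ass:L-smoothness} together with $\|\boldsymbol{\Theta}_t-\boldsymbol{\Theta}_{t-1}\|_{S_\infty}\le\alpha$. Unrolling the recursion gives
\[
\|\boldsymbol{E}_t\|_{S_1}\le (1-\mu)^tH+\tfrac{1-\mu}{\mu}\bigl(L\alpha+O(H/T)\bigr),
\]
so that $\sum_{t}\|\boldsymbol{E}_t\|_{S_1}\le H/\mu+T\,\tfrac{1-\mu}{\mu}\bigl(L\alpha+O(H/T)\bigr)$.

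\textbf{Step 6: sum and choose $\alpha$.} Summing Steps 1--3 over $t=1,\dots,T$ and using $|\boldsymbol{z}^\top\boldsymbol{L}|\le B$ at the endpoints,
\[
\alpha\sum_{t=1}^T\|\boldsymbol{G}_t\|_{S_1}\le 2B+O(B)+\tfrac{TL\alpha^2}{2}+2\alpha\sum_{t=1}^T\|\boldsymbol{E}_t\|_{S_1}.
\]
Dividing by $\alpha T$ and inserting the bound from Step 5 gives
\[
\frac1T\sum_{t=1}^T\|\boldsymbol{G}_t\|_{S_1}\le O\!\Bigl(\frac{B}{\alpha T}+L\alpha\Bigl(\tfrac12+\tfrac{2(1-\mu)}{\mu}\Bigr)+\frac{H}{\mu T}\Bigr).
\]
With $\alpha=\sqrt{B/(LT)}$, and $\mu$ fixed so that $1/\mu$ is a constant, every term is $O(T^{-1/2})$. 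Combining this with $\mathcal{G}(\boldsymbol{\Theta}_t)\le\|\boldsymbol{G}_t\|_{S_1}$ completes the argument.
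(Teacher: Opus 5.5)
Your proposal is correct and follows essentially the same route as the paper: descent on the time-varying scalarization $\boldsymbol{z}_t^\top\boldsymbol{L}$, the bound $\langle \boldsymbol{G}_t,\boldsymbol{W}_t\rangle\ge\|\boldsymbol{G}_t\|_{S_1}-2\|\boldsymbol{G}_t-\boldsymbol{M}_t\|_{S_1}$, telescoping with a weight-drift correction controlled through the logit bound $\|\boldsymbol{\xi}_t\|_\infty\le H/\gamma$, and unrolling the momentum-error recursion $\boldsymbol{E}_t=(1-\mu)(\boldsymbol{E}_{t-1}+\boldsymbol{G}_{t-1}-\boldsymbol{G}_t)$, exactly as in the paper's weight-update and momentum-error lemmas. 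The differences only change constants: you measure the weight drift in $\ell_1$ (softmax being Lipschitz from $\ell_\infty$ to $\ell_1$) rather than in $\ell_\infty$ with an extra factor $m$, and you keep $(1-\mu)/\mu$ instead of $1/\mu$.
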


Before the final proof, we first introduce the following
Lemmas.

\begin{lemma}\label{lem:weight_updates}
    Assume that Assumption \ref{ass:bounded_gradient} holds and that $0 < \beta \gamma <1$. Then for any $t = 1,\dots,T$, the weight updates in Algorithm \ref{alg:moon} satisfy the following inequality
    \begin{equation}
        \| \boldsymbol{z}_{t} - \boldsymbol{z}_{t+1} \|_{\infty} \leq \beta H.
    \end{equation}
\end{lemma}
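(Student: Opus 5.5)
The plan is to combine three facts: the entries of the surrogate direction $\boldsymbol{\delta}_t$ are uniformly bounded by $H$, the logits $\boldsymbol{\xi}_t$ stay in an $\ell_\infty$-ball of radius $H/\gamma$, and $\operatorname{Softmax}$ is $\tfrac12$-Lipschitz from $\ell_\infty$ to $\ell_\infty$. Multiplying the resulting bound $2\beta H$ on the logit increment by $\tfrac12$ gives exactly $\beta H$.

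\emph{Step 1: bounding $\boldsymbol{\delta}_t$.} By spectral--nuclear norm duality (Hölder for Schatten norms),
\[
|\delta_{i,t}| = |\langle \nabla\ell^i(\boldsymbol{\Theta}_t),\boldsymbol{W}_t\rangle| \le \|\nabla\ell^i(\boldsymbol{\Theta}_t)\|_{S_1}\,\|\boldsymbol{W}_t\|_{S_\infty} \le H .
\]
Here we use Assumption~\ref{ass:bounded_gradient}. We also use that $\boldsymbol{W}_t=\operatorname{Polar}(\boldsymbol{M}_t)$ has all singular values in $\{0,1\}$, so $\|\boldsymbol{W}_t\|_{S_\infty}\le 1$; the convention $\operatorname{Polar}(\boldsymbol{0})=\boldsymbol{0}$ covers the degenerate case. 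Hence $\|\boldsymbol{\delta}_t\|_\infty\le H$.

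\emph{Step 2: bounding the logits.} Rewrite the update in Step~7 of Algorithm~\ref{alg:moon} as
\[
\boldsymbol{\xi}_{t+1}=(1-\beta\gamma)\boldsymbol{\xi}_t-\beta\boldsymbol{\delta}_t .
\]
Since $0<\beta\gamma<1$, the triangle inequality gives
\[
\|\boldsymbol{\xi}_{t+1}\|_\infty\le(1-\beta\gamma)\|\boldsymbol{\xi}_t\|_\infty+\beta H .
\]
Starting from $\boldsymbol{\xi}_1=\boldsymbol{0}$, induction shows $\|\boldsymbol{\xi}_t\|_\infty\le H/\gamma$ for all $t$: if $\|\boldsymbol{\xi}_t\|_\infty\le H/\gamma$, then $\|\boldsymbol{\xi}_{t+1}\|_\infty\le(1-\beta\gamma)H/\gamma+\beta H=H/\gamma$. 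Consequently
\[
\|\boldsymbol{\xi}_{t+1}-\boldsymbol{\xi}_t\|_\infty=\beta\|\boldsymbol{\delta}_t+\gamma\boldsymbol{\xi}_t\|_\infty\le\beta(H+H)=2\beta H .
\]

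\emph{Step 3: Lipschitz constant of Softmax.} This is the only nonroutine point, and the constant $\tfrac12$ is exactly what makes the stated bound come out. The Jacobian of $\operatorname{Softmax}$ at $\boldsymbol{\xi}$ is $J=\operatorname{diag}(\boldsymbol{z})-\boldsymbol{z}\boldsymbol{z}^\top$ with $\boldsymbol{z}=\operatorname{Softmax}(\boldsymbol{\xi})$. Its $\ell_\infty\to\ell_\infty$ operator norm is the maximal absolute row sum:
\[
z_i(1-z_i)+\sum_{j\ne i}z_iz_j=2z_i(1-z_i)\le\tfrac12 .
\]
Write $\boldsymbol{z}_{t+1}-\boldsymbol{z}_t=\int_0^1 J\bigl(\boldsymbol{\xi}_t+s(\boldsymbol{\xi}_{t+1}-\boldsymbol{\xi}_t)\bigr)(\boldsymbol{\xi}_{t+1}-\boldsymbol{\xi}_t)\,ds$, which is the mean value theorem in integral form. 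It follows that
\[
\|\boldsymbol{z}_t-\boldsymbol{z}_{t+1}\|_\infty\le\tfrac12\|\boldsymbol{\xi}_{t+1}-\boldsymbol{\xi}_t\|_\infty\le\tfrac12\cdot 2\beta H=\beta H,
\]
which is the claim of Lemma~\ref{lem:weight_updates}.
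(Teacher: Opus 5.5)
Your proposal is correct and follows the paper's proof step for step: the same duality bound $\|\boldsymbol{\delta}_t\|_\infty\le H$, the same contraction recursion giving $\|\boldsymbol{\xi}_t\|_\infty\le H/\gamma$ (you use induction where the paper unrolls the geometric sum), and the same $\tfrac12$-Lipschitz property of Softmax, yielding $\tfrac{\beta}{2}(H+H)=\beta H$. The one addition is that you justify the $\tfrac12$ constant via the Jacobian $\operatorname{diag}(\boldsymbol{z})-\boldsymbol{z}\boldsymbol{z}^\top$, whose absolute row sums are $2z_i(1-z_i)\le\tfrac12$, whereas the paper only asserts this constant.
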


\begin{proof}
The softmax mapping is $1/2$-Lipschitz with respect to the
$\ell_\infty$ norm. Indeed, for any
$\boldsymbol{x},\boldsymbol{y}\in\mathbb{R}^m$,
\[
\left\|
\operatorname{softmax}(\boldsymbol{x})
-
\operatorname{softmax}(\boldsymbol{y})
\right\|_\infty
\le
\frac{1}{2}
\|\boldsymbol{x}-\boldsymbol{y}\|_\infty.
\]
Consequently,
\begin{align}
\|\boldsymbol{z}_{t+1}-\boldsymbol{z}_t\|_\infty
&\le
\frac{1}{2}
\|\boldsymbol{\xi}_{t+1}-\boldsymbol{\xi}_t\|_\infty
\nonumber\\
&=
\frac{\beta}{2}
\|\boldsymbol{\delta}_t+\gamma\boldsymbol{\xi}_t\|_\infty.
\label{eq:softmax_weight_difference}
\end{align}

We first bound $\boldsymbol{\delta}_t$. By the duality between the
spectral norm and the nuclear norm,
\begin{align}
\|\boldsymbol{\delta}_t\|_\infty
&=
\max_{i\in[m]}
\left|
\left\langle
\boldsymbol{W}_t,
\nabla\ell^i(\boldsymbol{\Theta}_t)
\right\rangle
\right|
\nonumber\\
&\le
\|\boldsymbol{W}_t\|_{S_\infty}
\max_{i\in[m]}
\|\nabla\ell^i(\boldsymbol{\Theta}_t)\|_{S_1}
\le H.
\label{eq:delta_bound}
\end{align}
Here, $\|\boldsymbol{W}_t\|_{S_\infty}\le1$ follows from the
polar-factor construction in Algorithm~\ref{alg:moon}. In particular,
if
\[
\boldsymbol{W}_t=\boldsymbol{U}_t\boldsymbol{V}_t^\top
\]
is obtained from a compact singular value decomposition, then
$\|\boldsymbol{W}_t\|_{S_\infty}\le1$.

Next, the update of $\boldsymbol{\xi}_t$ can be written as
\[
\boldsymbol{\xi}_{t+1}
=
(1-\beta\gamma)\boldsymbol{\xi}_t
-\beta\boldsymbol{\delta}_t.
\]
Since $0<\beta\gamma<1$, the coefficient
$1-\beta\gamma$ is nonnegative. Thus, using
\eqref{eq:delta_bound},
\begin{align}
\|\boldsymbol{\xi}_{t+1}\|_\infty
&\le
(1-\beta\gamma)\|\boldsymbol{\xi}_t\|_\infty
+\beta\|\boldsymbol{\delta}_t\|_\infty
\nonumber\\
&\le
(1-\beta\gamma)\|\boldsymbol{\xi}_t\|_\infty+\beta H.
\label{eq:xi_recursion}
\end{align}
Unrolling \eqref{eq:xi_recursion} and using
$\boldsymbol{\xi}_1=\boldsymbol{0}$ gives
\begin{align}
\|\boldsymbol{\xi}_t\|_\infty
&\le
\beta H
\sum_{s=0}^{t-2}(1-\beta\gamma)^s
\nonumber\\
&=
\frac{H}{\gamma}
\left[1-(1-\beta\gamma)^{t-1}\right]
\le
\frac{H}{\gamma}.
\label{eq:xi_uniform_bound}
\end{align}

Since $z_t=\operatorname{Softmax}(\boldsymbol{\xi}_t)$, the $1/2$-Lipschitz continuity of softmax under the $\ell_\infty$ norm bounds the change in $z_t$ by the corresponding change in the logits. Applying the triangle inequality to the logit update and substituting $\|\boldsymbol{\delta}_t\|_\infty\leq H$ from (22) and $\|\boldsymbol{\xi}_t\|_\infty\leq H/\gamma$ from (24), we obtain
\begin{align*}
\|\boldsymbol{z}_{t+1}-\boldsymbol{z}_t\|_\infty
&\le
\frac{\beta}{2}
\left(
\|\boldsymbol{\delta}_t\|_\infty
+\gamma\|\boldsymbol{\xi}_t\|_\infty
\right)\\
&\le
\frac{\beta}{2}(H+H)
=
\beta H.
\end{align*}
This completes the proof.
\end{proof}

\begin{lemma}\label{lem:momentum_error_deterministic}
    Assume that Assumption \ref{ass:L-smoothness} and \ref{ass:bounded_gradient} hold, and that $0 < \beta \gamma <1$. Then for any $t = 1,\dots,T$, the weight updates in Algorithm \ref{alg:moon} satisfy the following inequality 
    \begin{equation}
        \begin{aligned}
            & \|\boldsymbol{M}_t - \sum_{i=1}^m z_{i,t}\nabla\ell^i(\boldsymbol{\Theta}_t)\|_{S_1} \\
            &\leq (1-\mu)^{t-1}\|\sum_{i=1}^m z_{i,1}\nabla\ell^i(\boldsymbol{\Theta}_{1})\|_{S_1} + \frac{L\alpha}{\mu} + \frac{mH^2}{\mu}\beta. 
        \end{aligned}
    \end{equation}
\end{lemma}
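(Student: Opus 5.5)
Write $\boldsymbol{G}_t=\sum_{i=1}^m z_{i,t}\nabla\ell^i(\boldsymbol{\Theta}_t)$ and $\boldsymbol{E}_t=\boldsymbol{M}_t-\boldsymbol{G}_t$; the plan is a one-step recursion for $\boldsymbol{E}_t$, then unrolling. From the momentum update, $\boldsymbol{M}_t=(1-\mu)\boldsymbol{M}_{t-1}+\mu\boldsymbol{G}_t$, so for $t\ge2$
\[
\boldsymbol{E}_t=(1-\mu)\boldsymbol{E}_{t-1}+(1-\mu)\left(\boldsymbol{G}_{t-1}-\boldsymbol{G}_t\right).
\]
Since $\boldsymbol{M}_0=\boldsymbol{0}$, we have $\boldsymbol{M}_1=\mu\boldsymbol{G}_1$ and hence $\boldsymbol{E}_1=-(1-\mu)\boldsymbol{G}_1$, giving $\|\boldsymbol{E}_1\|_{S_1}\le\|\boldsymbol{G}_1\|_{S_1}$. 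By the triangle inequality in $S_1$,
\[
\|\boldsymbol{E}_t\|_{S_1}\le(1-\mu)\|\boldsymbol{E}_{t-1}\|_{S_1}+\|\boldsymbol{G}_t-\boldsymbol{G}_{t-1}\|_{S_1}.
\]

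The key step is bounding the drift $\|\boldsymbol{G}_t-\boldsymbol{G}_{t-1}\|_{S_1}$, which we split into a parameter part and a weight part:
\[
\boldsymbol{G}_t-\boldsymbol{G}_{t-1}=\sum_i z_{i,t}\left(\nabla\ell^i(\boldsymbol{\Theta}_t)-\nabla\ell^i(\boldsymbol{\Theta}_{t-1})\right)+\sum_i (z_{i,t}-z_{i,t-1})\nabla\ell^i(\boldsymbol{\Theta}_{t-1}).
\]
For the first sum, Assumption~\ref{ass:L-smoothness} gives a bound of $\sum_i z_{i,t}L\|\boldsymbol{\Theta}_t-\boldsymbol{\Theta}_{t-1}\|_{S_\infty}=L\alpha\|\boldsymbol{W}_{t-1}\|_{S_\infty}\le L\alpha$. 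Here $z_t\in\Delta_m$, and the polar factor has spectral norm at most $1$ (including the convention $\operatorname{Polar}(\boldsymbol 0)=\boldsymbol 0$). For the second sum, Lemma~\ref{lem:weight_updates} gives $|z_{i,t}-z_{i,t-1}|\le\beta H$, and Assumption~\ref{ass:bounded_gradient} gives $\|\nabla\ell^i(\boldsymbol{\Theta}_{t-1})\|_{S_1}\le H$. Summing over $m$ tasks yields at most $mH^2\beta$. Thus the drift is bounded by $L\alpha+mH^2\beta$.

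Finally, unroll the recursion:
\[
\|\boldsymbol{E}_t\|_{S_1}\le(1-\mu)^{t-1}\|\boldsymbol{E}_1\|_{S_1}+\left(L\alpha+mH^2\beta\right)\sum_{s=0}^{t-2}(1-\mu)^s.
\]
Bounding the geometric sum by $1/\mu$ gives the claimed inequality. The main point of care is the drift step: the weights must be bounded in $\ell_\infty$ and multiplied by $m$ copies of $H$ rather than using an $\ell_1$ bound. The crude $m$ factor suffices for the statement. The rest is routine, including the $t=1$ case, which holds trivially since $(1-\mu)^0=1$.
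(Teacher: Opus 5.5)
Your proof is correct and follows essentially the same route as the paper. Both use the error recursion $\boldsymbol{E}_t=(1-\mu)(\boldsymbol{E}_{t-1}+\boldsymbol{G}_{t-1}-\boldsymbol{G}_t)$, split the drift into a smoothness part bounded by $L\alpha$ and a weight-change part bounded by $mH^2\beta$ via Lemma~\ref{lem:weight_updates}, and bound the geometric sum by $1/\mu$. The only cosmetic difference is that you unroll the inequality (dropping the harmless $(1-\mu)$ factor on the drift and using $\|\boldsymbol{E}_1\|_{S_1}=(1-\mu)\|\boldsymbol{G}_1\|_{S_1}$), whereas the paper unrolls the exact identity before taking norms.
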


\begin{proof}
Let
\[
\boldsymbol{G}_t:=\sum_{i=1}^m z_{i,t}\nabla \ell_i(\boldsymbol{\Theta}_t)
\quad\text{and}\quad
\boldsymbol{E}_t:=\boldsymbol{M}_t-\boldsymbol{G}_t.
\]
Using the momentum update \(\boldsymbol{M}_t=(1-\mu)\boldsymbol{M}_{t-1}+\mu \boldsymbol{G}_t\), we obtain, for \(t\ge 2\),
\[
\boldsymbol{E}_t=(1-\mu)\bigl(\boldsymbol{E}_{t-1}+\boldsymbol{G}_{t-1}-\boldsymbol{G}_t\bigr).
\]
Iterating this recursion and using \(\boldsymbol{M}_0=0\) gives
\[
\|\boldsymbol{E}_t\|_{S_1}
\le
(1-\mu)^t\|\boldsymbol{G}_1\|_{S_1}
+
\sum_{j=2}^t(1-\mu)^{t-j+1}
\|\boldsymbol{G}_{j-1}-\boldsymbol{G}_j\|_{S_1}.
\]
Moreover,
\begin{align*}
    &\boldsymbol{G}_j-\boldsymbol{G}_{j-1}
    =
    \sum_{i=1}^m z_{i,j}
    \bigl(\nabla\ell_i(\boldsymbol{\Theta}_j)-\nabla\ell_i(\boldsymbol{\Theta}_{j-1})\bigr) \\
    &+
    \sum_{i=1}^m(z_{i,j}-z_{i,j-1})
    \nabla\ell_i(\boldsymbol{\Theta}_{j-1}).
\end{align*}
By Assumptions \ref{ass:L-smoothness}--\ref{ass:bounded_gradient}, Lemma \ref{lem:weight_updates}, and
\(\|\boldsymbol{\Theta}_j-\boldsymbol{\Theta}_{j-1}\|_{S_\infty}\le\alpha\), we have
\[
\|\boldsymbol{G}_j-\boldsymbol{G}_{j-1}\|_{S_1}
\le
L\alpha+mH^2\beta.
\]
Since
\[
\sum_{j=2}^t(1-\mu)^{t-j+1}\le\frac{1}{\mu},
\]
it follows that
\begin{align*}
    &\left\|\boldsymbol{M}_t-\sum_{i=1}^mz_{i,t}\nabla\ell_i(\boldsymbol{\Theta}_t)\right\|_{S_1} \\
    &\le(1-\mu)^{t-1}\left\|\sum_{i=1}^m z_{i,1}\nabla\ell_i(\boldsymbol{\Theta}_1)\right\|_{S_1} +\frac{L\alpha}{\mu}+\frac{mH^2\beta}{\mu}.
\end{align*}
\end{proof}

We are now ready to prove Theorem~\ref{thm:deterministic_convergence_app}. 

\begin{theorem}\label{thm:non_convex_det}
    Suppose Assumption \ref{ass:L-smoothness}--\ref{ass:bounded_function} hold, let $\ell^i:\mathbb{R}^{p \times q}\rightarrow\mathbb{R}$ be non-convex for each $i\in [m]$. Suppose that $0<\beta\gamma<1$, then the iterations of Algorithm \ref{alg:moon} satisfy the following inequality 
    \begin{equation}
    \begin{aligned}
        &\frac{1}{T}\sum_{t=1}^T
        \min_{\boldsymbol{z}_t^\ast \in \Delta_m}
        \|\nabla L(\boldsymbol{\Theta}_t)
        \boldsymbol{z}_t^\ast \|_{S_1} \\
        &\leq \frac{2B}{\alpha T} + \frac{2H}{\mu T} + \frac{2L\alpha}{\mu} + \frac{2mH^2}{\mu}\beta
        + mHB\frac{\beta}{\alpha}
        + \frac{L}{2} \alpha. 
    \end{aligned}
    \end{equation}
\end{theorem}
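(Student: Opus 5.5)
We follow the standard Muon-style descent argument, applied to a time-varying scalarization $\phi_t(\boldsymbol{\Theta}) := \sum_{i=1}^m z_{i,t}\ell^i(\boldsymbol{\Theta})$.

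\textbf{One-step descent.} Since each $\ell^i$ is $L$-smooth in the spectral norm and $\boldsymbol{z}_t\in\Delta_m$, $\phi_t$ is also $L$-smooth. The update $\boldsymbol{\Theta}_{t+1}=\boldsymbol{\Theta}_t-\alpha\boldsymbol{W}_t$ with $\|\boldsymbol{W}_t\|_{S_\infty}\le 1$ then gives
\[
\phi_t(\boldsymbol{\Theta}_{t+1})\le \phi_t(\boldsymbol{\Theta}_t)-\alpha\langle \boldsymbol{G}_t,\boldsymbol{W}_t\rangle+\frac{L\alpha^2}{2}.
\]
Write $\boldsymbol{G}_t=\boldsymbol{M}_t-\boldsymbol{E}_t$. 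The polar factor satisfies $\langle \boldsymbol{M}_t,\boldsymbol{W}_t\rangle=\|\boldsymbol{M}_t\|_{S_1}$, and spectral--nuclear duality gives $|\langle\boldsymbol{E}_t,\boldsymbol{W}_t\rangle|\le\|\boldsymbol{E}_t\|_{S_1}$. Combining these with $\|\boldsymbol{M}_t\|_{S_1}\ge\|\boldsymbol{G}_t\|_{S_1}-\|\boldsymbol{E}_t\|_{S_1}$ yields
\[
\langle \boldsymbol{G}_t,\boldsymbol{W}_t\rangle\ge \|\boldsymbol{G}_t\|_{S_1}-2\|\boldsymbol{E}_t\|_{S_1}.
\]
Finally, $\|\boldsymbol{G}_t\|_{S_1}\ge \min_{\boldsymbol{z}}\|\nabla L(\boldsymbol{\Theta}_t)\boldsymbol{z}\|_{S_1}=\mathcal{G}(\boldsymbol{\Theta}_t)$, since $\boldsymbol{z}_t$ is feasible. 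So
\[
\alpha\,\mathcal{G}(\boldsymbol{\Theta}_t)\le \phi_t(\boldsymbol{\Theta}_t)-\phi_t(\boldsymbol{\Theta}_{t+1})+2\alpha\|\boldsymbol{E}_t\|_{S_1}+\frac{L\alpha^2}{2}.
\]

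\textbf{Telescoping with drifting weights (the main obstacle).} The weights change between steps, so the sum does not telescope directly. We rewrite
\[
\phi_t(\boldsymbol{\Theta}_{t+1})=\phi_{t+1}(\boldsymbol{\Theta}_{t+1})+(\boldsymbol{z}_t-\boldsymbol{z}_{t+1})^\top\boldsymbol{L}(\boldsymbol{\Theta}_{t+1}).
\]
By Lemma~\ref{lem:weight_updates} and Assumption~\ref{ass:bounded_function}, the correction term is at most $m\cdot\beta H\cdot B$ in absolute value. Summing over $t=1,\dots,T$:
\begin{itemize}
\item the telescoped terms contribute $\phi_1(\boldsymbol{\Theta}_1)-\phi_{T+1}(\boldsymbol{\Theta}_{T+1})\le 2B$;
\item the drift corrections contribute at most $TmHB\beta$.
\end{itemize}
Dividing by $\alpha T$ produces the terms $\frac{2B}{\alpha T}$, $mHB\frac{\beta}{\alpha}$ and $\frac{L\alpha}{2}$. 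This is the step where the bound on $\|\boldsymbol{z}_t-\boldsymbol{z}_{t+1}\|_\infty$ is essential. Note that it requires $|\ell^i|\le B$ at the iterate $\boldsymbol{\Theta}_{t+1}$, which is why the assumptions are stated for $t\le T+1$.

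\textbf{Momentum error.} It remains to bound $\frac{2}{T}\sum_t\|\boldsymbol{E}_t\|_{S_1}$ using Lemma~\ref{lem:momentum_error_deterministic}. The geometric part is handled by $\|\boldsymbol{G}_1\|_{S_1}\le H$, since $\boldsymbol{z}_1\in\Delta_m$, together with $\sum_t(1-\mu)^{t-1}\le 1/\mu$; this gives $\frac{2H}{\mu T}$. The constant parts give $\frac{2L\alpha}{\mu}+\frac{2mH^2\beta}{\mu}$.

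Adding all the pieces reproduces exactly the claimed six-term bound. The only condition needed is $0<\beta\gamma<1$, which is required to invoke the two lemmas.
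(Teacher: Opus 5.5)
Your proposal is correct and follows essentially the same route as the paper. Both arguments use the same ingredients:
\begin{itemize}
\item the $L$-smooth upper bound on the $\boldsymbol{z}_t$-weighted loss;
\item the polar-alignment estimate $\langle \boldsymbol{G}_t,\boldsymbol{W}_t\rangle\ge\|\boldsymbol{G}_t\|_{S_1}-2\|\boldsymbol{M}_t-\boldsymbol{G}_t\|_{S_1}$;
\item adding and subtracting $\sum_i z_{i,t+1}\ell^i(\boldsymbol{\Theta}_{t+1})$ to telescope, with the weight drift controlled by Lemma~\ref{lem:weight_updates} and $|\ell^i|\le B$;
\item Lemma~\ref{lem:momentum_error_deterministic} for the momentum error.
\end{itemize}
The only cosmetic difference is that you pass to $\min_{\boldsymbol{z}}$ at each step rather than after summing, which changes nothing.
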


\begin{proof}
Define
\[
\boldsymbol{G}_t:=\sum_{i=1}^m z_{i,t}\nabla\ell_i(\boldsymbol{\Theta}_t).
\]
By \(L\)-smoothness and \(\boldsymbol{\Theta}_{t+1}=\boldsymbol{\Theta}_t-\alpha \boldsymbol{W}_t\),
\[
\sum_{i=1}^m z_{i,t}\ell_i(\boldsymbol{\Theta}_{t+1})
\le
\sum_{i=1}^m z_{i,t}\ell_i(\boldsymbol{\Theta}_t)
-\alpha\langle \boldsymbol{G}_t,\boldsymbol{W}_t\rangle
+\frac{L\alpha^2}{2}.
\]
Since \(\boldsymbol{W}_t\) is the polar factor of \(\boldsymbol{M}_t\),
\(\langle \boldsymbol{M}_t,\boldsymbol{W}_t\rangle=\|\boldsymbol{M}_t\|_{S_1}\) and
\(\|\boldsymbol{W}_t\|_{S_\infty}\le1\). Therefore,
\[
-\alpha\langle \boldsymbol{G}_t,\boldsymbol{W}_t\rangle
\le
-\alpha\|\boldsymbol{G}_t\|_{S_1}
+2\alpha\|\boldsymbol{G}_t-\boldsymbol{M}_t\|_{S_1}.
\]
After adding and subtracting
\(\sum_i z_{i,t+1}\ell_i(\boldsymbol{\Theta}_{t+1})\) and rearranging, we obtain
\[
\begin{aligned}
\|\boldsymbol{G}_t\|_{S_1}
\le {}&
\frac{1}{\alpha}
\left(
\sum_{i=1}^m z_{i,t}\ell_i(\boldsymbol{\Theta}_t)
-
\sum_{i=1}^m z_{i,t+1}\ell_i(\boldsymbol{\Theta}_{t+1})
\right)\\
&+\frac{1}{\alpha}
\sum_{i=1}^m
(z_{i,t+1}-z_{i,t})\ell_i(\boldsymbol{\Theta}_{t+1})
+\frac{L\alpha}{2} \\
&+2\|\boldsymbol{G}_t-\boldsymbol{M}_t\|_{S_1}.
\end{aligned}
\]
Lemma~\ref{lem:weight_updates} imply
\[
\left|
\sum_{i=1}^m
(z_{i,t+1}-z_{i,t})\ell_i(\boldsymbol{\Theta}_{t+1})
\right|
\le mBH\beta.
\]
Applying Lemma~\ref{lem:momentum_error_deterministic}, summing over \(t=1,\ldots,T\), and using
\(\|\boldsymbol{G}_1\|_{S_1}\le H\), we obtain
\[
\begin{aligned}
&\sum_{t=1}^T\|\boldsymbol{G}_t\|_{S_1}
\le
\frac{2B}{\alpha}
+\frac{2H}{\mu}
+\frac{2TL\alpha}{\mu}
+\frac{2TmH^2\beta}{\mu} \\
&+\frac{TmHB\beta}{\alpha}
+\frac{TL\alpha}{2}.
\end{aligned}
\]
Finally, since $\boldsymbol{z}_t\in\Delta_m$, it is a feasible point of the minimization problem. Therefore,
\[
\min_{z\in\Delta_m}
\left\|
\sum_{i=1}^m z_i\nabla\ell_i(\boldsymbol{\Theta}_t)
\right\|_{S_1}
\le \|\boldsymbol{G}_t\|_{S_1}.
\]
Dividing the preceding inequality by \(T\) yields the claimed result.
\end{proof}

There is an immediate corollary of the above Theorem. 

\begin{corollary}\label{cor:convergence_rate_nonconvex_det}
    Setting $\alpha=\sqrt{\frac{B}{LT}}$, $\beta=\frac{1}{mHT}$ and $0<\gamma<mHT$ in \cref{thm:non_convex_det}, then, for any fixed $\mu \in (0,1]$, the following holds 
    \begin{equation}
        \begin{aligned}
            &\frac{1}{T}\sum_{t=1}^T \min_{\boldsymbol{z}_t^\ast \in \Delta_m}\|\nabla L(\boldsymbol{\Theta_t})\boldsymbol{z}_t^\ast \|_{S_1} \\
            &\leq O(\sqrt{\frac{LB}{T}} + \frac{1}{T}) = O(1/\sqrt{T}). 
        \end{aligned}
    \end{equation}
\end{corollary}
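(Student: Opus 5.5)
The corollary follows by substituting the prescribed step sizes into the six-term bound of \cref{thm:non_convex_det} and checking the order of each term in $T$. First I would verify that the hypothesis $0<\beta\gamma<1$ of \cref{thm:non_convex_det} (and of Lemmas~\ref{lem:weight_updates} and~\ref{lem:momentum_error_deterministic}) holds. With $\beta=\frac{1}{mHT}$ and $0<\gamma<mHT$ we get $0<\beta\gamma=\frac{\gamma}{mHT}<1$, so the theorem applies for every fixed $\mu\in(0,1]$.

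Next I would evaluate the terms one at a time with $\alpha=\sqrt{B/(LT)}$:
\begin{itemize}
\item The first term is $\frac{2B}{\alpha T}=2\sqrt{LB/T}$.
\item The terms $\frac{2L\alpha}{\mu}=\frac{2}{\mu}\sqrt{LB/T}$ and $\frac{L}{2}\alpha=\frac12\sqrt{LB/T}$ are also of order $\sqrt{LB/T}$, with $\mu$ treated as a fixed constant.
\item The term $\frac{2H}{\mu T}$ is $O(1/T)$.
\item Since $mH\beta=1/T$, the term $\frac{2mH^2\beta}{\mu}=\frac{2H}{\mu T}$ is $O(1/T)$.
\item The coupling term becomes $mHB\frac{\beta}{\alpha}=\frac{B}{T\alpha}=\sqrt{LB/T}$.
\end{itemize}
Collecting these gives a bound of $\bigl(\tfrac{7}{2}+\tfrac{2}{\mu}\bigr)\sqrt{LB/T}+\tfrac{4H}{\mu T}$. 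This is $O(\sqrt{LB/T}+1/T)=O(T^{-1/2})$, because $1/T\le 1/\sqrt{T}$ for $T\ge1$.

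The only step requiring care is the coupling term $mHB\beta/\alpha$. It is what forces $\beta$ to be of order $1/T$ rather than $1/\sqrt{T}$: a larger weight step would make $\beta/\alpha$ fail to decay like $T^{-1/2}$. Checking that the choice $\beta=1/(mHT)$ balances this term exactly against $\alpha$ is the main point. I would also note explicitly that the hidden constants depend on $\mu$, $H$, $L$ and $B$ but not on $T$, which is why $\mu$ must be fixed for the stated rate.
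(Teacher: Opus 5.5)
Your proposal is correct and follows the paper's proof: substitute $\alpha$ and $\beta$ into the six-term bound of \cref{thm:non_convex_det} to get exactly $\bigl(\tfrac{7}{2}+\tfrac{2}{\mu}\bigr)\sqrt{LB/T}+\tfrac{4H}{\mu T}$. Your explicit check that $0<\beta\gamma<1$ under $0<\gamma<mHT$ is a small addition that the paper leaves implicit.
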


\begin{proof}
    Since $\alpha=\sqrt{\frac{B}{LT}}$, $\beta=\frac{1}{mHT}$ and $0<\gamma<mHT$ in Theorem \ref{thm:non_convex_det}, we have 
    \begin{align*}
        &\frac{1}{T}\sum_{t=1}^T
        \min_{\boldsymbol{z}_t^\ast \in \Delta_m}
        \|\nabla L(\boldsymbol{\Theta_t})
        \boldsymbol{z}_t^\ast \|_{S_1} \\
        &\leq \frac{2B}{\alpha T} + \frac{2H}{\mu T} + \frac{2L\alpha}{\mu} + \frac{2mH^2}{\mu}\beta
        + mHB\frac{\beta}{\alpha}
        + \frac{L}{2} \alpha \\
        &= (\frac{7}{2}+\frac{2}{\mu})\sqrt{\frac{LB}{T}} + \frac{4H}{\mu}\cdot\frac{1}{T}\\
        &= O(\sqrt{\frac{LB}{T}} + \frac{1}{T}) = O(1/\sqrt{T}). 
    \end{align*}
\end{proof}

\subsection{Convergence Analysis for Stochastic Gradient}
\label{section:algo_stochastic_case}

\begin{theorem}[Stochastic Convergence]\label{thm:stochastic_convergence_app}
    Under the same assumption as Theorem~\ref{thm:deterministic_convergence}. We assume that the stochastic gradient estimator is unbiased and has variance bounded by $\sigma$. Choosing $\alpha = \sqrt{\frac{\mu B}{TL}}$, $\beta = \frac{1}{(H+\sqrt{m}\rho\sigma)mT}$, $0<\gamma<(H+\sqrt{m}\rho\sigma)mT$ and $\mu = \min\{ \frac{\sqrt{LB}}{\sigma\rho\sqrt{T}},1\}$, where $\rho := \sqrt{\min \{p,q\}}$, then the iterations of Algorithm \ref{alg:moon} satisfy 
    \begin{equation}
            \frac{1}{T}\sum_{t=1}^T \mathbb{E}\left[\min_{\boldsymbol{z}_t^\ast \in \Delta_m}\|\nabla L(\boldsymbol{\Theta_t})\boldsymbol{z}_t^\ast \|_{S_1}\right] \leq \mathcal{O}(1/T^{1/4}). 
    \end{equation}
\end{theorem}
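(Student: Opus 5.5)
The proof mirrors the deterministic argument of Theorem~\ref{thm:non_convex_det}, with stochastic task gradients $\widetilde{\nabla}\ell^i(\boldsymbol{\Theta}_t)$ replacing exact ones in Steps 4 and 7 of Algorithm~\ref{alg:moon}. Write $\boldsymbol{G}_t=\sum_i z_{i,t}\nabla\ell^i(\boldsymbol{\Theta}_t)$ and $\widetilde{\boldsymbol{G}}_t=\sum_i z_{i,t}\widetilde{\nabla}\ell^i(\boldsymbol{\Theta}_t)$. Interpret the variance bound as $\mathbb{E}\|\widetilde{\nabla}\ell^i-\nabla\ell^i\|_F^2\le\sigma^2$. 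Then $\|\cdot\|_{S_1}\le\rho\|\cdot\|_F$ gives $\mathbb{E}\|\widetilde{\nabla}\ell^i\|_{S_1}\le H+\rho\sigma$, and a union over the $m$ tasks gives the crude bound $\mathbb{E}\|\boldsymbol{\delta}_t\|_\infty\le H+\sqrt{m}\rho\sigma$.

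\textbf{Step 1: weight drift.} Redo Lemma~\ref{lem:weight_updates} with $\widetilde H:=H+\sqrt{m}\rho\sigma$ in place of $H$, giving $\mathbb{E}\|\boldsymbol{z}_{t+1}-\boldsymbol{z}_t\|_\infty\le\beta\widetilde H$. The condition $0<\gamma<\widetilde H mT$ ensures $\beta\gamma<1$. If a pathwise bound is needed for the logit recursion, work in expectation throughout; the recursion is linear, so unrolling still works.

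\textbf{Step 2: descent.} The descent step is unchanged. $L$-smoothness with $\|\boldsymbol{W}_t\|_{S_\infty}\le1$ and $\langle\boldsymbol{M}_t,\boldsymbol{W}_t\rangle=\|\boldsymbol{M}_t\|_{S_1}$ gives
\[
\|\boldsymbol{G}_t\|_{S_1}\le\frac{1}{\alpha}\Big(\textstyle\sum_i z_{i,t}\ell^i(\boldsymbol{\Theta}_t)-\sum_i z_{i,t+1}\ell^i(\boldsymbol{\Theta}_{t+1})\Big)+\frac{mB\|\boldsymbol{z}_{t+1}-\boldsymbol{z}_t\|_\infty}{\alpha}+\frac{L\alpha}{2}+2\|\boldsymbol{G}_t-\boldsymbol{M}_t\|_{S_1}.
\]
Telescoping and taking expectations yields $\frac{2B}{\alpha T}+mB\widetilde H\beta/\alpha+L\alpha/2+\frac{2}{T}\sum_t\mathbb{E}\|\boldsymbol{E}_t\|_{S_1}$, where $\boldsymbol{E}_t=\boldsymbol{M}_t-\boldsymbol{G}_t$.

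\textbf{Step 3: momentum error.} This is the main obstacle. Write $\boldsymbol{\epsilon}_t=\widetilde{\boldsymbol{G}}_t-\boldsymbol{G}_t$. Then
\[
\boldsymbol{E}_t=(1-\mu)\boldsymbol{E}_{t-1}+(1-\mu)(\boldsymbol{G}_{t-1}-\boldsymbol{G}_t)+\mu\boldsymbol{\epsilon}_t .
\]
Unrolling splits $\boldsymbol{E}_t$ into three parts.
\begin{itemize}
\item A bias part, handled exactly as in Lemma~\ref{lem:momentum_error_deterministic}. It is bounded by $(1-\mu)^{t-1}H+(L\alpha+mH\widetilde H\beta)/\mu$.
\item A noise part $\mu\sum_{j\le t}(1-\mu)^{t-j}\boldsymbol{\epsilon}_j$.
\item The $t=1$ initialization term.
\end{itemize}
The key point for the noise part is that the $\boldsymbol{\epsilon}_j$ form a martingale difference sequence. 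Here $\boldsymbol{z}_j$ and $\boldsymbol{\Theta}_j$ are determined before sampling at step $j$, which needs the samples of step $j$ to be independent of $\boldsymbol{z}_j$ (true, since $\boldsymbol{z}_j$ is computed from step $j-1$ quantities). Pass to the Frobenius norm, where cross terms vanish, and then back:
\[
\mathbb{E}\Big\|\mu\textstyle\sum_j(1-\mu)^{t-j}\boldsymbol{\epsilon}_j\Big\|_{S_1}\le\rho\Big(\mu^2\sum_j(1-\mu)^{2(t-j)}\sigma^2\Big)^{1/2}\le\rho\sigma\sqrt{\mu}.
\]
This uses $\mathbb{E}\|\boldsymbol{\epsilon}_j\|_F^2\le\sigma^2$, by convexity over the simplex weights. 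The $\rho$ factor is the price of the spectral geometry.

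\textbf{Step 4: rates.} Combining the steps,
\[
\frac1T\sum_t\mathbb{E}\,\mathcal{G}(\boldsymbol{\Theta}_t)\lesssim\frac{B}{\alpha T}+\frac{H}{\mu T}+\frac{L\alpha}{\mu}+\frac{mH\widetilde H\beta}{\mu}+\frac{mB\widetilde H\beta}{\alpha}+L\alpha+\rho\sigma\sqrt{\mu}.
\]
Plug in $\alpha=\sqrt{\mu B/(TL)}$ and $\beta=1/(\widetilde H mT)$. The dominant terms are then $\sqrt{LB/(\mu T)}$, $\rho\sigma\sqrt\mu$, $H/(\mu T)$ and lower-order $1/(\mu T)$ terms. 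Balancing the first two gives $\mu=\sqrt{LB}/(\sigma\rho\sqrt T)$. Each of those terms then becomes of order $(LB)^{1/4}(\rho\sigma)^{1/2}T^{-1/4}$, and the $1/(\mu T)$ terms become $O(T^{-1/2})$. If instead $\mu=1$ is selected (small noise), the bound reduces to the $O(T^{-1/2})$ deterministic-type rate plus a noise term of the same order. Finally, $\mathcal{G}(\boldsymbol{\Theta}_t)\le\|\boldsymbol{G}_t\|_{S_1}$ because $\boldsymbol{z}_t\in\Delta_m$, which gives the stated $\mathcal{O}(T^{-1/4})$ bound.
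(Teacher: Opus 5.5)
Your proposal is correct and follows the paper's proof essentially step for step. The shared steps are the stochastic weight-drift bound with $H+\sqrt{m}\rho\sigma$, the pathwise descent inequality with telescoping, and the split of the momentum error into a deterministic drift part and a martingale-difference noise part bounded by $\sqrt{\mu}\rho\sigma$ through the Frobenius norm. You also balance $\sqrt{LB/(\mu T)}$ against $\rho\sigma\sqrt{\mu}$ exactly as the paper's corollary does. One small precision: the $\sqrt{m}$ in $\mathbb{E}\|\boldsymbol{\delta}_t\|_\infty\le H+\sqrt{m}\rho\sigma$ comes from $\max_i\|\widetilde{\nabla}\ell^i-\nabla\ell^i\|_F\le\bigl(\sum_i\|\widetilde{\nabla}\ell^i-\nabla\ell^i\|_F^2\bigr)^{1/2}$ followed by Jensen. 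A union (sum) bound would only give a factor $m$.
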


We now establish the convergence of MOON in the stochastic setting.
Let $\mathcal F_t$ denote the sigma-algebra containing all information
available immediately before sampling $\zeta_t$. In particular,
$\boldsymbol{\Theta}_t$ is $\mathcal F_t$-measurable. For every
$i\in[m]$ and $t\geq1$, assume that, almost surely,
\[
\mathbb E_t
\left[
g^i(\boldsymbol{\Theta}_t;\zeta_t)
\right]
=
\nabla\ell^i(\boldsymbol{\Theta}_t),
\]
and
\[
\mathbb E_t
\left[
\left\|
g^i(\boldsymbol{\Theta}_t;\zeta_t)
-
\nabla\ell^i(\boldsymbol{\Theta}_t)
\right\|_F^2
\right]
\leq \sigma^2,
\]
where
$\mathbb E_t[\cdot]:=\mathbb E[\cdot\mid\mathcal F_t]$
and $\sigma>0$ is independent of $t$ and $T$.

No independence is required among the stochastic-gradient errors
associated with different objectives at the same iteration. Therefore,
the stochastic gradients may be computed using the same sample
$\zeta_t$. Independence across iterations is not required either,
provided that the above conditional assumptions hold.

\begin{lemma}\label{lem:weight_updates_stochastic}
    Assume that Assumption \ref{ass:bounded_gradient} holds, and that $0 < \beta \gamma <1$. Then for any $t = 1,\dots,T$, the weight updates in Algorithm \ref{alg:moon} satisfy the following inequality
    \begin{equation}
        \mathbb{E} \left[ \| \boldsymbol{z}_{t} - \boldsymbol{z}_{t+1} \|_{\infty} \right] \leq \beta (H+\sqrt{m}\rho\sigma).
    \end{equation}
\end{lemma}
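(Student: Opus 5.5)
The plan mirrors the proof of Lemma~\ref{lem:weight_updates}, with the deterministic bound on $\boldsymbol{\delta}_t$ replaced by a bound that involves the stochastic gradients. In the stochastic version of Algorithm~\ref{alg:moon}, $\boldsymbol{\delta}_t$ has entries $\langle \boldsymbol{W}_t, g^i(\boldsymbol{\Theta}_t;\zeta_t)\rangle$. We again start from the $1/2$-Lipschitz property of softmax in $\ell_\infty$:
\[
\|\boldsymbol{z}_{t+1}-\boldsymbol{z}_t\|_\infty\le \frac{\beta}{2}\left(\|\boldsymbol{\delta}_t\|_\infty+\gamma\|\boldsymbol{\xi}_t\|_\infty\right).
\]
It therefore suffices to bound $\mathbb{E}\|\boldsymbol{\delta}_t\|_\infty\le H+\sqrt{m}\rho\sigma$ and $\gamma\,\mathbb{E}\|\boldsymbol{\xi}_t\|_\infty\le H+\sqrt{m}\rho\sigma$, and then take expectations.

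\textbf{Bounding $\boldsymbol{\delta}_t$.} Since $\|\boldsymbol{W}_t\|_{S_\infty}\le 1$, spectral--nuclear duality gives
\[
|\delta_{i,t}|\le \|g^i(\boldsymbol{\Theta}_t;\zeta_t)\|_{S_1}\le \|\nabla\ell^i(\boldsymbol{\Theta}_t)\|_{S_1}+\|g^i-\nabla\ell^i\|_{S_1}.
\]
The first term is at most $H$ by Assumption~\ref{ass:bounded_gradient}. For the noise term we use $\|\cdot\|_{S_1}\le\sqrt{\min\{p,q\}}\,\|\cdot\|_F=\rho\|\cdot\|_F$. To control the maximum over $i$ we bound $\max_i a_i\le(\sum_i a_i^2)^{1/2}$. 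Then Jensen and the conditional variance assumption give
\[
\mathbb{E}_t\max_i\|g^i-\nabla\ell^i\|_F\le\Big(\sum_i\mathbb{E}_t\|g^i-\nabla\ell^i\|_F^2\Big)^{1/2}\le\sqrt{m}\,\sigma.
\]
This yields $\mathbb{E}_t\|\boldsymbol{\delta}_t\|_\infty\le H+\sqrt{m}\rho\sigma$ almost surely. By the tower property the same bound holds for the unconditional expectation.

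\textbf{Bounding $\boldsymbol{\xi}_t$.} Because $0<\beta\gamma<1$, the recursion $\boldsymbol{\xi}_{t+1}=(1-\beta\gamma)\boldsymbol{\xi}_t-\beta\boldsymbol{\delta}_t$ gives the pathwise inequality
\[
\|\boldsymbol{\xi}_{t+1}\|_\infty\le(1-\beta\gamma)\|\boldsymbol{\xi}_t\|_\infty+\beta\|\boldsymbol{\delta}_t\|_\infty.
\]
Taking expectations and using the previous step, $e_t:=\mathbb{E}\|\boldsymbol{\xi}_t\|_\infty$ satisfies $e_{t+1}\le(1-\beta\gamma)e_t+\beta(H+\sqrt{m}\rho\sigma)$. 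With $e_1=0$, unrolling as in Lemma~\ref{lem:weight_updates} gives $e_t\le (H+\sqrt{m}\rho\sigma)/\gamma$. Combining the two bounds,
\[
\mathbb{E}\|\boldsymbol{z}_{t+1}-\boldsymbol{z}_t\|_\infty\le\frac{\beta}{2}\cdot 2(H+\sqrt{m}\rho\sigma)=\beta(H+\sqrt{m}\rho\sigma).
\]

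\textbf{Main obstacle.} The delicate step is the maximum over $i$ inside the expectation of $\|\boldsymbol{\delta}_t\|_\infty$. The errors for different objectives may be arbitrarily correlated, since they can share the sample $\zeta_t$. We also have no per-$i$ almost-sure bound. The crude bound $\max\le\ell_2$ over the $m$ tasks, followed by Jensen, handles this without any independence, and it is exactly what produces the $\sqrt{m}$ factor. The other point to check is that $\boldsymbol{W}_t$ depends on $\zeta_t$. This causes no difficulty, because only $\|\boldsymbol{W}_t\|_{S_\infty}\le 1$ is used, which holds pathwise, and never unbiasedness of $\boldsymbol{\delta}_t$. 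Likewise, the $\boldsymbol{\xi}$ recursion is linear with nonnegative coefficients, so expectations pass through the pathwise inequality directly.
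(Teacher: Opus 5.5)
Your proposal is correct and follows the paper's proof essentially step for step. The shared chain is: the $1/2$-Lipschitz reduction through softmax; the pathwise bound $\|\boldsymbol{\delta}_t\|_\infty\le H+\max_i\|g^i-\nabla\ell^i\|_{S_1}$ using $\|\boldsymbol{W}_t\|_{S_\infty}\le 1$; the comparison $\|\cdot\|_{S_1}\le\rho\|\cdot\|_F$ combined with $\max\le\ell_2$ and Jensen to produce $\sqrt{m}\rho\sigma$; and the expected logit recursion unrolled from $\boldsymbol{\xi}_1=\boldsymbol{0}$, giving $\gamma\,\mathbb{E}\|\boldsymbol{\xi}_t\|_\infty\le H+\sqrt{m}\rho\sigma$. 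Your remarks on correlated task noise and on $\boldsymbol{W}_t$ depending on $\zeta_t$ are accurate, and they spell out points the paper handles only implicitly.
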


\begin{proof}
Let
\[
\boldsymbol{\eta}_{i,t}
:=
g^i(\boldsymbol{\Theta}_t;\zeta_t)
-
\nabla\ell^i(\boldsymbol{\Theta}_t)
\]
denote the stochastic-gradient error.

The softmax mapping is \(1/2\)-Lipschitz with respect to the
\(\ell_\infty\)-norm. Therefore,
\begin{align}
\|\boldsymbol{z}_{t+1}-\boldsymbol{z}_t\|_\infty
&\le
\frac{1}{2}
\|\boldsymbol{\xi}_{t+1}-\boldsymbol{\xi}_t\|_\infty
\nonumber\\
&=
\frac{\beta}{2}
\|
\boldsymbol{\delta}_t+\gamma\boldsymbol{\xi}_t
\|_\infty.
\label{eq:stochastic_softmax_bound}
\end{align}

We first bound \(\mathbb{E}\|\boldsymbol{\delta}_t\|_\infty\).
By the duality between the spectral and nuclear norms and the
almost-sure bound
\(\|\boldsymbol{W}_t\|_{S_\infty}\le1\),
\begin{align}
\|\boldsymbol{\delta}_t\|_\infty
&=
\max_{i\in[m]}
\left|
\left\langle
\boldsymbol{W}_t,
g^i(\boldsymbol{\Theta}_t;\zeta_t)
\right\rangle
\right|
\nonumber\\
&\le
\max_{i\in[m]}
\left\|
g^i(\boldsymbol{\Theta}_t;\zeta_t)
\right\|_{S_1}
\nonumber\\
&\le
\max_{i\in[m]}
\|\nabla\ell^i(\boldsymbol{\Theta}_t)\|_{S_1}
+
\max_{i\in[m]}
\|\boldsymbol{\eta}_{i,t}\|_{S_1}
\nonumber\\
&\le
H+
\max_{i\in[m]}
\|\boldsymbol{\eta}_{i,t}\|_{S_1}.
\label{eq:stochastic_delta_pointwise}
\end{align}

For every matrix
\(\boldsymbol{A}\in\mathbb{R}^{p\times q}\),
\[
\|\boldsymbol{A}\|_{S_1}
\le
\sqrt{\min\{p,q\}}\,
\|\boldsymbol{A}\|_F
=
\rho\|\boldsymbol{A}\|_F.
\]
Consequently,
\begin{align}
\mathbb{E}_t
\left[
\max_{i\in[m]}
\|\boldsymbol{\eta}_{i,t}\|_{S_1}
\right]
&\le
\rho\,
\mathbb{E}_t
\left[
\max_{i\in[m]}
\|\boldsymbol{\eta}_{i,t}\|_F
\right]
\nonumber\\
&\le
\rho\,
\mathbb{E}_t
\left[
\left(
\sum_{i=1}^m
\|\boldsymbol{\eta}_{i,t}\|_F^2
\right)^{1/2}
\right]
\nonumber\\
&\le
\rho
\left(
\sum_{i=1}^m
\mathbb{E}_t
\|\boldsymbol{\eta}_{i,t}\|_F^2
\right)^{1/2}
\nonumber\\
&\le
\rho\sqrt{m}\,\sigma.
\label{eq:max_stochastic_error}
\end{align}

Moreover, by $L$-smoothness and $\Theta_{t+1}=\Theta_t-\alpha W_t$, the loss-difference approximation satisfies
\[
\left|
\frac{\ell^i(\Theta_t)-\ell^i(\Theta_{t+1})}{\alpha}
-\left\langle\nabla\ell^i(\Theta_t),W_t\right\rangle
\right|
\leq
\frac{L\alpha}{2}\|W_t\|_{S_\infty}^2
\leq
\frac{L\alpha}{2},
\]
where the last inequality follows from $\|W_t\|_{S_\infty}\leq 1$.

The penultimate inequality follows from Jensen's inequality.
Combining \eqref{eq:stochastic_delta_pointwise} and
\eqref{eq:max_stochastic_error}, and then taking total expectation,
gives
\begin{equation}
\mathbb{E}
\|\boldsymbol{\delta}_t\|_\infty
\le
H+\rho\sqrt{m}\,\sigma.
\label{eq:expected_stochastic_delta}
\end{equation}

For brevity, define
\[
C_\sigma:=H+\rho\sqrt{m}\,\sigma.
\]
The dual-logit update can be written as
\[
\boldsymbol{\xi}_{t+1}
=
(1-\beta\gamma)\boldsymbol{\xi}_t
-
\beta\boldsymbol{\delta}_t.
\]
Since \(0<\beta\gamma<1\), we have
\begin{align}
\mathbb{E}\|\boldsymbol{\xi}_{t+1}\|_\infty
&\le
(1-\beta\gamma)
\mathbb{E}\|\boldsymbol{\xi}_t\|_\infty
+
\beta
\mathbb{E}\|\boldsymbol{\delta}_t\|_\infty
\nonumber\\
&\le
(1-\beta\gamma)
\mathbb{E}\|\boldsymbol{\xi}_t\|_\infty
+
\beta C_\sigma.
\label{eq:stochastic_xi_recursion}
\end{align}

Unrolling \eqref{eq:stochastic_xi_recursion} and using
\(\boldsymbol{\xi}_1=\boldsymbol{0}\), we obtain
\begin{align}
\mathbb{E}\|\boldsymbol{\xi}_t\|_\infty
&\le
\beta C_\sigma
\sum_{s=0}^{t-2}(1-\beta\gamma)^s
\nonumber\\
&=
\frac{C_\sigma}{\gamma}
\left[
1-(1-\beta\gamma)^{t-1}
\right]
\le
\frac{C_\sigma}{\gamma}.
\label{eq:expected_xi_bound}
\end{align}

Finally, taking expectations in \eqref{eq:stochastic_softmax_bound} and applying the triangle inequality separate the contributions of $\boldsymbol{\delta}_t$ and $\boldsymbol{\xi}_t$. By \eqref{eq:expected_stochastic_delta} and \eqref{eq:expected_xi_bound}, these two contributions are bounded by $C_\sigma$ and $\gamma(C_\sigma/\gamma)=C_\sigma$, respectively. Therefore, we obtain
\begin{align*}
\mathbb{E}
\|\boldsymbol{z}_{t+1}-\boldsymbol{z}_t\|_\infty
&\le
\frac{\beta}{2}
\left(
\mathbb{E}\|\boldsymbol{\delta}_t\|_\infty
+
\gamma\mathbb{E}\|\boldsymbol{\xi}_t\|_\infty
\right)\\
&\le
\frac{\beta}{2}
(C_\sigma+C_\sigma)\\
&=
\beta
\left(
H+\rho\sqrt{m}\,\sigma
\right).
\end{align*}
This completes the proof.
\end{proof}

\begin{lemma}\label{lem:momentum_error_stochastic}
Assume that Assumptions \ref{ass:L-smoothness} and \ref{ass:bounded_gradient} hold. Suppose that $\boldsymbol{M}_0=\boldsymbol{0}$, $0<\mu\leq 1$, and $0<\beta\gamma<1$. Then, for any $t=1,\ldots,T$, the iterates generated by Algorithm \ref{alg:moon} satisfy
\begin{equation}
\begin{aligned}
&\mathbb{E}\left[\left\|\boldsymbol{M}_t-\sum_{i=1}^m z_{i,t}\nabla\ell^i(\boldsymbol{\Theta}_t)\right\|_{S_1}\right] \\
&\leq (1-\mu)^{t-1}\left\|\sum_{i=1}^m z_{i,1}\nabla\ell^i(\boldsymbol{\Theta}_1)\right\|_{S_1}+\frac{L\alpha}{\mu} \\
&+\sqrt{\mu}\rho\sigma+\frac{(H+\sqrt{m}\rho\sigma)mH}{\mu}\beta.
\end{aligned}
\end{equation}
\end{lemma}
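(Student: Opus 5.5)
We prove the bound by adapting the argument of Lemma~\ref{lem:momentum_error_deterministic}, with the stochastic gradients now entering the momentum. Write $\boldsymbol{G}_t:=\sum_i z_{i,t}\nabla\ell^i(\boldsymbol{\Theta}_t)$ for the true weighted gradient and $\widetilde{\boldsymbol{G}}_t:=\sum_i z_{i,t} g^i(\boldsymbol{\Theta}_t;\zeta_t)=\boldsymbol{G}_t+\boldsymbol{N}_t$, where $\boldsymbol{N}_t:=\sum_i z_{i,t}\boldsymbol{\eta}_{i,t}$. The momentum recursion is $\boldsymbol{M}_t=(1-\mu)\boldsymbol{M}_{t-1}+\mu\widetilde{\boldsymbol{G}}_t$. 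Setting $\boldsymbol{E}_t:=\boldsymbol{M}_t-\boldsymbol{G}_t$ and unrolling from $\boldsymbol{M}_0=\boldsymbol{0}$ gives
\[
\boldsymbol{E}_t=-(1-\mu)^{t}\boldsymbol{G}_1+\sum_{j=2}^{t}(1-\mu)^{t-j+1}(\boldsymbol{G}_{j-1}-\boldsymbol{G}_j)+\mu\sum_{j=1}^{t}(1-\mu)^{t-j}\boldsymbol{N}_j .
\]
This splits the error into a deterministic drift part, identical in form to the deterministic lemma, and a noise part.

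For the first term, the nuclear norm is at most $(1-\mu)^{t-1}\|\boldsymbol{G}_1\|_{S_1}$.

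For the drift, expand $\boldsymbol{G}_j-\boldsymbol{G}_{j-1}$ exactly as in Lemma~\ref{lem:momentum_error_deterministic}:
\begin{itemize}
\item The smoothness piece is bounded by $L\alpha$, because $\|\boldsymbol{\Theta}_j-\boldsymbol{\Theta}_{j-1}\|_{S_\infty}\le\alpha$ holds almost surely from $\|\boldsymbol{W}_t\|_{S_\infty}\le 1$.
\item The weight-change piece is bounded by $mH\|\boldsymbol{z}_j-\boldsymbol{z}_{j-1}\|_\infty$.
\end{itemize}
Taking expectations and applying Lemma~\ref{lem:weight_updates_stochastic} gives $\mathbb{E}\|\boldsymbol{G}_j-\boldsymbol{G}_{j-1}\|_{S_1}\le L\alpha+mH(H+\sqrt m\rho\sigma)\beta$. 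Summing the geometric weights, which total at most $1/\mu$, yields the terms $L\alpha/\mu$ and $(H+\sqrt m\rho\sigma)mH\beta/\mu$.

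The main obstacle is the noise term, which must give $\sqrt\mu\,\rho\sigma$ and not the crude $\rho\sigma$ obtained from the triangle inequality. The plan has three steps.
\begin{enumerate}
\item Convert to the Frobenius norm via $\|\cdot\|_{S_1}\le\rho\|\cdot\|_F$, then apply Jensen to get $\mathbb{E}\|\cdot\|_F\le(\mathbb{E}\|\cdot\|_F^2)^{1/2}$.
\item Expand the square of $\mu\sum_j(1-\mu)^{t-j}\boldsymbol{N}_j$. The cross terms vanish because $(\boldsymbol{N}_j)$ is a martingale-difference sequence. This uses the fact that $\boldsymbol{z}_j$ is $\mathcal F_j$-measurable ($\boldsymbol{z}_j$ is formed from $\boldsymbol{\xi}_j$ before $\zeta_j$ is sampled) together with conditional unbiasedness, so $\mathbb{E}_j\boldsymbol{N}_j=\boldsymbol{0}$ and $\mathbb{E}\langle\boldsymbol{N}_j,\boldsymbol{N}_k\rangle=0$ for $j<k$ by the tower property.
\item Bound each diagonal term using convexity of $\|\cdot\|_F^2$ over the simplex weights: $\mathbb{E}_j\|\boldsymbol{N}_j\|_F^2\le\sum_i z_{i,j}\mathbb{E}_j\|\boldsymbol{\eta}_{i,j}\|_F^2\le\sigma^2$. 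No independence across tasks is needed for this.
\end{enumerate}
Then
\[
\mathbb{E}\Big\|\mu\sum_j(1-\mu)^{t-j}\boldsymbol{N}_j\Big\|_F^2\le\mu^2\sigma^2\sum_{s\ge0}(1-\mu)^{2s}=\frac{\mu^2\sigma^2}{1-(1-\mu)^2}=\frac{\mu\sigma^2}{2-\mu}\le\mu\sigma^2 .
\]
Taking the square root and multiplying by $\rho$ gives $\sqrt\mu\,\rho\sigma$. Adding the three contributions via the triangle inequality completes the bound.
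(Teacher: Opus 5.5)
Your proposal is correct and follows essentially the same route as the paper's proof. It uses the identical unrolled decomposition of $\boldsymbol{M}_t-\boldsymbol{G}_t$ into an initial term, a drift term bounded via smoothness and Lemma~\ref{lem:weight_updates_stochastic}, and a noise term bounded via $\|\cdot\|_{S_1}\le\rho\|\cdot\|_F$, Jensen, and vanishing martingale-difference cross terms; your explicit computation $\mu^2\sigma^2/(1-(1-\mu)^2)=\mu\sigma^2/(2-\mu)\le\mu\sigma^2$ just fills in the geometric-sum step the paper states without detail.
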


\begin{proof}
Define the weighted true gradient and the aggregated stochastic-gradient noise as
\begin{align*}
\boldsymbol{G}_t&:=\sum_{i=1}^m z_{i,t}\nabla\ell^i(\boldsymbol{\Theta}_t),\\
\boldsymbol{\epsilon}_t&:=\sum_{i=1}^m z_{i,t}\left(\boldsymbol{g}^i(\boldsymbol{\Theta}_t;\zeta_t)-\nabla\ell^i(\boldsymbol{\Theta}_t)\right).
\end{align*}
The momentum update can then be written as
\[
\boldsymbol{M}_t=(1-\mu)\boldsymbol{M}_{t-1}+\mu(\boldsymbol{G}_t+\boldsymbol{\epsilon}_t).
\]
Unrolling this recursion gives
\[
\boldsymbol{M}_t=(1-\mu)^t\boldsymbol{M}_0+\mu\sum_{j=1}^t(1-\mu)^{t-j}\left(\boldsymbol{G}_j+\boldsymbol{\epsilon}_j\right).
\]
Subtracting $\boldsymbol{G}_t$ and rearranging the deterministic terms yields
\begin{align*}
\boldsymbol{M}_t-\boldsymbol{G}_t
={}&(1-\mu)^t(\boldsymbol{M}_0-\boldsymbol{G}_1)\nonumber\\
&+\sum_{j=2}^t(1-\mu)^{t-j+1}(\boldsymbol{G}_{j-1}-\boldsymbol{G}_j)\nonumber\\
&+\mu\sum_{j=1}^t(1-\mu)^{t-j}\boldsymbol{\epsilon}_j.
\label{eq:stochastic_momentum_decomposition}
\end{align*}

We first bound the variation of the weighted true gradient. For every $j\geq 2$,
\begin{align*}
\boldsymbol{G}_j-\boldsymbol{G}_{j-1}
={}&\sum_{i=1}^m z_{i,j}\left(\nabla\ell^i(\boldsymbol{\Theta}_j)-\nabla\ell^i(\boldsymbol{\Theta}_{j-1})\right)\\
&+\sum_{i=1}^m(z_{i,j}-z_{i,j-1})\nabla\ell^i(\boldsymbol{\Theta}_{j-1}).
\end{align*}
By Assumptions \ref{ass:L-smoothness} and \ref{ass:bounded_gradient}, $\|\boldsymbol{W}_{j-1}\|_{S_\infty}\leq 1$, and the stochastic weight-change bound
\[
\mathbb{E}\left[\|\boldsymbol{z}_j-\boldsymbol{z}_{j-1}\|_\infty\right]\leq\beta(H+\sqrt{m}\rho\sigma),
\]
we obtain
\begin{align*}
\mathbb{E}\left[\|\boldsymbol{G}_j-\boldsymbol{G}_{j-1}\|_{S_1}\right]
&\leq L\mathbb{E}\left[\|\boldsymbol{\Theta}_j-\boldsymbol{\Theta}_{j-1}\|_{S_\infty}\right] \\
&+H\sum_{i=1}^m\mathbb{E}\left[|z_{i,j}-z_{i,j-1}|\right]\nonumber\\
&\leq L\alpha+(H+\sqrt{m}\rho\sigma)mH\beta.
\label{eq:weighted_gradient_variation}
\end{align*}

We next bound the stochastic-noise term. Let $\mathcal{F}_j$ denote the history before sampling $\zeta_j$. Since $\boldsymbol{z}_j$ and $\boldsymbol{\Theta}_j$ are $\mathcal{F}_j$-measurable, the conditional unbiasedness assumption gives
\[
\mathbb{E}[\boldsymbol{\epsilon}_j\mid\mathcal{F}_j]=\boldsymbol{0}.
\]
Moreover, by the convexity of the squared Frobenius norm,
\begin{align*}
&\mathbb{E}\left[\|\boldsymbol{\epsilon}_j\|_F^2\mid\mathcal{F}_j\right] \\
&=\mathbb{E}\left[\left\|\sum_{i=1}^m z_{i,j}\left(\boldsymbol{g}^i(\boldsymbol{\Theta}_j;\zeta_j)-\nabla\ell^i(\boldsymbol{\Theta}_j)\right)\right\|_F^2\middle|\mathcal{F}_j\right]\\
&\leq\sum_{i=1}^m z_{i,j}\mathbb{E}\left[\left\|\boldsymbol{g}^i(\boldsymbol{\Theta}_j;\zeta_j)-\nabla\ell^i(\boldsymbol{\Theta}_j)\right\|_F^2\middle|\mathcal{F}_j\right] \leq\sigma^2.
\end{align*}
Because $\{\boldsymbol{\epsilon}_j\}_{j\geq 1}$ is a martingale-difference sequence, the cross terms vanish. Using $\|\boldsymbol{A}\|_{S_1}\leq\rho\|\boldsymbol{A}\|_F$ and Jensen's inequality, we have
\begin{align*}
&\mathbb{E}\left[\left\|\mu\sum_{j=1}^t(1-\mu)^{t-j}\boldsymbol{\epsilon}_j\right\|_{S_1}\right]\nonumber\\
&\leq\rho\sqrt{\mathbb{E}\left[\left\|\mu\sum_{j=1}^t(1-\mu)^{t-j}\boldsymbol{\epsilon}_j\right\|_F^2\right]}\nonumber\\
&\leq\rho\mu\sigma\sqrt{\sum_{j=1}^t(1-\mu)^{2(t-j)}}\nonumber \leq\sqrt{\mu}\rho\sigma.
\label{eq:stochastic_noise_bound}
\end{align*}

Applying the triangle inequality to the preceding decomposition, using $\boldsymbol{M}_0=\boldsymbol{0}$, and taking total expectation separate the momentum error into three parts: the initial error, the accumulated variation of the weighted true gradients, and the accumulated stochastic-gradient noise. The weighted-gradient variation at each step is bounded by $L\alpha+(H+\sqrt{m}\rho\sigma)mH\beta$, while the expected nuclear norm of the accumulated noise is bounded by $\sqrt{\mu}\rho\sigma$. Therefore, we obtain
\begin{align*}
&\mathbb{E}\left[\|\boldsymbol{M}_t-\boldsymbol{G}_t\|_{S_1}\right]
\leq{}(1-\mu)^t\|\boldsymbol{G}_1\|_{S_1}+\sqrt{\mu}\rho\sigma\\
&+\sum_{j=2}^t(1-\mu)^{t-j+1}\left(L\alpha+(H+\sqrt{m}\rho\sigma)mH\beta\right). 
\end{align*}
Finally, since
\[
(1-\mu)^t\leq(1-\mu)^{t-1}
\]
and
\[
\sum_{j=2}^t(1-\mu)^{t-j+1}\leq\frac{1}{\mu},
\]
we conclude that
\[
\begin{aligned}
&\mathbb{E}\left[\|\boldsymbol{M}_t-\boldsymbol{G}_t\|_{S_1}\right]
\leq(1-\mu)^{t-1}\|\boldsymbol{G}_1\|_{S_1}+\frac{L\alpha}{\mu}\\
&+\sqrt{\mu}\rho\sigma+\frac{(H+\sqrt{m}\rho\sigma)mH}{\mu}\beta.
\end{aligned}
\]
Substituting the definition of $\boldsymbol{G}_t$ completes the proof.
\end{proof}

Now we are ready to prove Theorem~\ref{thm:stochastic_convergence_app} 

\begin{theorem}\label{thm:non_convex_stochastic}
    Let Assumption \ref{ass:L-smoothness} to \ref{ass:bounded_function} hold, let $\ell^i:\mathbb{R}^{p\times q}\rightarrow\mathbb{R}$ be non-convex for each $i\in [m]$. Then the iterations of Algorithm \ref{alg:moon} satisfy the following inequality 
    \begin{equation}
    \begin{aligned}
        &\frac{1}{T}\sum_{t=1}^T \mathbb{E}\left[\min_{\boldsymbol{z}_t^\ast \in \Delta_m}
        \|\nabla L(\boldsymbol{\Theta_t})
        \boldsymbol{z}_t^\ast \|_{S_1}\right] \\ 
        &\leq \frac{2B}{\alpha T} + \frac{2H}{\mu T} + (H+\sqrt{m}\rho\sigma)mB\frac{\beta}{\alpha} + \frac{L}{2}\alpha \\
        &+ \frac{2L\alpha}{\mu} + 2\sqrt{\mu}\rho\sigma + \frac{2mH(H+\sqrt{m}\rho\sigma)}{\mu}\beta.
    \end{aligned}
    \end{equation}
\end{theorem}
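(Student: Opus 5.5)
The argument follows the proof of Theorem~\ref{thm:non_convex_det} step by step, now inside expectations, and uses Lemma~\ref{lem:weight_updates_stochastic} and Lemma~\ref{lem:momentum_error_stochastic} in place of their deterministic counterparts. Write $\boldsymbol{G}_t=\sum_i z_{i,t}\nabla\ell^i(\boldsymbol{\Theta}_t)$ for the weighted true gradient. Note that $\boldsymbol{M}_t$, and therefore $\boldsymbol{W}_t=\operatorname{Polar}(\boldsymbol{M}_t)$, is built from stochastic gradients, while the descent inequality uses the true smoothness bound. Since $\|\boldsymbol{W}_t\|_{S_\infty}\le 1$ almost surely and $\sum_i z_{i,t}=1$, weighting the $L$-smoothness upper bound gives
\[
\sum_i z_{i,t}\ell^i(\boldsymbol{\Theta}_{t+1})\le \sum_i z_{i,t}\ell^i(\boldsymbol{\Theta}_t)-\alpha\langle \boldsymbol{G}_t,\boldsymbol{W}_t\rangle+\tfrac{L\alpha^2}{2}.
\]
Next use $\langle\boldsymbol{M}_t,\boldsymbol{W}_t\rangle=\|\boldsymbol{M}_t\|_{S_1}$ and spectral--nuclear duality. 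Writing $\langle \boldsymbol{G}_t,\boldsymbol{W}_t\rangle=\|\boldsymbol{M}_t\|_{S_1}+\langle \boldsymbol{G}_t-\boldsymbol{M}_t,\boldsymbol{W}_t\rangle$ and using $\|\boldsymbol{M}_t\|_{S_1}\ge\|\boldsymbol{G}_t\|_{S_1}-\|\boldsymbol{G}_t-\boldsymbol{M}_t\|_{S_1}$, we get
\[
-\langle \boldsymbol{G}_t,\boldsymbol{W}_t\rangle\le-\|\boldsymbol{G}_t\|_{S_1}+2\|\boldsymbol{G}_t-\boldsymbol{M}_t\|_{S_1}.
\]
This step is pathwise, so no expectation is needed yet.

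Rearranging as in the deterministic proof, add and subtract $\sum_i z_{i,t+1}\ell^i(\boldsymbol{\Theta}_{t+1})$. This yields a telescoping difference of the scalarized losses, a weight-drift term $\frac{1}{\alpha}\sum_i(z_{i,t+1}-z_{i,t})\ell^i(\boldsymbol{\Theta}_{t+1})$, the term $\frac{L\alpha}{2}$, and $2\|\boldsymbol{G}_t-\boldsymbol{M}_t\|_{S_1}$. Now take total expectations.
\begin{itemize}
\item The telescoping sum over $t=1,\dots,T$ is bounded by $2B$ pathwise, using Assumption~\ref{ass:bounded_function}.
\item The drift term is bounded by $\frac{B}{\alpha}\,m\,\mathbb{E}\|\boldsymbol{z}_{t+1}-\boldsymbol{z}_t\|_\infty$, since the $\ell_1$ norm is at most $m$ times the $\ell_\infty$ norm. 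By Lemma~\ref{lem:weight_updates_stochastic} this is at most $(H+\sqrt{m}\rho\sigma)mB\beta/\alpha$.
\item The momentum error is handled by Lemma~\ref{lem:momentum_error_stochastic}. Summing $(1-\mu)^{t-1}\|\boldsymbol{G}_1\|_{S_1}\le(1-\mu)^{t-1}H$ over $t$ gives at most $H/\mu$. The remaining terms contribute $T\bigl(\frac{L\alpha}{\mu}+\sqrt{\mu}\rho\sigma+\frac{(H+\sqrt{m}\rho\sigma)mH\beta}{\mu}\bigr)$, and each is doubled by the factor $2$.
\end{itemize}
Dividing by $T$ produces exactly the seven terms in the statement.

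Finally, since $\boldsymbol{z}_t\in\Delta_m$ is feasible, $\min_{\boldsymbol{z}}\|\nabla L(\boldsymbol{\Theta}_t)\boldsymbol{z}\|_{S_1}\le\|\boldsymbol{G}_t\|_{S_1}$ pathwise. Taking expectations preserves this inequality and finishes the bound.

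The main point needing care is measurability in the weight-drift and momentum terms. Here $\boldsymbol{z}_{t+1}$ depends on the sample $\zeta_t$, and $\ell^i(\boldsymbol{\Theta}_{t+1})$ is bounded only along the iterates. I would therefore bound the drift pathwise by $B\|\boldsymbol{z}_{t+1}-\boldsymbol{z}_t\|_1$ before taking expectations, so that no independence is needed and only the expectation bound of Lemma~\ref{lem:weight_updates_stochastic} is used. For the same reason the smoothness inequality should be applied to the true losses, with $\boldsymbol{W}_t$ treated as an arbitrary random direction of spectral norm at most one. All noise enters only through $\boldsymbol{M}_t$ and $\boldsymbol{\delta}_t$, which the two lemmas already control.
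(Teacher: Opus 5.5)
Your proposal is correct and matches the paper's proof essentially step for step. Both arguments take the $\boldsymbol{z}_t$-weighted smoothness bound, then use the pathwise inequality $-\langle\boldsymbol{G}_t,\boldsymbol{W}_t\rangle\le-\|\boldsymbol{G}_t\|_{S_1}+2\|\boldsymbol{G}_t-\boldsymbol{M}_t\|_{S_1}$, add and subtract the $\boldsymbol{z}_{t+1}$-weighted loss to telescope, control the drift and momentum-error terms with Lemma~\ref{lem:weight_updates_stochastic} and Lemma~\ref{lem:momentum_error_stochastic}, and finish with the feasibility of $\boldsymbol{z}_t$.
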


\begin{proof}
    For brevity, define
        \[
        \boldsymbol{G}_t
        :=
        \sum_{i=1}^m
        z_{i,t}\nabla\ell^i(\boldsymbol{\Theta}_t).
        \]
        
    From the L-smoothness of each objective function, we have
    \begin{align*}
        &\sum_{i=1}^m z_{i,t} \ell^i(\boldsymbol{\Theta}_{t+1})
        \leq \sum_{i=1}^m z_{i,t} \ell^i(\boldsymbol{\Theta}_t) + \frac{L}{2}
        \| \boldsymbol{\Theta}_{t+1}
        - \boldsymbol{\Theta}_t \|_{S_{\infty}}^2\\
        &\quad + \left\langle
        \sum_{i=1}^m z_{i,t} \nabla\ell^i(\boldsymbol{\Theta}_t),
        \boldsymbol{\Theta}_{t+1} - \boldsymbol{\Theta}_t
        \right\rangle \\
        &\leq \sum_{i=1}^m z_{i,t} \ell^i(\boldsymbol{\Theta}_t) + \frac{L}{2}
        \| \boldsymbol{\Theta}_{t+1}
        - \boldsymbol{\Theta}_t \|_{S_{\infty}}^2 + \left\langle
        \boldsymbol{M}_t,
        \boldsymbol{\Theta}_{t+1} - \boldsymbol{\Theta}_t
        \right\rangle \\
        &\quad + \left\langle
        \sum_{i=1}^m z_{i,t}\nabla\ell^i(\boldsymbol{\Theta}_t) - \boldsymbol{M}_t,
        \boldsymbol{\Theta}_{t+1} - \boldsymbol{\Theta}_t
        \right\rangle.  
    \end{align*}
    
    Since $\boldsymbol{\Theta}_{t+1}=\boldsymbol{\Theta}_t - \alpha\boldsymbol{W}_t$, we have 
    \begin{align*}
        &\left\langle
        \boldsymbol{M}_t,
        \boldsymbol{\Theta}_{t+1} - \boldsymbol{\Theta}_t
        \right\rangle \\
        &= \langle \boldsymbol{M}_t,
        -\alpha \boldsymbol{W}_t \rangle = -\alpha \| \boldsymbol{M}_t \|_{S_1}. 
    \end{align*}

    The spectral-nuclear norm duality bounds the last inner product by
    \[
    \left\|
    \boldsymbol{G}_t-\boldsymbol{M}_t
    \right\|_{S_1}
    \left\|
    \boldsymbol{\Theta}_{t+1}-\boldsymbol{\Theta}_t
    \right\|_{S_\infty}.
    \]
    Moreover,
    $\|\boldsymbol{\Theta}_{t+1}-\boldsymbol{\Theta}_t\|_{S_\infty}
    =\alpha\|\boldsymbol{W}_t\|_{S_\infty}\leq\alpha$, and the reverse triangle inequality gives
    \[
    \|\boldsymbol{M}_t\|_{S_1}
    \geq
    \|\boldsymbol{G}_t\|_{S_1}
    -
    \|\boldsymbol{G}_t-\boldsymbol{M}_t\|_{S_1}.
    \]
    Substituting these two bounds into the smoothness inequality yields
    \begin{align*}
        &\sum_{i=1}^m z_{i,t} \ell^i(\boldsymbol{\Theta}_{t+1})
        \leq \sum_{i=1}^m z_{i,t} \ell^i(\boldsymbol{\Theta}_t) + \frac{L}{2}\alpha^2 -\alpha \left\| \boldsymbol{M}_t \right\|_{S_1} \\
        &+ \|
        \sum_{i=1}^m z_{i,t}\nabla\ell^i(\boldsymbol{\Theta}_t) - \boldsymbol{M}_t\|_{S_1} \| \boldsymbol{\Theta}_{t+1} - \boldsymbol{\Theta}_t
        \|_{S_\infty} \\
        &\leq \sum_{i=1}^m z_{i,t} \ell^i(\boldsymbol{\Theta}_t) + \frac{L}{2}\alpha^2 -\alpha \left\| \sum_{i=1}^m z_{i,t} \nabla\ell^i(\boldsymbol{\Theta}_t) \right\|_{S_1} \\
        &+ 2\alpha \left\| \sum_{i=1}^m z_{i,t} \nabla\ell^i(\boldsymbol{\Theta}_t) - \boldsymbol{M}_t \right\|_{S_1}. 
    \end{align*}

    Rearranging the preceding inequality, adding and subtracting
    $\sum_{i=1}^m z_{i,t+1}\ell^i(\boldsymbol{\Theta}_{t+1})$ to create a telescoping term, and taking total expectation gives the first inequality below. For the second inequality, we use
    \[
    \begin{aligned}
    &\mathbb{E}\left[
    \left|
    \sum_{i=1}^m
    (z_{i,t+1}-z_{i,t})
    \ell^i(\boldsymbol{\Theta}_{t+1})
    \right|
    \right]\\
    &\leq
    mB\mathbb{E}
    \left[
    \|\boldsymbol{z}_{t+1}-\boldsymbol{z}_t\|_\infty
    \right]
    \leq
    mB\beta(H+\sqrt{m}\rho\sigma),
    \end{aligned}
    \]
    together with Lemma~\ref{lem:momentum_error_stochastic}. Therefore,
    \begin{align*}
        &\mathbb{E}\left[\left\|
        \sum_{i=1}^m z_{i,t}
        \nabla\ell^i(\boldsymbol{\Theta}_{t})
        \right\|_{S_1}\right] \\
        &\leq \frac{1}{\alpha}\mathbb{E}\left[
        \sum_{i=1}^m z_{i,t} \ell^i(\boldsymbol{\Theta}_t)
        - \sum_{i=1}^m z_{i,t+1}
        \ell^i(\boldsymbol{\Theta}_{t+1}) \right] \\
        &\quad + \frac{1}{\alpha}\mathbb{E}\left[
        \sum_{i=1}^m (z_{i,t+1} -z_{i,t})
        \ell^i(\boldsymbol{\Theta}_{t+1}) \right]
        + \frac{L}{2} \alpha \\
        &\quad + 2\mathbb{E}\left[\left\| \sum_{i=1}^m z_{i,t} \nabla\ell^i(\boldsymbol{\Theta}_t) - \boldsymbol{M}_t \right\|_{S_1}\right] \\ 
        &\leq \frac{1}{\alpha}\mathbb{E}\left[
        \sum_{i=1}^m z_{i,t} \ell^i(\boldsymbol{\Theta}_t)
        - \sum_{i=1}^m z_{i,t+1}
        \ell^i(\boldsymbol{\Theta}_{t+1}) \right] \\
        &+ 2(1-\mu)^{t-1}\|\sum_{i=1}^m z_{i,1}\nabla\ell^i(\boldsymbol{\Theta}_{1})\|_{S_1} \\
        &+ (H+\sqrt{m}\rho\sigma)mB\frac{\beta}{\alpha} + \frac{L}{2}\alpha + \frac{2L\alpha}{\mu} \\
        &+ 2\sqrt{\mu}\rho\sigma + \frac{2mH(H+\sqrt{m}\rho\sigma)}{\mu}\beta. 
    \end{align*}

    Summing the preceding inequality from $t=1$ to $T$, the weighted-loss terms telescope. Since $\boldsymbol{z}_t\in\Delta_m$ and $|\ell^i(\boldsymbol{\Theta}_t)|\leq B$, the two endpoint terms are bounded by $2B$. Moreover,
    \[
    \sum_{t=1}^T(1-\mu)^{t-1}\leq\frac{1}{\mu},
    \qquad
    \left\|
    \sum_{i=1}^m
    z_{i,1}\nabla\ell^i(\boldsymbol{\Theta}_1)
    \right\|_{S_1}
    \leq H.
    \]
    Consequently,
    \begin{align*}
        &\sum_{t=1}^T \mathbb{E}\left[\left\|
        \sum_{i=1}^m z_{i,t}
        \nabla\ell^i(\boldsymbol{\Theta}_{t})
        \right\|_{S_1}\right] \\
        &\leq \frac{1}{\alpha}\mathbb{E}\left[
        \sum_{i=1}^m z_{i,1} \ell^i(\boldsymbol{\Theta}_1)
        - \sum_{i=1}^m z_{i,T+1}
        \ell^i(\boldsymbol{\Theta}_{T+1}) \right] \\
        &+ \frac{2H}{\mu} + (H+\sqrt{m}\rho\sigma)TmB\frac{\beta}{\alpha} + \frac{TL}{2}\alpha + \frac{2TL\alpha}{\mu} \\
        &+ 2T\sqrt{\mu}\rho\sigma + \frac{2TmH(H+\sqrt{m}\rho\sigma)}{\mu}\beta \\
        &\leq \frac{2B}{\alpha} + \frac{2H}{\mu} + (H+\sqrt{m}\rho\sigma)TmB\frac{\beta}{\alpha} + \frac{TL}{2}\alpha \\
        &+ \frac{2TL\alpha}{\mu} + 2T\sqrt{\mu}\rho\sigma + \frac{2TmH(H+\sqrt{m}\rho\sigma)}{\mu}\beta. 
    \end{align*}

    Finally, since $\boldsymbol{z}_t\in\Delta_m$ is feasible,
    \[
    \min_{\boldsymbol{z}_t^\ast\in\Delta_m}
    \left\|
    \nabla L(\boldsymbol{\Theta}_t)\boldsymbol{z}_t^\ast
    \right\|_{S_1}
    \leq
    \left\|
    \sum_{i=1}^m
    z_{i,t}\nabla\ell^i(\boldsymbol{\Theta}_t)
    \right\|_{S_1}.
    \]
    Dividing the preceding inequality by $T$ and applying this pointwise bound proves the theorem.
\end{proof}

There is an immediate corollary of the above Theorem \ref{thm:non_convex_stochastic}. 

\begin{corollary}\label{cor:convergence_rate_nonconvex_stochastic}
    Setting $\alpha = \sqrt{\frac{\mu B}{TL}}$, $\beta = \frac{1}{(H+\sqrt{m}\rho\sigma)mT}$, $0<\gamma<(H+\sqrt{m}\rho\sigma)mT$ and $\mu = \min\{ \frac{\sqrt{LB}}{\sigma\rho\sqrt{T}},1\}$ in \cref{thm:non_convex_stochastic},  the following holds 
    \begin{equation}
        \begin{aligned}
            &\frac{1}{T}\sum_{t=1}^T \mathbb{E}\left[\min_{\boldsymbol{z}_t^\ast \in \Delta_m}\|\nabla L(\boldsymbol{\Theta_t})\boldsymbol{z}_t^\ast \|_{S_1}\right] \\ 
            &\leq \mathcal{O}\left( \sqrt{\frac{LB}{T}} + \left( \frac{\rho^2\sigma^2 LB}{T} \right)^{1/4} \right) = \mathcal{O}(1/T^{1/4}). 
        \end{aligned}
    \end{equation}
\end{corollary}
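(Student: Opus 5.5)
The corollary follows by substituting the parameter choices into the bound of \cref{thm:non_convex_stochastic} and checking each of the seven terms. Write $C_\sigma := H+\sqrt{m}\rho\sigma$. I would treat two regimes separately: $\mu = \frac{\sqrt{LB}}{\sigma\rho\sqrt{T}} \le 1$, which is the generic large-$T$ case, and $\mu = 1$, which only occurs when $\sigma\rho\sqrt{T}\le\sqrt{LB}$.

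\textbf{Terms involving $\beta$.} With $\beta = 1/(C_\sigma mT)$ we have $C_\sigma m B\beta/\alpha = B/(\alpha T)$, so this term merges with $2B/(\alpha T)$. The remaining $\beta$-term is
\[
\frac{2mHC_\sigma\beta}{\mu} = \frac{2H}{\mu T},
\]
which merges with the other $H/(\mu T)$ term. Also $\gamma < C_\sigma mT$ gives $\beta\gamma<1$, so the hypothesis of \cref{thm:non_convex_stochastic} and Lemmas~\ref{lem:weight_updates_stochastic}--\ref{lem:momentum_error_stochastic} holds. After these reductions the bound reads
\[
\frac{3B}{\alpha T} + \frac{4H}{\mu T} + \frac{L\alpha}{2} + \frac{2L\alpha}{\mu} + 2\sqrt{\mu}\rho\sigma .
\]

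\textbf{Substituting $\alpha$.} With $\alpha=\sqrt{\mu B/(TL)}$:
\begin{itemize}
\item $B/(\alpha T) = \sqrt{LB/(\mu T)}$,
\item $L\alpha/\mu = \sqrt{LB/(\mu T)}$,
\item $L\alpha \le \sqrt{LB/T}$, since $\mu\le1$.
\end{itemize}
So the bound becomes $O\bigl(\sqrt{LB/(\mu T)} + \sqrt{\mu}\rho\sigma + H/(\mu T) + \sqrt{LB/T}\bigr)$. The choice of $\mu$ balances $\sqrt{LB/(\mu T)}$ against $\sqrt\mu\rho\sigma$.

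\textbf{Regime $\mu<1$.} Here $\mu = \sqrt{LB}/(\sigma\rho\sqrt T)$, so
\[
\sqrt{\mu}\,\rho\sigma = \sqrt{LB/(\mu T)} = (LB)^{1/4}(\rho\sigma)^{1/2}T^{-1/4} = \left(\frac{\rho^2\sigma^2LB}{T}\right)^{1/4}.
\]
The term $H/(\mu T) = H\sigma\rho/(\sqrt{LB}\,T^{1/2})$ is $O(T^{-1/2})$.

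\textbf{Regime $\mu=1$.} Every term is $O(\sqrt{LB/T} + H/T)$, plus $\rho\sigma$. Since $\rho\sigma\le\sqrt{LB/T}$ in this regime, everything is $O(T^{-1/2})$. Combining both regimes gives the stated $O(\sqrt{LB/T} + (\rho^2\sigma^2LB/T)^{1/4}) = O(T^{-1/4})$.

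The only delicate step is making sure $H/(\mu T)$ does not blow up as $\mu$ shrinks. Because $\mu \gtrsim T^{-1/2}$, this term is only $O(T^{-1/2})$, so it is dominated and the rest is routine.
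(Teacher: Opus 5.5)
Your proposal is correct and follows the paper's proof. Substituting $\beta$ folds the two $\beta$-terms into $B/(\alpha T)$ and $2H/(\mu T)$, and substituting $\alpha$ and $\mu$ balances $\sqrt{LB/(\mu T)}$ against $\sqrt{\mu}\rho\sigma$. This gives exactly the paper's explicit bound $\bigl(\frac{4\sigma\rho H}{LB}+\frac{1}{2}\bigr)\sqrt{\frac{LB}{T}}+7\bigl(\frac{\rho^2\sigma^2LB}{T}\bigr)^{1/4}$.

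Your check that $\beta\gamma<1$ and your separate treatment of the regime $\mu=1$ are worthwhile additions. The paper derives its explicit constants only for $\mu=\sqrt{LB}/(\sigma\rho\sqrt{T})$, so they do not literally cover $\mu=1$, though the $\mathcal{O}$ claim holds there as you show.
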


\begin{proof}
    Since $\alpha = \sqrt{\frac{\mu B}{TL}}$, $\beta = \frac{1}{(H+\sqrt{m}\rho\sigma)mT}$, $0<\gamma<(H+\sqrt{m}\rho\sigma)mT$ and $\mu = \min\{ \frac{\sqrt{LB}}{\sigma\rho\sqrt{T}},1\}$ in Theorem \ref{thm:non_convex_stochastic}, we have 
    \begin{align*}
        &\frac{1}{T}\sum_{t=1}^T \mathbb{E}\left[\min_{\boldsymbol{z}_t^\ast \in \Delta_m}
        \|\nabla L(\boldsymbol{\Theta_t})
        \boldsymbol{z}_t^\ast \|_{S_1}\right] \\ 
        &\leq \frac{2B}{\alpha T} + \frac{2H}{\mu T} + (H+\sqrt{m}\rho\sigma)mB\frac{\beta}{\alpha} + \frac{L}{2}\alpha \\
        &+ \frac{2L\alpha}{\mu} + 2\sqrt{\mu}\rho\sigma + \frac{2mH(H+\sqrt{m}\rho\sigma)}{\mu}\beta \\
        &\leq (\frac{4\sigma\rho H}{LB} + \frac{1}{2})\sqrt{\frac{LB}{T}} + 7\left( \frac{\rho^2\sigma^2 LB}{T} \right)^{1/4} \\ 
        &=\mathcal{O}\left( \sqrt{\frac{LB}{T}} + \left( \frac{\rho^2\sigma^2 LB}{T} \right)^{1/4} \right) = \mathcal{O}(1/T^{1/4}). 
    \end{align*}
\end{proof}

\section{Convergence of Convex Case}

\begin{theorem}\label{thm:convex_deterministic}
Let $\Omega\subseteq\mathbb{R}^{p\times q}$ be a nonempty compact
convex set satisfying
\[
    \operatorname{diam}_{S_\infty}(\Omega)
    :=
    \sup_{\boldsymbol{\Theta},\boldsymbol{\Theta}'\in\Omega}
    \|\boldsymbol{\Theta}-\boldsymbol{\Theta}'\|_{S_\infty}
    \leq D.
\]
Suppose that each objective
$\ell^i:\Omega\rightarrow\mathbb{R}$ is convex and differentiable.
Assume that the iterates generated by Algorithm~\ref{alg:moon}
satisfy
\[
    \boldsymbol{\Theta}_t\in\Omega,
    \qquad
    \boldsymbol z_t\in\Delta_m,
    \qquad t=1,\ldots,T.
\]
Then
\begin{equation}
\begin{aligned}
&\frac{1}{T}\sum_{t=1}^T
\max_{\boldsymbol{\Theta}^\ast\in\Omega}
\min_{\boldsymbol z^\ast\in\Delta_m}
{\boldsymbol z^\ast}^{\top}
\left(
\boldsymbol L(\boldsymbol{\Theta}_t)
-
\boldsymbol L(\boldsymbol{\Theta}^\ast)
\right)
\\
&\qquad\leq
\frac{D}{T}\sum_{t=1}^T
\left\|
\sum_{i=1}^m z_{i,t}
\nabla\ell^i(\boldsymbol{\Theta}_t)
\right\|_{S_1}.
\end{aligned}
\end{equation}
\end{theorem}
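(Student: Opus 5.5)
The plan is to prove the bound termwise: for each fixed $t$, show
\[
\max_{\boldsymbol{\Theta}^\ast\in\Omega}\min_{\boldsymbol z^\ast\in\Delta_m}{\boldsymbol z^\ast}^{\top}\bigl(\boldsymbol L(\boldsymbol{\Theta}_t)-\boldsymbol L(\boldsymbol{\Theta}^\ast)\bigr)\le D\Bigl\|\sum_{i=1}^m z_{i,t}\nabla\ell^i(\boldsymbol{\Theta}_t)\Bigr\|_{S_1},
\]
and then average over $t=1,\ldots,T$. No property of the algorithm is used beyond the feasibility assumptions $\boldsymbol{\Theta}_t\in\Omega$ and $\boldsymbol z_t\in\Delta_m$, so this is a pointwise certificate in the spirit of the proof of Remark 2 (Appendix~\ref{app:remark2_proof}).

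\textbf{Step 1: pass to the algorithm's weights.} Fix $t$ and an arbitrary $\boldsymbol{\Theta}^\ast\in\Omega$. Since $\boldsymbol z_t\in\Delta_m$ is feasible for the inner minimization,
\[
\min_{\boldsymbol z^\ast\in\Delta_m}{\boldsymbol z^\ast}^{\top}\bigl(\boldsymbol L(\boldsymbol{\Theta}_t)-\boldsymbol L(\boldsymbol{\Theta}^\ast)\bigr)\le\sum_{i=1}^m z_{i,t}\bigl(\ell^i(\boldsymbol{\Theta}_t)-\ell^i(\boldsymbol{\Theta}^\ast)\bigr).
\]
The min-over-weights structure is therefore harmless: we only need to control the $\boldsymbol z_t$-scalarized gap.

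\textbf{Step 2: convexity and duality.} Each $\ell^i$ is convex and differentiable on the convex set $\Omega$, and $\boldsymbol{\Theta}_t,\boldsymbol{\Theta}^\ast\in\Omega$. The first-order condition gives
\[
\ell^i(\boldsymbol{\Theta}_t)-\ell^i(\boldsymbol{\Theta}^\ast)\le\langle\nabla\ell^i(\boldsymbol{\Theta}_t),\boldsymbol{\Theta}_t-\boldsymbol{\Theta}^\ast\rangle.
\]
Multiply by $z_{i,t}\ge0$ and sum over $i$ to get $\langle\boldsymbol G_t,\boldsymbol{\Theta}_t-\boldsymbol{\Theta}^\ast\rangle$, where $\boldsymbol G_t=\sum_i z_{i,t}\nabla\ell^i(\boldsymbol{\Theta}_t)$. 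By spectral--nuclear norm duality (Hölder for Schatten norms) and the diameter bound,
\[
\langle\boldsymbol G_t,\boldsymbol{\Theta}_t-\boldsymbol{\Theta}^\ast\rangle\le\|\boldsymbol G_t\|_{S_1}\|\boldsymbol{\Theta}_t-\boldsymbol{\Theta}^\ast\|_{S_\infty}\le D\|\boldsymbol G_t\|_{S_1}.
\]
The right-hand side does not depend on $\boldsymbol{\Theta}^\ast$, so taking $\max_{\boldsymbol{\Theta}^\ast\in\Omega}$ preserves the inequality. The maximum is attained because $\Omega$ is compact and the objective, a minimum of continuous functions, is continuous; in any case a supremum would suffice.

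\textbf{Step 3: average.} Summing over $t$ and dividing by $T$ gives the claim.

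There is no serious obstacle. The one point needing care is the order of quantifiers: we bound the inner $\min$ by its value at the specific $\boldsymbol z_t$ \emph{before} taking the outer $\max$ over $\boldsymbol{\Theta}^\ast$, which is valid because the resulting upper bound is uniform in $\boldsymbol{\Theta}^\ast$. We also need the gradient inequality to hold at points of $\Omega$, including boundary points. Differentiability is assumed on $\Omega$, interpreted as on an open neighborhood or via directional derivatives along feasible segments, which suffices for the first-order convexity inequality.
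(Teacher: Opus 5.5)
Your proposal is correct and follows essentially the same route as the paper: bound the inner minimum by its value at the feasible weights $\boldsymbol z_t$, apply the first-order convexity inequality, spectral--nuclear H\"older duality and the $S_\infty$-diameter bound, then average over $t$. The only cosmetic difference is the order of steps: the paper first takes the maximum to reduce to the scalarized gap $\boldsymbol z_t^\top\boldsymbol L(\boldsymbol\Theta_t)-\min_{\boldsymbol\Theta\in\Omega}\boldsymbol z_t^\top\boldsymbol L(\boldsymbol\Theta)$ and uses compactness to pick a minimizer $\boldsymbol\Theta_t^\ast$, whereas you bound uniformly over every $\boldsymbol\Theta^\ast$ before maximizing, so you never need that minimizer.
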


\begin{proof}
Fix any $t\in\{1,\ldots,T\}$. Since
$\boldsymbol z_t\in\Delta_m$, for every
$\boldsymbol{\Theta}^\ast\in\Omega$, we have
\begin{align}
&\min_{\boldsymbol z^\ast\in\Delta_m}
{\boldsymbol z^\ast}^{\top}
\left(
\boldsymbol L(\boldsymbol{\Theta}_t)
-
\boldsymbol L(\boldsymbol{\Theta}^\ast)
\right)
\nonumber\\
&\qquad\leq
\boldsymbol z_t^{\top}
\left(
\boldsymbol L(\boldsymbol{\Theta}_t)
-
\boldsymbol L(\boldsymbol{\Theta}^\ast)
\right).
\label{eq:choose_zt}
\end{align}
Taking the maximum over
$\boldsymbol{\Theta}^\ast\in\Omega$ on both sides gives
\begin{align}
&\max_{\boldsymbol{\Theta}^\ast\in\Omega}
\min_{\boldsymbol z^\ast\in\Delta_m}
{\boldsymbol z^\ast}^{\top}
\left(
\boldsymbol L(\boldsymbol{\Theta}_t)
-
\boldsymbol L(\boldsymbol{\Theta}^\ast)
\right)
\nonumber\\
&\qquad\leq
\max_{\boldsymbol{\Theta}^\ast\in\Omega}
\boldsymbol z_t^{\top}
\left(
\boldsymbol L(\boldsymbol{\Theta}_t)
-
\boldsymbol L(\boldsymbol{\Theta}^\ast)
\right)
\nonumber\\
&\qquad=
\boldsymbol z_t^{\top}\boldsymbol L(\boldsymbol{\Theta}_t)
-
\min_{\boldsymbol{\Theta}^\ast\in\Omega}
\boldsymbol z_t^{\top}\boldsymbol L(\boldsymbol{\Theta}^\ast).
\label{eq:scalarized_gap}
\end{align}

Because $\Omega$ is compact and each $\ell^i$ is continuous, there
exists
\[
\boldsymbol{\Theta}_t^\ast
\in
\argmin_{\boldsymbol{\Theta}\in\Omega}
\boldsymbol z_t^\top\boldsymbol L(\boldsymbol{\Theta}).
\]
Consequently, the right-hand side of
\eqref{eq:scalarized_gap} is
\[
\sum_{i=1}^m z_{i,t}
\left(
\ell^i(\boldsymbol{\Theta}_t)
-
\ell^i(\boldsymbol{\Theta}_t^\ast)
\right).
\]

By the convexity of each $\ell^i$, we have
\[
\ell^i(\boldsymbol{\Theta}_t)
-
\ell^i(\boldsymbol{\Theta}_t^\ast)
\leq
\left\langle
\nabla\ell^i(\boldsymbol{\Theta}_t),
\boldsymbol{\Theta}_t-\boldsymbol{\Theta}_t^\ast
\right\rangle.
\]
Since $z_{i,t}\geq0$, multiplying by $z_{i,t}$ and summing over
$i$ yields
\begin{align}
&\boldsymbol z_t^\top\boldsymbol L(\boldsymbol{\Theta}_t)
-
\boldsymbol z_t^\top\boldsymbol L(\boldsymbol{\Theta}_t^\ast)
\nonumber\\
&\qquad\leq
\left\langle
\sum_{i=1}^m z_{i,t}
\nabla\ell^i(\boldsymbol{\Theta}_t),
\boldsymbol{\Theta}_t-\boldsymbol{\Theta}_t^\ast
\right\rangle.
\label{eq:convexity_bound}
\end{align}

Using the duality between the nuclear norm and the spectral norm,
\[
|\langle \boldsymbol A,\boldsymbol B\rangle|
\leq
\|\boldsymbol A\|_{S_1}
\|\boldsymbol B\|_{S_\infty},
\]
we obtain
\begin{align}
&\left\langle
\sum_{i=1}^m z_{i,t}
\nabla\ell^i(\boldsymbol{\Theta}_t),
\boldsymbol{\Theta}_t-\boldsymbol{\Theta}_t^\ast
\right\rangle
\nonumber\\
&\qquad\leq
\left\|
\sum_{i=1}^m z_{i,t}
\nabla\ell^i(\boldsymbol{\Theta}_t)
\right\|_{S_1}
\|\boldsymbol{\Theta}_t-\boldsymbol{\Theta}_t^\ast\|_{S_\infty}
\nonumber\\
&\qquad\leq
D
\left\|
\sum_{i=1}^m z_{i,t}
\nabla\ell^i(\boldsymbol{\Theta}_t)
\right\|_{S_1},
\label{eq:diameter_bound}
\end{align}
where the last inequality follows from
$\boldsymbol{\Theta}_t,\boldsymbol{\Theta}_t^\ast\in\Omega$ and
$\operatorname{diam}_{S_\infty}(\Omega)\leq D$.

Combining \eqref{eq:scalarized_gap}--\eqref{eq:diameter_bound}, we
obtain, for every $t$,
\begin{align}
&\max_{\boldsymbol{\Theta}^\ast\in\Omega}
\min_{\boldsymbol z^\ast\in\Delta_m}
{\boldsymbol z^\ast}^{\top}
\left(
\boldsymbol L(\boldsymbol{\Theta}_t)
-
\boldsymbol L(\boldsymbol{\Theta}^\ast)
\right)
\nonumber\\
&\qquad\leq
D
\left\|
\sum_{i=1}^m z_{i,t}
\nabla\ell^i(\boldsymbol{\Theta}_t)
\right\|_{S_1}.
\end{align}
Summing this inequality over $t=1,\ldots,T$ and dividing by $T$
completes the proof.
\end{proof}

\section{Approximation to Avoid Calculating Multiple Gradients}

In this section, we present an approximation of the weight update rule to avoid calculating multiple gradients in \cref{alg:moon}. 

In \cref{alg:moon}, we update $\boldsymbol{z}$ using a single gradient-descent step as follows
\begin{align*}
    \boldsymbol{z}_{t+1}
    &= \boldsymbol{z}_{t}
    -\beta \nabla_{\boldsymbol{z}}
    \left(\frac{1}{2}
    \| \boldsymbol{z}_t^\top
    \nabla \mathcal{L}(\boldsymbol{\Theta}_{t})
    \|_{S_{1}}^{2}\right),
\end{align*}
where $\mathcal{L}(\boldsymbol{\Theta}) = (\ell^1(\Theta), \dots, \ell^m(\Theta))^\top$. Let $[\boldsymbol{a}]_i$ denote the $i$th component of a vector. However, the update rule still necessitates information about the gradients of all the objectives. Therefore, we consider approximating the gradient $\nabla_{\boldsymbol{z}}(\frac{1}{2}\| \boldsymbol{z}_t^\top \nabla \mathcal{L}(\boldsymbol{\Theta}_{t}) \|_{S_{1}}^{2})$ using first-order Taylor’s approximation:
\begin{align*}
    &\left[\nabla_{\boldsymbol{z}}
    \frac{1}{2}\|\boldsymbol{z}_t^{\top}
      \nabla\mathcal{L}(\boldsymbol{\Theta}_t)\|_{S_1}^2
    \right]_i\\
    &= \|\boldsymbol{G}_t\|_{S_1} \left\langle
        \hat{\nabla}\|\boldsymbol{G}_t\|_{S_1},
        \nabla\ell^i(\Theta_t)
      \right\rangle \\
    &= \|\boldsymbol{G}_t\|_{S_1} \left\langle
        \boldsymbol{U}_t\boldsymbol{V}_t^{\top},
        \nabla\ell^i(\Theta_t)
      \right\rangle \\
    &= \|\boldsymbol{G}_t\|_{S_1} \left\langle
        \boldsymbol{W}_t,\nabla\ell^i(\Theta_t)
      \right\rangle \\
    &\approx \frac{\|\boldsymbol{G}_t\|_{S_1}}{\alpha}
      \left[\ell^i(\Theta_t)-\ell^i(\Theta_{t+1})\right], \\
    &\quad i\in[m].
\end{align*}
where $\hat{\nabla}\| \boldsymbol{G}_t \|_{S_{1}}$ represents an element of the subdifferential of the nuclear norm at $\boldsymbol{G}_t$. This identity is exact in the no-momentum case $\mu=1$, where $\boldsymbol{M}_t=\boldsymbol{G}_t$ and $\boldsymbol{W}_t=\operatorname{Polar}(\boldsymbol{G}_t)$.

The factor $\| \boldsymbol{G}_t \|_{S_{1}}$ is a nonnegative scalar that does not change the update direction. Consistent with the practical algorithm, we retain the scale-normalized direction and control the update magnitude through the step size. Then consequently, we have an approximate method for solving $\boldsymbol{z}_{t}$ without backpropagation:
\begin{align}
    \boldsymbol{z}_{t+1}
    &= \boldsymbol{z}_t-\beta\tilde{\boldsymbol{\delta}}_t, \\
    [\tilde{\boldsymbol{\delta}}_t]_i
    &= \frac{1}{\alpha}
      \left[\ell^i(\Theta_t)-\ell^i(\Theta_{t+1})\right],
      \quad i\in[m].
\end{align}

In practical MOON, we use the resulting scale-normalized dual direction in the logit-space update of Algorithm~\ref{alg:moon}, which corresponds to an exponentiated-gradient-type online update on the simplex.

\section{Effect of Finite-Step Newton--Schulz Approximation Error}\label{app:NS_error}

The convergence analysis before is stated in terms of the
exact polar factor of the momentum matrix. In practice, however, Algorithm~\ref{alg:moon} computes the update direction using only a finite number of Newton--Schulz iterations. Consequently, the resulting update direction may differ from an exact polar factor.

In this section, we quantify the effect of this approximation error on the deterministic convergence guarantee. We focus on the deterministic setting because it isolates the
effect of the finite-step Newton--Schulz approximation from
the error caused by stochastic-gradient noise. The stochastic
case follows by the same argument after conditioning on the
history up to iteration $t$ and taking total expectation. We therefore present only the
deterministic proof to avoid repeating the same argument. We show that, under a uniform relative approximation condition, the finite-step Newton--Schulz implementation preserves the $\mathcal{O}(T^{-1/2})$ Pareto-stationarity rate, up to a multiplicative factor depending on the approximation accuracy.

For each iteration $t$, define the weighted gradient matrix
\begin{equation*}
    \boldsymbol G_t
    :=
    \sum_{i=1}^{m}
    z_{i,t}\nabla \ell^i(\boldsymbol\Theta_t),
    \qquad
    \boldsymbol z_t\in\Delta_m,
\end{equation*}
where
\[
    \Delta_m
    :=
    \left\{
        \boldsymbol z\in\mathbb R_+^m:
        \sum_{i=1}^{m}z_i=1
    \right\}.
\]
The momentum matrix is updated according to
\begin{equation*}
    \boldsymbol M_t
    =
    (1-\mu)\boldsymbol M_{t-1}
    +
    \mu\boldsymbol G_t,
    \qquad
    0<\mu\leq1.
\end{equation*}

Let
\[
    \boldsymbol P_t
    \in
    \operatorname{Polar}(\boldsymbol M_t)
\]
denote a polar factor of $\boldsymbol M_t$. In particular,
$\boldsymbol P_t$ satisfies
\begin{equation}
    \|\boldsymbol P_t\|_{S_\infty}\leq1,
    \qquad
    \langle\boldsymbol M_t,\boldsymbol P_t\rangle
    =
    \|\boldsymbol M_t\|_{S_1}.
    \label{eq:NS_exact_polar_properties}
\end{equation}
These properties remain valid when $\boldsymbol M_t$ is rank
deficient; in this case, $\boldsymbol P_t$ should be understood as
any polar factor satisfying
\eqref{eq:NS_exact_polar_properties}.

Let $\widetilde{\boldsymbol W}_t$ denote the direction returned by
the finite-step Newton--Schulz iteration. The actual parameter update
is
\begin{equation}
    \boldsymbol\Theta_{t+1}
    =
    \boldsymbol\Theta_t
    -
    \alpha\widetilde{\boldsymbol W}_t.
    \label{eq:NS_parameter_update}
\end{equation}

We impose the following approximation condition.

\begin{assumption}\label{ass:finite_NS_error}
For every $t\geq1$, there exists a polar factor
$\boldsymbol P_t\in\operatorname{Polar}(\boldsymbol M_t)$ such that
\begin{equation}
    \varepsilon_{t,q}
    :=
    \left\|
        \widetilde{\boldsymbol W}_t-\boldsymbol P_t
    \right\|_{S_\infty}
    \leq
    \varepsilon_q
    <1,
    \label{eq:NS_error_assumption}
\end{equation}
where $q$ is the number of Newton--Schulz iterations and
$\varepsilon_q$ is independent of $t$.
\end{assumption}

Assumption~\ref{ass:finite_NS_error} is a uniform operator-norm
approximation condition. Through nuclear--spectral norm duality, it
induces a relative loss of alignment proportional to
$\varepsilon_q\|\boldsymbol M_t\|_{S_1}$. Unlike an additive
descent-error condition, it does not introduce a nonvanishing residual
term into the final stationarity bound. Therefore, any fixed number
$q$ of Newton--Schulz iterations preserves convergence to Pareto
stationarity, provided that the uniform bound
\[
    \sup_{t\geq 1}
    \left\|
        \widetilde{\boldsymbol W}_{t,q}-\boldsymbol P_t
    \right\|_{S_\infty}
    \leq \varepsilon_q < 1
\]
holds along the optimization trajectory. In this case, the
finite-step approximation affects only the constants in the
convergence bound.

Moreover, under the standard convergence regime of the employed
Newton--Schulz iteration, the approximation error is nonincreasing
with respect to the number of Newton--Schulz steps and satisfies
\[
    \varepsilon_{q+1}\leq\varepsilon_q,
    \qquad
    \lim_{q\to\infty}\varepsilon_q=0.
\]
Consequently, the effect of the finite-step approximation becomes
weaker as $q$ increases, and the resulting convergence bound approaches
the corresponding exact-polar bound. These properties also explain
why Assumption~\ref{ass:finite_NS_error} is reasonable under the
standard Newton--Schulz convergence conditions.

The following immediate consequence will be used repeatedly.

\begin{lemma}\label{lem:NS_direction_norm}
Under Assumption~\ref{ass:finite_NS_error},
\begin{equation}
    \left\|
        \widetilde{\boldsymbol W}_t
    \right\|_{S_\infty}
    \leq
    1+\varepsilon_{t,q}
    \leq
    1+\varepsilon_q.
\end{equation}
\end{lemma}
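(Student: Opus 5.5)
The bound follows from the triangle inequality for the spectral norm together with the two facts already recorded for $\boldsymbol P_t$. I would write
\[
\widetilde{\boldsymbol W}_t = \boldsymbol P_t + \bigl(\widetilde{\boldsymbol W}_t - \boldsymbol P_t\bigr),
\]
where $\boldsymbol P_t\in\operatorname{Polar}(\boldsymbol M_t)$ is the polar factor supplied by Assumption~\ref{ass:finite_NS_error}. Applying $\|\cdot\|_{S_\infty}$ and its subadditivity gives
\[
\|\widetilde{\boldsymbol W}_t\|_{S_\infty}\le \|\boldsymbol P_t\|_{S_\infty}+\|\widetilde{\boldsymbol W}_t-\boldsymbol P_t\|_{S_\infty}.
\]

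The first term is at most $1$ by \eqref{eq:NS_exact_polar_properties}. This holds for any polar factor: if $\boldsymbol P_t=\boldsymbol U_t\boldsymbol V_t^\top$ comes from a compact SVD, its nonzero singular values all equal one. In the degenerate case $\boldsymbol M_t=\boldsymbol 0$, the convention $\operatorname{Polar}(\boldsymbol 0)=\boldsymbol 0$ gives norm $0$.

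The second term equals $\varepsilon_{t,q}$ by definition in \eqref{eq:NS_error_assumption}. Together these give $\|\widetilde{\boldsymbol W}_t\|_{S_\infty}\le 1+\varepsilon_{t,q}$. The uniform bound $\varepsilon_{t,q}\le\varepsilon_q$, also part of Assumption~\ref{ass:finite_NS_error}, then gives the second inequality.

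The only point needing care is that the bound $\|\boldsymbol P_t\|_{S_\infty}\le1$ must hold for the particular polar factor chosen in the assumption, including when $\boldsymbol M_t$ is rank deficient. This is why I rely on \eqref{eq:NS_exact_polar_properties} rather than on a full-rank SVD. With that settled, the argument is a one-line triangle inequality.
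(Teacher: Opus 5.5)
Your proof is correct and matches the paper's argument: the triangle inequality $\|\widetilde{\boldsymbol W}_t\|_{S_\infty}\le\|\boldsymbol P_t\|_{S_\infty}+\|\widetilde{\boldsymbol W}_t-\boldsymbol P_t\|_{S_\infty}$, with $\|\boldsymbol P_t\|_{S_\infty}\le 1$ taken from \eqref{eq:NS_exact_polar_properties} and the second term bounded by $\varepsilon_{t,q}\le\varepsilon_q$ from Assumption~\ref{ass:finite_NS_error}. Relying on \eqref{eq:NS_exact_polar_properties}, rather than a full-rank SVD, is exactly what makes the bound valid for the rank-deficient polar factors the paper allows.
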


\begin{proof}
By the triangle inequality and
\eqref{eq:NS_exact_polar_properties},
\[
\begin{aligned}
    \left\|
        \widetilde{\boldsymbol W}_t
    \right\|_{S_\infty}
    &\leq
    \|\boldsymbol P_t\|_{S_\infty}
    +
    \left\|
        \widetilde{\boldsymbol W}_t-\boldsymbol P_t
    \right\|_{S_\infty} \\
    &\leq
    1+\varepsilon_{t,q}
    \leq
    1+\varepsilon_q.
\end{aligned}
\]
\end{proof}

\begin{lemma}\label{lem:weight_updates_NS}
    Assume that Assumption \ref{ass:bounded_gradient} and \ref{ass:finite_NS_error} hold and that $0 < \beta \gamma <1$. Then for any $t = 1,\dots,T$, the weight updates in Algorithm \ref{alg:moon} satisfy the following inequality
    \begin{equation}
        \| \boldsymbol{z}_{t} - \boldsymbol{z}_{t+1} \|_{\infty} \leq \beta H(1+\varepsilon_q).
    \end{equation}
\end{lemma}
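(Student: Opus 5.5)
The argument follows the proof of Lemma~\ref{lem:weight_updates} line by line. The only new ingredient is that the update direction $\widetilde{\boldsymbol W}_t$ now has spectral norm bounded by $1+\varepsilon_q$ (Lemma~\ref{lem:NS_direction_norm}) instead of $1$.

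First, since $\boldsymbol z_t=\operatorname{Softmax}(\boldsymbol\xi_t)$ and softmax is $1/2$-Lipschitz in $\ell_\infty$,
\[
\|\boldsymbol z_{t+1}-\boldsymbol z_t\|_\infty\le \frac{\beta}{2}\bigl(\|\boldsymbol\delta_t\|_\infty+\gamma\|\boldsymbol\xi_t\|_\infty\bigr),
\]
where now $\boldsymbol\delta_t=[\langle\widetilde{\boldsymbol W}_t,\nabla\ell^i(\boldsymbol\Theta_t)\rangle]_{i=1}^m$.

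Second, I bound $\boldsymbol\delta_t$ by spectral--nuclear duality, Assumption~\ref{ass:bounded_gradient}, and Lemma~\ref{lem:NS_direction_norm}:
\[
\|\boldsymbol\delta_t\|_\infty\le \|\widetilde{\boldsymbol W}_t\|_{S_\infty}\max_i\|\nabla\ell^i(\boldsymbol\Theta_t)\|_{S_1}\le H(1+\varepsilon_q).
\]

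Third, I bound the logits. Write the recursion as $\boldsymbol\xi_{t+1}=(1-\beta\gamma)\boldsymbol\xi_t-\beta\boldsymbol\delta_t$. Since $0<\beta\gamma<1$, the coefficient is nonnegative, so
\[
\|\boldsymbol\xi_{t+1}\|_\infty\le(1-\beta\gamma)\|\boldsymbol\xi_t\|_\infty+\beta H(1+\varepsilon_q).
\]
Unrolling from $\boldsymbol\xi_1=\boldsymbol 0$ and summing the geometric series gives $\|\boldsymbol\xi_t\|_\infty\le H(1+\varepsilon_q)/\gamma$ for all $t$.

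Finally, substituting both bounds into the softmax inequality gives
\[
\|\boldsymbol z_{t+1}-\boldsymbol z_t\|_\infty\le\frac{\beta}{2}\bigl(H(1+\varepsilon_q)+H(1+\varepsilon_q)\bigr)=\beta H(1+\varepsilon_q),
\]
which is the claim.

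I expect no real obstacle, since every step is the exact-polar argument with $1$ replaced by $1+\varepsilon_q$. The one point to check is that $\|\widetilde{\boldsymbol W}_t\|_{S_\infty}\le 1+\varepsilon_q$ holds at every iteration, including those where $\boldsymbol M_t=\boldsymbol 0$. There I would use the convention $\operatorname{Polar}(\boldsymbol 0)=\boldsymbol 0$, under which Assumption~\ref{ass:finite_NS_error} gives $\|\widetilde{\boldsymbol W}_t\|_{S_\infty}\le\varepsilon_q$, so the bound still holds. The $1/2$-Lipschitz property of softmax is taken, as in Lemma~\ref{lem:weight_updates}, as given.
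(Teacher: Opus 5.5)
Your proposal is correct and matches the paper's proof step for step: the $1/2$-Lipschitz softmax bound, $\|\boldsymbol\delta_t\|_\infty\le(1+\varepsilon_q)H$ via duality and Lemma~\ref{lem:NS_direction_norm}, the unrolled logit bound $\|\boldsymbol\xi_t\|_\infty\le(1+\varepsilon_q)H/\gamma$, and the final combination. Your separate check for $\boldsymbol M_t=\boldsymbol 0$ is harmless but not needed, since Lemma~\ref{lem:NS_direction_norm} already covers that case through $\|\boldsymbol P_t\|_{S_\infty}\le 1$.
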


\begin{proof}
The softmax mapping is $1/2$-Lipschitz with respect to the
$\ell_\infty$ norm. Indeed, for any
$\boldsymbol{x},\boldsymbol{y}\in\mathbb{R}^m$,
\[
\left\|
\operatorname{softmax}(\boldsymbol{x})
-
\operatorname{softmax}(\boldsymbol{y})
\right\|_\infty
\le
\frac{1}{2}
\|\boldsymbol{x}-\boldsymbol{y}\|_\infty.
\]
Consequently,
\begin{align}
\|\boldsymbol{z}_{t+1}-\boldsymbol{z}_t\|_\infty
&\le
\frac{1}{2}
\|\boldsymbol{\xi}_{t+1}-\boldsymbol{\xi}_t\|_\infty
\nonumber\\
&=
\frac{\beta}{2}
\|\boldsymbol{\delta}_t+\gamma\boldsymbol{\xi}_t\|_\infty.
\label{eq:softmax_weight_difference_NS}
\end{align}

We first bound $\boldsymbol{\delta}_t$. By the duality between the
spectral norm and the nuclear norm,
\begin{equation}
\begin{aligned}
\|\boldsymbol{\delta}_t\|_\infty
&=
\max_{i\in[m]}
\left|
\left\langle
\widetilde{\boldsymbol{W}}_t,
\nabla\ell^i(\boldsymbol{\Theta}_t)
\right\rangle
\right|
\nonumber\\
&\le
\|\widetilde{\boldsymbol{W}}_t\|_{S_\infty}
\max_{i\in[m]}
\|\nabla\ell^i(\boldsymbol{\Theta}_t)\|_{S_1}\\
&\le (1+\varepsilon_{t,q})H \le (1+\varepsilon_q)H.
\end{aligned}
\label{eq:delta_bound_NS}
\end{equation}

Next, the update of $\boldsymbol{\xi}_t$ can be written as
\[
\boldsymbol{\xi}_{t+1}
=
(1-\beta\gamma)\boldsymbol{\xi}_t
-\beta\boldsymbol{\delta}_t.
\]
Since $0<\beta\gamma<1$, the coefficient
$1-\beta\gamma$ is nonnegative. Thus, using
\eqref{eq:delta_bound_NS},
\begin{align}
\|\boldsymbol{\xi}_{t+1}\|_\infty
&\le
(1-\beta\gamma)\|\boldsymbol{\xi}_t\|_\infty
+\beta\|\boldsymbol{\delta}_t\|_\infty
\nonumber\\
&\le
(1-\beta\gamma)\|\boldsymbol{\xi}_t\|_\infty+\beta (1+\varepsilon_q)H.
\label{eq:xi_recursion_NS}
\end{align}
Unrolling \eqref{eq:xi_recursion_NS} and using
$\boldsymbol{\xi}_1=\boldsymbol{0}$ gives
\begin{align*}
&\|\boldsymbol{\xi}_t\|_\infty
\le
\beta (1+\varepsilon_q)H
\sum_{s=0}^{t-2}(1-\beta\gamma)^s
\nonumber\\
&=
\frac{(1+\varepsilon_q)H}{\gamma}
\left[1-(1-\beta\gamma)^{t-1}\right]
\le
\frac{(1+\varepsilon_q)H}{\gamma}.
\end{align*}

Since $z_t=\operatorname{Softmax}(\boldsymbol{\xi}_t)$, the $1/2$-Lipschitz continuity of softmax under the $\ell_\infty$ norm bounds the change in $z_t$ by the corresponding change in the logits. Applying the triangle inequality to the logit update and substituting $\|\boldsymbol{\delta}_t\|_\infty\leq (1+\varepsilon_q)H$ and $\|\boldsymbol{\xi}_t\|_\infty\leq (1+\varepsilon_q)H/\gamma$, we obtain
\begin{align*}
&\|\boldsymbol{z}_{t+1}-\boldsymbol{z}_t\|_\infty
\le
\frac{\beta}{2}
\left(
\|\boldsymbol{\delta}_t\|_\infty
+\gamma\|\boldsymbol{\xi}_t\|_\infty
\right)\\
&\le
\frac{\beta}{2}((1+\varepsilon_q)H+(1+\varepsilon_q)H)
=
(1+\varepsilon_q)\beta H.
\end{align*}
This completes the proof.
\end{proof}

\begin{lemma}\label{lem:momentum_error_deterministic_NS}
    Assume that Assumption \ref{ass:L-smoothness}, \ref{ass:bounded_gradient} and \ref{ass:finite_NS_error} hold, and that $0 < \beta \gamma <1$. Then for any $t = 1,\dots,T$, the weight updates in Algorithm \ref{alg:moon} satisfy the following inequality 
    \begin{equation}
        \begin{aligned}
            & \|\boldsymbol{M}_t - \sum_{i=1}^m z_{i,t}\nabla\ell^i(\boldsymbol{\Theta}_t)\|_{S_1} \\
            &\leq (1-\mu)^{t-1}\|\sum_{i=1}^m z_{i,1}\nabla\ell^i(\boldsymbol{\Theta}_{1})\|_{S_1} \\
            &+ \frac{L\alpha(1+\varepsilon_q)}{\mu} + \frac{mH^2\beta(1+\varepsilon_q)}{\mu}. 
        \end{aligned}
    \end{equation}
\end{lemma}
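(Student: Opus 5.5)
I will follow the proof of Lemma~\ref{lem:momentum_error_deterministic} essentially verbatim. The only change is that the parameter step now uses the Newton--Schulz direction $\widetilde{\boldsymbol W}_t$, whose spectral norm is controlled by Lemma~\ref{lem:NS_direction_norm} rather than by $1$.

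Set $\boldsymbol G_t:=\sum_{i}z_{i,t}\nabla\ell^i(\boldsymbol\Theta_t)$ and $\boldsymbol E_t:=\boldsymbol M_t-\boldsymbol G_t$. The momentum recursion gives, for $t\ge2$,
\[
\boldsymbol E_t=(1-\mu)\bigl(\boldsymbol E_{t-1}+\boldsymbol G_{t-1}-\boldsymbol G_t\bigr),
\]
with $\boldsymbol E_1=(1-\mu)(\boldsymbol M_0-\boldsymbol G_1)=-(1-\mu)\boldsymbol G_1$. Unrolling and applying the triangle inequality in $\|\cdot\|_{S_1}$ yields
\[
\|\boldsymbol E_t\|_{S_1}\le(1-\mu)^t\|\boldsymbol G_1\|_{S_1}+\sum_{j=2}^t(1-\mu)^{t-j+1}\|\boldsymbol G_j-\boldsymbol G_{j-1}\|_{S_1}.
\]
The first term is at most $(1-\mu)^{t-1}\|\boldsymbol G_1\|_{S_1}$.

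The key step is the per-step variation bound. Split
\[
\boldsymbol G_j-\boldsymbol G_{j-1}=\sum_i z_{i,j}\bigl(\nabla\ell^i(\boldsymbol\Theta_j)-\nabla\ell^i(\boldsymbol\Theta_{j-1})\bigr)+\sum_i(z_{i,j}-z_{i,j-1})\nabla\ell^i(\boldsymbol\Theta_{j-1}).
\]
\begin{itemize}
\item For the first sum, use Assumption~\ref{ass:L-smoothness}, $\sum_i z_{i,j}=1$, and $\|\boldsymbol\Theta_j-\boldsymbol\Theta_{j-1}\|_{S_\infty}=\alpha\|\widetilde{\boldsymbol W}_{j-1}\|_{S_\infty}\le\alpha(1+\varepsilon_q)$ from Lemma~\ref{lem:NS_direction_norm}. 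This bounds it by $L\alpha(1+\varepsilon_q)$.
\item For the second sum, use Assumption~\ref{ass:bounded_gradient} together with $|z_{i,j}-z_{i,j-1}|\le\|\boldsymbol z_j-\boldsymbol z_{j-1}\|_\infty\le\beta H(1+\varepsilon_q)$ from Lemma~\ref{lem:weight_updates_NS}. Summing over $m$ coordinates gives $mH^2\beta(1+\varepsilon_q)$.
\end{itemize}

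Finally, apply $\sum_{j=2}^t(1-\mu)^{t-j+1}\le 1/\mu$ to the accumulated variation, which gives exactly the stated bound.

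I expect no real obstacle. The only point needing care is that $\widetilde{\boldsymbol W}_t$ need not satisfy $\|\cdot\|_{S_\infty}\le1$. Both places where this was used, the parameter step and the dual direction $\boldsymbol\delta_t$ (through Lemma~\ref{lem:weight_updates_NS}), must carry the factor $(1+\varepsilon_q)$. This factor is already available from the preceding lemmas.
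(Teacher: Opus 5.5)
Your proposal is correct and follows the paper's proof step for step. It uses the same error recursion $\boldsymbol E_t=(1-\mu)(\boldsymbol E_{t-1}+\boldsymbol G_{t-1}-\boldsymbol G_t)$, the same two-term split of $\boldsymbol G_j-\boldsymbol G_{j-1}$ bounded via Lemma~\ref{lem:NS_direction_norm} and Lemma~\ref{lem:weight_updates_NS}, and the same geometric-sum bound $1/\mu$; your explicit computation $\boldsymbol E_1=-(1-\mu)\boldsymbol G_1$ simply spells out a step the paper leaves implicit.
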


\begin{proof}
Let
\[
\boldsymbol{G}_t:=\sum_{i=1}^m z_{i,t}\nabla \ell_i(\boldsymbol{\Theta}_t)
\quad\text{and}\quad
\boldsymbol{E}_t:=\boldsymbol{M}_t-\boldsymbol{G}_t.
\]
Using the momentum update \(\boldsymbol{M}_t=(1-\mu)\boldsymbol{M}_{t-1}+\mu \boldsymbol{G}_t\), we obtain, for \(t\ge 2\),
\[
\boldsymbol{E}_t=(1-\mu)\bigl(\boldsymbol{E}_{t-1}+\boldsymbol{G}_{t-1}-\boldsymbol{G}_t\bigr).
\]
Iterating this recursion and using \(\boldsymbol{M}_0=0\) gives
\begin{align*}
\|\boldsymbol{E}_t\|_{S_1}
&\le
(1-\mu)^t\|\boldsymbol{G}_1\|_{S_1}\\
&+
\sum_{j=2}^t(1-\mu)^{t-j+1}
\|\boldsymbol{G}_{j-1}-\boldsymbol{G}_j\|_{S_1}.
\end{align*}
Moreover,
\begin{align*}
    &\boldsymbol{G}_j-\boldsymbol{G}_{j-1}
    =
    \sum_{i=1}^m z_{i,j}
    \bigl(\nabla\ell_i(\boldsymbol{\Theta}_j)-\nabla\ell_i(\boldsymbol{\Theta}_{j-1})\bigr) \\
    &+
    \sum_{i=1}^m(z_{i,j}-z_{i,j-1})
    \nabla\ell_i(\boldsymbol{\Theta}_{j-1}).
\end{align*}
By Assumptions \ref{ass:L-smoothness}--\ref{ass:bounded_gradient}, Lemma \ref{lem:weight_updates_NS}, and
\(\|\boldsymbol{\Theta}_j-\boldsymbol{\Theta}_{j-1}\|_{S_\infty}=\alpha\|\widetilde{\boldsymbol{W}}_{j-1}\|_{S_\infty}\), we have
\[
\|\boldsymbol{G}_j-\boldsymbol{G}_{j-1}\|_{S_1}
\le
L\alpha(1+\varepsilon_q)+mH^2\beta(1+\varepsilon_q).
\]
Since
\[
\sum_{j=2}^t(1-\mu)^{t-j+1}\le\frac{1}{\mu},
\]
it follows that
\begin{align*}
    &\left\|\boldsymbol{M}_t-\sum_{i=1}^mz_{i,t}\nabla\ell_i(\boldsymbol{\Theta}_t)\right\|_{S_1} \\
    &\le(1-\mu)^{t-1}\left\|\sum_{i=1}^m z_{i,1}\nabla\ell_i(\boldsymbol{\Theta}_1)\right\|_{S_1}\\ &+\frac{L\alpha(1+\varepsilon_q)}{\mu}+\frac{mH^2\beta(1+\varepsilon_q)}{\mu}.
\end{align*}
\end{proof}

\begin{theorem}\label{thm:non_convex_det_NS}
    Suppose Assumption \ref{ass:L-smoothness}--\ref{ass:finite_NS_error} hold, let $\ell^i:\mathbb{R}^{p \times q}\rightarrow\mathbb{R}$ be non-convex for each $i\in [m]$. Suppose that $0<\beta\gamma<1$, then the iterations of Algorithm \ref{alg:moon} satisfy the following inequality 
    \begin{equation}
    \begin{aligned}
        &\frac{1}{T}\sum_{t=1}^T
        \min_{\boldsymbol{z}_t^\ast \in \Delta_m}
        \|\nabla L(\boldsymbol{\Theta}_t)
        \boldsymbol{z}_t^\ast \|_{S_1} \\
        &\leq \frac{1}{1-\varepsilon_q}\bigg(\frac{2B}{\alpha T}+\frac{2H}{\mu T}+\frac{2L\alpha(1+\varepsilon_q)}{\mu} \\
        &+\frac{2mH^2\beta(1+\varepsilon_q)}{\mu}+\frac{mHB\beta(1+\varepsilon_q)}{\alpha} \\
        &+\frac{L\alpha(1+\varepsilon_q)^2}{2}\bigg). 
    \end{aligned}
    \end{equation}
\end{theorem}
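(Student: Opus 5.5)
I will mirror the proof of Theorem~\ref{thm:non_convex_det}, with $\boldsymbol W_t$ replaced by $\widetilde{\boldsymbol W}_t$. Lemmas~\ref{lem:NS_direction_norm}, \ref{lem:weight_updates_NS} and \ref{lem:momentum_error_deterministic_NS} supply the modified ingredients.

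Set $\boldsymbol G_t=\sum_i z_{i,t}\nabla\ell^i(\boldsymbol\Theta_t)$. Smoothness of $\sum_i z_{i,t}\ell^i$ with respect to $\|\cdot\|_{S_\infty}$, together with \eqref{eq:NS_parameter_update} and Lemma~\ref{lem:NS_direction_norm}, gives
\[
\sum_i z_{i,t}\ell^i(\boldsymbol\Theta_{t+1})\le \sum_i z_{i,t}\ell^i(\boldsymbol\Theta_t)-\alpha\langle \boldsymbol G_t,\widetilde{\boldsymbol W}_t\rangle+\tfrac{L\alpha^2}{2}(1+\varepsilon_q)^2 .
\]

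The central step is a lower bound on $\langle \boldsymbol G_t,\widetilde{\boldsymbol W}_t\rangle$. I split it as
\[
\langle \boldsymbol G_t,\widetilde{\boldsymbol W}_t\rangle=\langle \boldsymbol M_t,\boldsymbol P_t\rangle+\langle \boldsymbol M_t,\widetilde{\boldsymbol W}_t-\boldsymbol P_t\rangle+\langle \boldsymbol G_t-\boldsymbol M_t,\widetilde{\boldsymbol W}_t\rangle .
\]
I then bound each term by duality. The first term equals $\|\boldsymbol M_t\|_{S_1}$ by \eqref{eq:NS_exact_polar_properties}. The second is at least $-\varepsilon_q\|\boldsymbol M_t\|_{S_1}$. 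The third is at least $-(1+\varepsilon_q)\|\boldsymbol G_t-\boldsymbol M_t\|_{S_1}$. This leaves $(1-\varepsilon_q)\|\boldsymbol M_t\|_{S_1}$, and the reverse triangle inequality turns it into at least $(1-\varepsilon_q)\|\boldsymbol G_t\|_{S_1}-(1-\varepsilon_q)\|\boldsymbol G_t-\boldsymbol M_t\|_{S_1}$. The two error terms combine to $2\|\boldsymbol G_t-\boldsymbol M_t\|_{S_1}$, since $(1+\varepsilon_q)+(1-\varepsilon_q)=2$. This explains why the factor $2$ in front of the momentum error is unchanged, and why the only new global effect is the factor $(1-\varepsilon_q)$ multiplying $\|\boldsymbol G_t\|_{S_1}$. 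Dividing by it at the end produces the prefactor $\frac{1}{1-\varepsilon_q}$.

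Next I rearrange to isolate $(1-\varepsilon_q)\|\boldsymbol G_t\|_{S_1}$. I add and subtract $\sum_i z_{i,t+1}\ell^i(\boldsymbol\Theta_{t+1})$ so the weighted losses telescope. The weight-drift term $\frac1\alpha\sum_i(z_{i,t+1}-z_{i,t})\ell^i(\boldsymbol\Theta_{t+1})$ is at most $mBH\beta(1+\varepsilon_q)/\alpha$ by Lemma~\ref{lem:weight_updates_NS} and Assumption~\ref{ass:bounded_function}. The momentum error is bounded by Lemma~\ref{lem:momentum_error_deterministic_NS}.

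Summing over $t=1,\dots,T$, the telescoped loss endpoints contribute at most $2B/\alpha$. The geometric sum gives $\sum_t(1-\mu)^{t-1}\le 1/\mu$, and with $\|\boldsymbol G_1\|_{S_1}\le H$ the initial-error terms sum to at most $2H/\mu$. The remaining per-step constants contribute
\[
\frac{2L\alpha(1+\varepsilon_q)}{\mu}+\frac{2mH^2\beta(1+\varepsilon_q)}{\mu}+\frac{mHB\beta(1+\varepsilon_q)}{\alpha}+\frac{L\alpha(1+\varepsilon_q)^2}{2}.
\]
Finally I divide by $T(1-\varepsilon_q)$, which is legitimate because $\varepsilon_q<1$. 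Since $\boldsymbol z_t\in\Delta_m$ is feasible, the min over $\boldsymbol z^\ast$ is at most $\|\boldsymbol G_t\|_{S_1}$, and this yields the stated bound.

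The main obstacle is the lower bound on $\langle \boldsymbol G_t,\widetilde{\boldsymbol W}_t\rangle$. The error has to be expressed relative to $\|\boldsymbol M_t\|_{S_1}$ rather than as an additive term, and the pieces must be grouped so that the coefficients match the stated constants.
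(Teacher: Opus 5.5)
Your proposal is correct and follows the paper's proof essentially step for step. It uses the same smoothness bound with the $(1+\varepsilon_q)^2$ curvature term, and the same lower bound $\langle \boldsymbol M_t,\widetilde{\boldsymbol W}_t\rangle\ge(1-\varepsilon_q)\|\boldsymbol M_t\|_{S_1}$ combined with the reverse triangle inequality, so the error coefficients $(1+\varepsilon_q)+(1-\varepsilon_q)$ collapse to $2$; the telescoping, weight-drift, and momentum-error bookkeeping before dividing by $(1-\varepsilon_q)T$ is also the same.
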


\begin{proof}
Define
\[
\boldsymbol{G}_t:=\sum_{i=1}^m z_{i,t}\nabla\ell_i(\boldsymbol{\Theta}_t).
\]
By \(L\)-smoothness and \(\boldsymbol{\Theta}_{t+1}=\boldsymbol{\Theta}_t-\alpha \widetilde{\boldsymbol{W}}_t\),
\begin{align*}
\sum_{i=1}^m z_{i,t}\ell_i(\boldsymbol{\Theta}_{t+1})
&\le
\sum_{i=1}^m z_{i,t}\ell_i(\boldsymbol{\Theta}_t)
-\alpha\langle \boldsymbol{G}_t,\widetilde{\boldsymbol{W}}_t\rangle\\
&\quad+\frac{L\alpha^2(1+\varepsilon_q)^2}{2}.
\end{align*}
Since \(\widetilde{\boldsymbol{W}}_t\) is the direction returned by the finite-step
Newton–Schulz iteration,
\(\langle \boldsymbol{M}_t,\widetilde{\boldsymbol{W}}_t\rangle=\langle \boldsymbol{M}_t,\boldsymbol{P}_t\rangle+\langle\boldsymbol{M}_t,\widetilde{\boldsymbol{W}}_t-\boldsymbol{P}_t\rangle\ge(1-\varepsilon_q)\|\boldsymbol{M}_t\|_{S_1}\). Therefore,
\begin{align*}
&-\alpha\langle \boldsymbol{G}_t,\widetilde{\boldsymbol{W}}_t\rangle
\le
-\alpha\langle \boldsymbol{M}_t,\widetilde{\boldsymbol{W}}_t\rangle + \alpha\langle\boldsymbol{M}_t-\boldsymbol{G}_t,\widetilde{\boldsymbol{W}}_t\rangle\\
&\le 
-(1-\varepsilon_q)\alpha\|\boldsymbol{M}_t\|_{S_1} + (1+\varepsilon_q)\alpha\|\boldsymbol{M}_t-\boldsymbol{G}_t\|_{S_1}\\
&\le -(1-\varepsilon_q)\alpha(\|\boldsymbol{G}_t\|_{S_1}-\|\boldsymbol{M}_t-\boldsymbol{G}_t\|_{S_1}) \\
&\quad+ (1+\varepsilon_q)\alpha\|\boldsymbol{M}_t-\boldsymbol{G}_t\|_{S_1}\\
&\le -(1-\varepsilon_q)\alpha\|\boldsymbol{G}_t\|_{S_1} + 2\alpha\|\boldsymbol{M}_t-\boldsymbol{G}_t\|_{S_1}. 
\end{align*}
After adding and subtracting
\(\sum_i z_{i,t+1}\ell_i(\boldsymbol{\Theta}_{t+1})\) and rearranging, we obtain
\[
\begin{aligned}
&(1-\varepsilon_q)\|\boldsymbol{G}_t\|_{S_1} \\
&\le {}
\frac{1}{\alpha}
\left(
\sum_{i=1}^m z_{i,t}\ell_i(\boldsymbol{\Theta}_t)
-
\sum_{i=1}^m z_{i,t+1}\ell_i(\boldsymbol{\Theta}_{t+1})
\right)\\
&+\frac{1}{\alpha}
\sum_{i=1}^m
(z_{i,t+1}-z_{i,t})\ell_i(\boldsymbol{\Theta}_{t+1}) \\
&+\frac{L\alpha(1+\varepsilon_q)^2}{2} +2\|\boldsymbol{M}_t-\boldsymbol{G}_t\|_{S_1}.
\end{aligned}
\]
Lemma~\ref{lem:weight_updates_NS} imply
\[
\left|
\sum_{i=1}^m
(z_{i,t+1}-z_{i,t})\ell_i(\boldsymbol{\Theta}_{t+1})
\right|
\le mBH\beta(1+\varepsilon_q).
\]
Applying Lemma~\ref{lem:momentum_error_deterministic_NS}, summing over \(t=1,\ldots,T\), and using
\(\|\boldsymbol{G}_1\|_{S_1}\le H\), we obtain
\[
\begin{aligned}
&(1-\varepsilon_q)\sum_{t=1}^T\|\boldsymbol{G}_t\|_{S_1}
\le
\frac{2B}{\alpha}
+\frac{2H}{\mu}
+\frac{2TL\alpha(1+\varepsilon_q)}{\mu} \\
&+\frac{2TmH^2\beta(1+\varepsilon_q)}{\mu}+\frac{TmHB\beta(1+\varepsilon_q)}{\alpha} \\
&+\frac{TL\alpha(1+\varepsilon_q)^2}{2}.
\end{aligned}
\]
Finally,
\[
\min_{z\in\Delta_m}
\left\|
\sum_{i=1}^m z_i\nabla\ell_i(\boldsymbol{\Theta}_t)
\right\|_{S_1}
\le \|\boldsymbol{G}_t\|_{S_1}.
\]
Dividing the preceding inequality by \((1-\varepsilon_q)T\) yields the claimed result.
\end{proof}

There is an immediate corollary of the above Theorem. 

\begin{corollary}\label{cor:convergence_rate_nonconvex_det_NS}
    Setting $\alpha=\sqrt{\frac{B}{LT}}$, $\beta=\frac{1}{mHT}$ and $0<\gamma<mHT$ in \cref{thm:non_convex_det_NS}, then, for any fixed $\mu \in (0,1]$, the following holds 
    \begin{equation}
        \frac{1}{T}\sum_{t=1}^T \min_{\boldsymbol{z}_t^\ast \in \Delta_m}\|\nabla L(\boldsymbol{\Theta_t})\boldsymbol{z}_t^\ast \|_{S_1} \leq \frac{1}{1-\varepsilon_q}O(1/\sqrt{T}). 
    \end{equation}
\end{corollary}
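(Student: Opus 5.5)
The proof is a direct substitution of the prescribed parameters into the bound of Theorem~\ref{thm:non_convex_det_NS}. That bound has the form $\frac{1}{1-\varepsilon_q}\times(\cdots)$, where the prefactor depends only on $\varepsilon_q<1$ and does not depend on $T$. It therefore suffices to show that the bracketed sum is $O(1/\sqrt{T})$ for every fixed $\mu\in(0,1]$. This mirrors the computation in Corollary~\ref{cor:convergence_rate_nonconvex_det}. The only change is that several terms now carry factors $(1+\varepsilon_q)$ or $(1+\varepsilon_q)^2$, which are bounded by $2$ and $4$ because $\varepsilon_q<1$.

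First I check that Theorem~\ref{thm:non_convex_det_NS} applies, which requires $0<\beta\gamma<1$. With $\beta=\frac{1}{mHT}$ and $0<\gamma<mHT$, we get $\beta\gamma=\gamma/(mHT)\in(0,1)$. Next I evaluate the six terms with $\alpha=\sqrt{B/(LT)}$ and $\beta=1/(mHT)$:
\begin{itemize}
\item $\frac{2B}{\alpha T}=2\sqrt{LB/T}$;
\item $\frac{2H}{\mu T}=O(1/T)$;
\item $\frac{2L\alpha(1+\varepsilon_q)}{\mu}=\frac{2(1+\varepsilon_q)}{\mu}\sqrt{LB/T}$;
\item $\frac{2mH^2\beta(1+\varepsilon_q)}{\mu}=\frac{2H(1+\varepsilon_q)}{\mu T}$;
\item $\frac{mHB\beta(1+\varepsilon_q)}{\alpha}=(1+\varepsilon_q)\frac{B}{T\alpha}=(1+\varepsilon_q)\sqrt{LB/T}$;
\item $\frac{L\alpha(1+\varepsilon_q)^2}{2}=\frac{(1+\varepsilon_q)^2}{2}\sqrt{LB/T}$.
\end{itemize}

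Collecting terms, the bracket is at most $c_1(\mu,\varepsilon_q)\sqrt{LB/T}+c_2(\mu,\varepsilon_q)H/T$. The constants $c_1,c_2$ are bounded uniformly in $\varepsilon_q\in[0,1)$; for example, $c_1\le 2+4/\mu+2+2$. Since $1/T\le 1/\sqrt{T}$, the bracket is $O(1/\sqrt{T})$, and multiplying by $\frac{1}{1-\varepsilon_q}$ gives the claim.

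No step is a real obstacle. The one point needing care is to keep the $\varepsilon_q$-dependence of the constants bounded, so that it is absorbed into the $O(\cdot)$. The constant $\mu$ being fixed, the only genuinely non-uniform factor is the explicit $\frac{1}{1-\varepsilon_q}$, which the statement retains. The interplay with $\mu$ is harmless because $\mu$ is fixed and independent of $T$.
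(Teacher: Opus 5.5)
Your proposal is correct and follows the paper's proof: both substitute $\alpha=\sqrt{B/(LT)}$ and $\beta=1/(mHT)$ into Theorem~\ref{thm:non_convex_det_NS}, collect the $\sqrt{LB/T}$ and $H/(\mu T)$ terms, and keep the factor $\frac{1}{1-\varepsilon_q}$ outside. Your version is slightly more careful than the paper's: you check $\beta\gamma\in(0,1)$ explicitly, you bound the $(1+\varepsilon_q)$ factors uniformly, and your leading coefficient $2$ from $2B/(\alpha T)$ is the correct one, whereas the paper's displayed constant $3$ is a harmless overcount.
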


\begin{proof}
    Since $\alpha=\sqrt{\frac{B}{LT}}$, $\beta=\frac{1}{mHT}$ and $0<\gamma<mHT$ in Theorem \ref{thm:non_convex_det_NS}, we have 
    \begin{align*}
        &\frac{1}{T}\sum_{t=1}^T
        \min_{\boldsymbol{z}_t^\ast \in \Delta_m}
        \|\nabla L(\boldsymbol{\Theta_t})
        \boldsymbol{z}_t^\ast \|_{S_1} \\
        &\leq \frac{1}{1-\varepsilon_q}\bigg(\frac{2B}{\alpha T}+\frac{2H}{\mu T}+\frac{2L\alpha(1+\varepsilon_q)}{\mu} \\
        &\quad+\frac{2mH^2\beta(1+\varepsilon_q)}{\mu}+\frac{mHB\beta(1+\varepsilon_q)}{\alpha} \\
        &\quad+\frac{L\alpha(1+\varepsilon_q)^2}{2}\bigg) \\
        &= \frac{1}{1-\varepsilon_q}\bigg[\bigg(3+\frac{2(1+\varepsilon_q)}{\mu} + (1+\varepsilon_q) \\
        &\quad+\frac{(1+\varepsilon_q)^2}{2}\bigg)\sqrt{\frac{LB}{T}} + \frac{(4+2\varepsilon_q)H}{\mu T} \bigg] \\ 
        &=\frac{1}{1-\varepsilon_q}\mathcal{O}(1/\sqrt{T}). 
    \end{align*}
\end{proof}

Increasing $q$ improves all three
approximation-dependent factors in Theorem~8:
\[
    \frac{1}{1-\varepsilon_q},
    \qquad
    1+\varepsilon_q,
    \qquad
    (1+\varepsilon_q)^2.
\]
Hence, the effect of the finite-step approximation becomes
weaker as more Newton--Schulz steps are performed, and the
bound continuously approaches the exact-polar convergence
bound as $q\to\infty$. For any fixed $q$ satisfying the
uniform condition $\varepsilon_q<1$, the approximation changes
only the constants and does not alter the
$\mathcal O(T^{-1/2})$ deterministic convergence rate.

\section{A Two-Task Matrix Example}
\label{app:two_task_example}

We present a two-task example to illustrate the difference between Euclidean gradient manipulation and the matrix-aware update derived in Proposition~\ref{prop:moon_dual}. Let the parameter be $\Theta\in\mathbb{R}^{2\times2}$ and define
\begin{equation}
\begin{aligned}
\ell_1(\Theta)&=\frac{1}{2}\|\Theta\|_F^2+\langle A+B,\Theta\rangle,\\
\ell_2(\Theta)&=\frac{1}{2}\|\Theta\|_F^2+\langle A-B,\Theta\rangle,
\end{aligned}
\end{equation}
where
\begin{equation}
A=\begin{pmatrix}2&0\\0&1\end{pmatrix},\qquad B=\begin{pmatrix}0&1\\-1&0\end{pmatrix}.
\end{equation}

At $\Theta_0=0$, the two losses have the same value, $\ell_1(\Theta_0)=\ell_2(\Theta_0)=0$, and their gradients are
\begin{equation}
\begin{aligned}
g_1&=\nabla\ell_1(\Theta_0)=A+B
=\begin{pmatrix}2&1\\-1&1\end{pmatrix},\\
g_2&=\nabla\ell_2(\Theta_0)=A-B
=\begin{pmatrix}2&-1\\1&1\end{pmatrix}.
\end{aligned}
\end{equation}

For a task weight $z\in[0,1]$, let $s=2z-1$. The weighted gradient is
\begin{equation}
\begin{aligned}
G(z)&=zg_1+(1-z)g_2\\
&=A+sB
=\begin{pmatrix}2&s\\-s&1\end{pmatrix}.
\end{aligned}
\end{equation}

\paragraph{MGDA direction.}
MGDA selects the task weight by minimizing the squared Frobenius norm:
\begin{equation}
\frac{1}{2}\|G(z)\|_F^2=\frac{1}{2}\left(5+2s^2\right).
\end{equation}
The unique minimizer is $s^\ast=0$, or equivalently $z^\ast=1/2$, and hence $G_{\mathrm{MGDA}}^\ast=A$. To compare the two methods under the same spectral-norm trust-region budget, we normalize the MGDA direction to unit spectral norm:
\begin{equation}
\begin{aligned}
&P_{\mathrm{MGDA}}=\frac{G_{\mathrm{MGDA}}^\ast}{\|G_{\mathrm{MGDA}}^\ast\|_{S_\infty}}
=\begin{pmatrix}1&0\\0&\frac{1}{2}\end{pmatrix},\\
&\|P_{\mathrm{MGDA}}\|_{S_\infty}=1.
\end{aligned}
\end{equation}

\paragraph{MOON direction.}
MOON instead minimizes the squared nuclear norm. For any $2\times2$ matrix $G$,
\begin{equation}
\|G\|_{S_1}^2=\|G\|_F^2+2|\det(G)|.
\end{equation}
Since $\det(G(z))=2+s^2>0$, the MOON objective becomes
\begin{equation}
\begin{aligned}
\frac{1}{2}\|G(z)\|_{S_1}^2
&=\frac{1}{2}\left(5+2s^2+2(2+s^2)\right)\\
&=\frac{1}{2}\left(9+4s^2\right).
\end{aligned}
\end{equation}
Its unique minimizer is also $s^\ast=0$, so the optimal weighted gradient is
\begin{equation}
G_{\mathrm{MOON}}^\ast=A=\begin{pmatrix}2&0\\0&1\end{pmatrix}.
\end{equation}
The singular values of $G_{\mathrm{MOON}}^\ast$ are $2$ and $1$. Therefore, its polar factor is
\begin{equation}
\begin{aligned}
&P_{\mathrm{MOON}}=\operatorname{Polar}(G_{\mathrm{MOON}}^\ast)=UV^\top=I,\\
&\|P_{\mathrm{MOON}}\|_{S_\infty}=1.
\end{aligned}
\end{equation}
Because the singular values of $G_{\mathrm{MOON}}^\ast$ are unequal, $P_{\mathrm{MGDA}}$ and $P_{\mathrm{MOON}}$ are not scalar multiples of each other. Thus, their difference cannot be attributed solely to update scaling.

\paragraph{Common descent comparison.}
We compare the two directions under the same unit spectral-norm budget. For MGDA,
\begin{equation}
\begin{aligned}
\langle g_1,P_{\mathrm{MGDA}}\rangle
&=\langle g_2,P_{\mathrm{MGDA}}\rangle\\
&=2+\frac{1}{2}=\frac{5}{2}.
\end{aligned}
\end{equation}
For MOON,
\begin{equation}
\begin{aligned}
\langle g_1,P_{\mathrm{MOON}}\rangle
&=\langle g_2,P_{\mathrm{MOON}}\rangle\\
&=2+1=3.
\end{aligned}
\end{equation}
Consequently,
\begin{equation}
\min_{i\in\{1,2\}}\langle g_i,P_{\mathrm{MOON}}\rangle >\min_{i\in\{1,2\}}\langle g_i,P_{\mathrm{MGDA}}\rangle.
\end{equation}
Hence, for the parameter update $\Theta^+=\Theta_0-\eta P$, MOON provides a strictly larger first-order decrease for both objectives while using the same spectral-norm update budget:
\begin{equation}
\|\eta P_{\mathrm{MGDA}}\|_{S_\infty}
=\|\eta P_{\mathrm{MOON}}\|_{S_\infty}
=\eta.
\end{equation}

\paragraph{Quadratic upper-bound comparison.}
For either objective,
\begin{equation}
\begin{aligned}
\|\nabla\ell_i(\Theta)-\nabla\ell_i(\Theta')\|_{S_1}
&=\|\Theta-\Theta'\|_{S_1}\\
&\leq2\|\Theta-\Theta'\|_{S_\infty}.
\end{aligned}
\end{equation}
Thus, both losses are $2$-smooth with respect to the spectral norm. For either direction satisfying $\|P\|_{S_\infty}=1$,
\begin{equation}
\ell_i(\Theta_0-\eta P)
\leq\ell_i(\Theta_0)
-\eta\langle g_i,P\rangle+\eta^2.
\end{equation}
The corresponding upper bounds are
\begin{equation}
\begin{aligned}
\ell_i(\Theta_0-\eta P_{\mathrm{MGDA}})
&\leq-\frac{5}{2}\eta+\eta^2,\\
\ell_i(\Theta_0-\eta P_{\mathrm{MOON}})
&\leq-3\eta+\eta^2.
\end{aligned}
\end{equation}
Therefore, the MOON direction gives a strictly smaller worst-case quadratic upper bound for every $\eta>0$.

The exact objective values are
\begin{equation}
\begin{aligned}
\ell_i(\Theta_0-\eta P_{\mathrm{MGDA}})
&=-\frac{5}{2}\eta+\frac{5}{8}\eta^2,\\
\ell_i(\Theta_0-\eta P_{\mathrm{MOON}})
&=-3\eta+\eta^2.
\end{aligned}
\end{equation}
It follows that
\begin{equation}
\begin{aligned}
\ell_i(\Theta_0-\eta P_{\mathrm{MOON}})
<&\ \ell_i(\Theta_0-\eta P_{\mathrm{MGDA}}),\\
& i\in\{1,2\},\quad 0<\eta<\frac{4}{3}.
\end{aligned}
\end{equation}
This example demonstrates that the advantage of MOON is not merely caused by a larger update magnitude. Under the same spectral-norm budget, its matrix-aware direction provides a strictly stronger common descent projection, a tighter worst-case quadratic upper bound, and a larger exact decrease for a nontrivial range of step sizes.

\section{Experimental Details \& Further Experiments}
\label{section:experimental_details}

\subsection{Benchmarks and Baselines}

We consider five benchmarks widely used in multi-task learning research: \textbf{MultiMNIST} \citep{sabour2017dynamic}, \textbf{NYU-v2} \citep{silberman2012indoor}, \textbf{CityScapes} \citep{Cordts2016Cityscapes}, \textbf{QM9} \citep{blum2009970} and \textbf{CelebA} \citep{liu2015deep}. Specifically, \textbf{NYU-v2} is an indoor RGB-D dataset, comprising 1449 aligned RGB and depth images with pixel-level semantic annotations over 13 classes. In NYU-v2 benchmark, we consider image segmentation, depth prediction and surface normal prediction as three tasks. \textbf{CityScapes} is an urban street-scene benchmark with 5000 finely annotated stereo RGB image pairs providing dense per-pixel semantic labels for scene understanding. \textbf{MultiMNIST} is an MTL variant of MNIST dataset, where each image contains two randomly sampled MNIST digits. In MultiMNIST, we treat classifying each digit as a separate task. \textbf{QM9} is a quantum-chemistry benchmark of over 130k small organic molecules with DFT-computed 3D geometries, widely used for molecular property prediction and multi-task learning. The objective is to predict 11 molecular properties. \textbf{CelebA} is a large-scale face attribute dataset containing over 200k facial images annotated with 40 binary attributes. We treat the prediction of each attribute as an individual task, resulting in a 40-task multi-task classification problem.

We compare the proposed method with 13 multi-objective optimization baselines: (1) \textbf{Single-task learning (STL)}, training each task separately; (2) \textbf{Linear Scalarization (LS)}, assigning equal weights to all tasks and minimizing the resulting weighted loss; (3) a \textbf{scale-invariant} baseline \textbf{(SI)} that applies equal weighting to the log losses; (4) \textbf{Random Loss Weighting (RLW)} \citep{lin2021closer}, sampling task weights from a normal distribution; (5) \textbf{Dynamic Weight Average (DWA)} \citep{liu2019end}, adjusting task weights according to the rates of loss change; (6) \textbf{Uncertainty Weighting (UW)} \citep{kendall2018multi}, incorporating task uncertainty to determine loss weights; (7) \textbf{MGDA} \citep{sener2018multi}, finding a common descent direction that yields balanced improvement across tasks at each step; (8) \textbf{GradNorm} \citep{chen2018gradnorm}, dynamically balancing task losses by adjusting their weights according to relative training rates and gradient magnitudes;(9) \textbf{PCGrad} \citep{yu2020gradient}, projecting the gradient of each task to that of other tasks before aggregation; (10) \textbf{CAGrad} \citep{liu2021conflict}, minimizing the average loss while controlling the minimum decrease across tasks; (11) \textbf{IMTL-G} \citep{liu2021towards}, learning task weights such that the aggregated gradient has equal projection onto each task gradient; (12) \textbf{Nash-MTL} \citep{navon2022multi}, formulating multi-objective optimization as a bargaining game and finding a solution that benefits all tasks; (13) \textbf{FAMO} \citep{liu2024famo}, maximizing the worst-case improvement rate and using log losses to update task weights.

\subsection{Comparison with Naive Combinations with Muon}
\label{app:muon_combination}

To examine whether MOON can be replaced by Muon itself or by directly combining existing multi-objective optimization methods with Muon, we additionally evaluate \textbf{Muon} by using it as the optimizer under standard linear scalarization, as Muon itself can already achieve competitive MTL performance without an explicit MOO mechanism. We further construct two baselines, \textbf{MGDA+Muon} and \textbf{FAMO+Muon}, by retaining the original multi-objective gradient aggregation of MGDA and FAMO while replacing Adam with Muon for parameter updates. We evaluate all methods on MultiMNIST with the ViT backbone under the same setting as the main experiments.

\begin{table}[h]
\centering
\begin{small}
\begin{sc}
\setlength{\tabcolsep}{5pt}
\begin{tabular}{lccc}
\toprule
\textbf{Method} & \textbf{Left acc} & \textbf{Right acc} & \textbf{Average acc} \\
\midrule
MGDA        & 95.58 & 94.10 & 94.84 \\
MGDA+Muon   & 94.61 & 93.65 & 94.13 \\
FAMO        & 95.89 & 94.88 & 95.39 \\
FAMO+Muon   & 95.61 & 94.33 & 94.97 \\
Muon        & 95.39 & 94.82 & 95.11 \\
\cmidrule{1-4}
MOON (Ours) & \textbf{95.99} & \textbf{95.31} & \textbf{95.65} \\
\bottomrule
\end{tabular}
\end{sc}
\end{small}
\caption{Performance comparison on MultiMNIST. Per-task test accuracies (\%) and their average are reported. Each experiment is repeated over 3 random seeds and the mean is reported.}
\label{tab:muon_combination}
\end{table}

As shown in Table~\ref{tab:muon_combination}, Muon itself also achieves competitive performance under linear scalarization, but still falls short of MOON. Replacing Adam with Muon degrades the performance of both MGDA and FAMO, while MOON achieves the best results. This shows that simply combining traditional multi-objective methods with Muon is insufficient. MOON instead formulates multi-objective optimization directly over matrix-valued parameters, jointly accounting for objective trade-offs and matrix update geometry.

\subsection{Transformer Experiment on CelebA}
\label{app:celeba_vit}

To further evaluate MOON on a larger dataset with a Transformer architecture, we conduct experiments on CelebA using a ViT backbone. CelebA contains 40 binary attribute prediction tasks, providing a substantially larger multi-objective setting than MultiMNIST. We compare MOON with representative multi-objective optimization methods under the same experimental setting.

\begin{table}[h]
\centering
\begin{small}
\begin{sc}
\begin{tabular}{lc}
\toprule
Method & $\Delta m\% \downarrow$ \\
\midrule
MGDA        & 19.20 \\
PCGrad      & 10.48 \\
CAGrad      & 12.61 \\
IMTL-G      & 7.96 \\
Nash-MTL    & 8.06 \\
FAMO        & 7.75 \\
\cmidrule{1-2}
MOON (Ours) & \textbf{6.09} \\
\bottomrule
\end{tabular}
\end{sc}
\end{small}
\caption{Performance comparison on CelebA with a ViT backbone. Lower $\Delta m\%$ indicates better overall multi-task performance. Each experiment is repeated over 3 random seeds and the mean is reported.}
\label{tab:celeba_vit}
\end{table}

As shown in Table~\ref{tab:celeba_vit}, MOON achieves the lowest $\Delta m\%$ among all compared methods. This demonstrates that MOON remains effective on a larger multi-task dataset with a Transformer architecture and scales well to a setting with 40 objectives.

\subsection{Multi-Objective Reinforcement Fine-Tuning on Qwen3-1.7B-Base}
\label{app:qwen3_math500}

To evaluate MOON beyond conventional supervised multi-task learning, we apply it to multi-objective reinforcement fine-tuning of Qwen3-1.7B-Base on MATH500, following the setup of Figure~1(c) in \citet{lu2026uncovering}. The 500 problems are split into 300 training, 100 validation, and 100 test examples. The training split is used exclusively for policy optimization, while the validation split is used for monitoring, hyperparameter tuning, and checkpoint selection under a prespecified selection rule. The test split is withheld during training and model selection and is used only to evaluate the validation-selected checkpoint. Accordingly, Figure~\ref{fig:qwen3_math500} reports validation learning curves, whereas final performance is evaluated on the held-out test split.

REINFORCE is used to jointly optimize three alignment objectives: accuracy, conciseness, and clarity. All three objectives are implemented as verifiable binary rewards during training. Accuracy receives a reward of one when the final answer extracted from the last valid \texttt{\textbackslash boxed\{\}} expression matches the ground-truth answer after standard MATH normalization, and zero otherwise. Conciseness receives a reward of one when the generated response is shorter than the running global average response length, and zero otherwise. Clarity receives a reward of one when the response contains a nonempty boxed answer and a step-by-step structure indicated by at least two enumerated steps or ordinal transitions, and zero otherwise. During evaluation, accuracy and clarity are reported as mean binary reward scores, while conciseness is reported as the mean response length in tokens, with lower values preferred.

We compare MOON with Linear, GradNorm, MGDA, Nash-MTL, and FAMO. For MOON, the objective-specific policy-gradient signals are aggregated through the proposed matrix-aware multi-objective update to optimize the shared LLM parameters.

As shown by the validation trajectories in Figure~\ref{fig:qwen3_math500}, MOON rapidly reaches a favorable trade-off across the three objectives. Its clarity score rises faster than those of most baselines, while accuracy remains competitive and response length decreases steadily. On the held-out test set, the validation-selected MOON checkpoint maintains high clarity and competitive accuracy while producing substantially shorter responses. These results demonstrate that MOON remains effective for multi-objective reinforcement fine-tuning of a modern LLM and reaches a strong multi-objective solution with relatively few optimization steps.

\subsection{Ablation on Key Hyperparameters}
\label{app:hyperparameter_ablation}

We study the sensitivity of MOON to two key hyperparameters, the task-weight update stepsize $\beta$ and the weight decay coefficient $\gamma$, on MultiMNIST with the ViT backbone. We vary one hyperparameter at a time while keeping the remaining settings unchanged. Each experiment is repeated over 3 random seeds and the mean is reported.

\begin{table}[h]
\centering
\begin{small}
\begin{sc}
\setlength{\tabcolsep}{4pt}
\begin{tabular}{lccccc}
\toprule
$\gamma$ & $10^{-2}$ & $5{\times}10^{-2}$ & $10^{-3}$ & $5{\times}10^{-3}$ & $10^{-4}$ \\
\midrule
Average Acc & 95.14 & 95.33 & \textbf{95.65} & 95.62 & 95.28 \\
\bottomrule
\end{tabular}
\end{sc}
\end{small}
\caption{Average accuracy (\%) on MultiMNIST with different values of $\gamma$.}
\label{tab:gamma_ablation}
\end{table}

\begin{table}[h]
\centering
\begin{small}
\begin{sc}
\setlength{\tabcolsep}{5pt}
\begin{tabular}{lccccc}
\toprule
$\beta$ & $10^{-5}$ & $5{\times}10^{-5}$ & $10^{-4}$ & $5{\times}10^{-4}$ & $10^{-3}$ \\
\midrule
Average Acc & 95.11 & 95.53 & \textbf{95.65} & 95.48 & 94.93 \\
\bottomrule
\end{tabular}
\end{sc}
\end{small}
\caption{Average accuracy (\%) on MultiMNIST with different values of $\beta$.}
\label{tab:beta_ablation}
\end{table}

As shown in Tables~\ref{tab:gamma_ablation} and~\ref{tab:beta_ablation}, MOON maintains stable performance across a range of $\beta$ and $\gamma$, indicating that it is not sensitive to moderate variations of these hyperparameters.

\subsection{Ablation on Key Components}

\paragraph{Orthonormalized Updates.} We conduct an ablation study on the MultiMNIST experiment to evaluate the impact of the orthonormalized updates. We compare MOON with that using the Euclidean-direction updates while retaining the matrix-norm-oriented task weighting.

As shown in Table~\ref{tab:ablation_components}, without the orthonormalized update, using only matrix-norm-oriented task weighting leads to inferior performance. This indicates that the orthonormalized updates play an important role in the effectiveness of MOON.

\paragraph{Momentum.} We further conduct an ablation study on the momentum mechanism by directly constructing the update from the current aggregated gradient without momentum smoothing.

As shown in Table~\ref{tab:ablation_components}, removing momentum results in a clear performance degradation. This indicates that the momentum is practically necessary.

\begin{table}[h]
\centering
\begin{tabular}{lccc}
\toprule
Method & Left & Right & Average \\
\midrule
MOON & \textbf{95.99} & \textbf{95.31} & \textbf{95.65} \\
\quad (with Euclidean update) & 95.18 & 94.42 & 94.80 \\
\quad (without momentum) & 93.96 & 93.44 & 93.70 \\
\bottomrule
\end{tabular}
\caption{Ablation on the orthonormalized updates and momentum on MultiMNIST. Test accuracies (\%) for the left and right digit classification tasks and their average are reported.}
\label{tab:ablation_components}
\end{table}

\subsection{30-task Synthetic Data Regression Experiment}

\begin{figure}[h]
\begin{center}
\centering
\includegraphics[width=0.6\columnwidth]{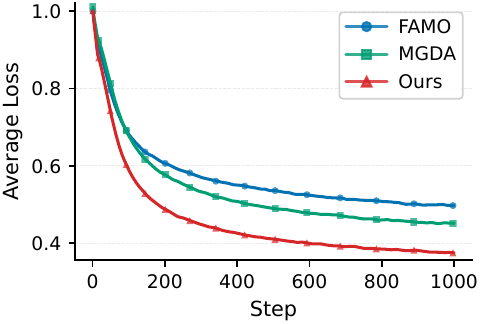}
\caption{Average squared loss after 1000 training steps on the synthetic data multi-objective regression problem.}
\Description{Line chart comparing the average squared loss of MGDA, FAMO, and MOON over 1000 steps. MOON converges faster and ends with the lowest loss.}
\label{fig:toy_avgloss}
\end{center}
\end{figure}

To better understand steepest descent for matrix-valued parameters, we construct a multi-objective regression problem on synthetic data. We adopt an MLP with parameters including weight matrices of its hidden linear layers. We randomly generate 20-dimensional inputs and 30-dimensional targets, treating the regression of each target dimension as a separate task. The average square loss $\boldsymbol{L} = \frac{1}{nd}\sum_{i=1}^{n}\|\boldsymbol{y}_i-\boldsymbol{\hat{y}}_i\|_{2}^{2}$ across these tasks is used to demonstrate training efficiency, where $n$ represents the sample size and $d$ represents the output dimension. We compare the proposed MOON method with traditional Euclidian-based gradient manipulation methods like MGDA and FAMO.

As shown in Figure \ref{fig:toy_avgloss}, MOON converges faster and attains a lower average training loss than Euclidean-based gradient manipulation baselines. We observe that in the early stage, MOON quickly opens a clear gap over MGDA and FAMO, and this advantage persists throughout training. After 1k steps, MOON reduces the final average loss by about 20\% compared to FAMO and about 15\% to MGDA.

\begin{figure}[t!]
\centering
\begin{minipage}[t]{0.32\textwidth}
  \centering
  \includegraphics[width=\linewidth]{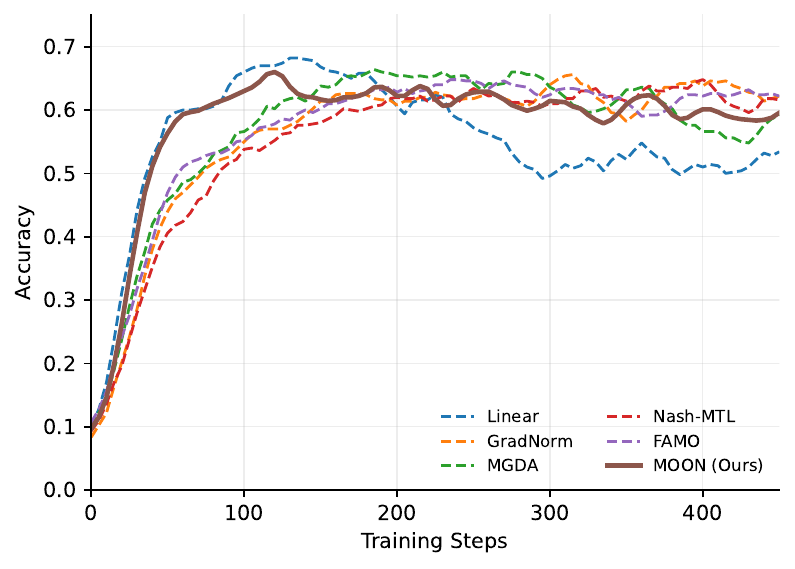}\par
  \small (a) Accuracy
\end{minipage}\hfill
\begin{minipage}[t]{0.32\textwidth}
  \centering
  \includegraphics[width=\linewidth]{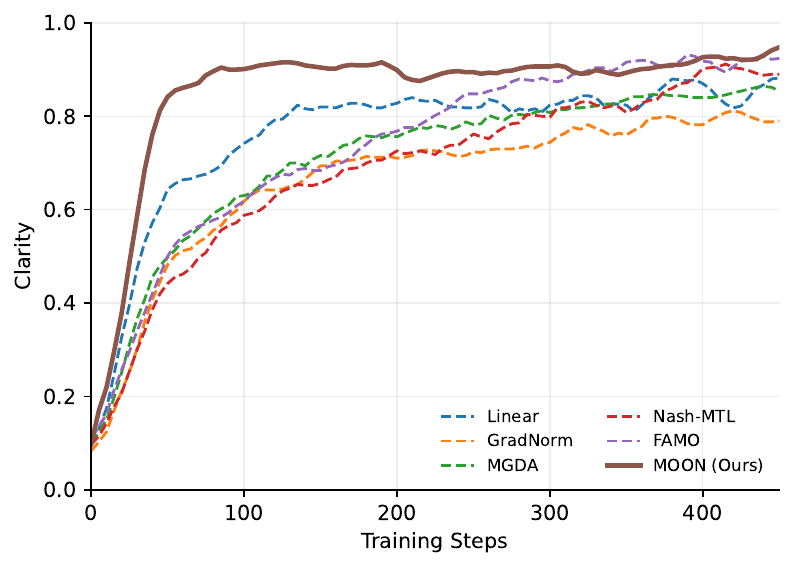}\par
  \small (b) Clarity
\end{minipage}\hfill
\begin{minipage}[t]{0.32\textwidth}
  \centering
  \includegraphics[width=\linewidth]{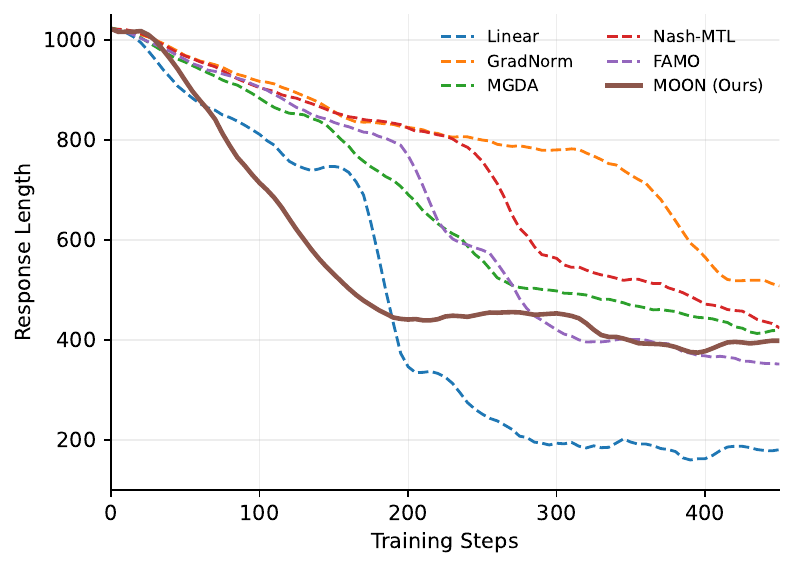}\par
  \small (c) Response length
\end{minipage}

\caption{Multi-objective reinforcement fine-tuning of Qwen3-1.7B-Base on MATH500 following the setting of Figure~1(c) in \citet{lu2026uncovering}. We report accuracy, clarity, and response length throughout training, where response length serves as the measure of conciseness. Higher accuracy and clarity are better, while shorter responses indicate better conciseness.}
\Description{Three line charts compare Linear, GradNorm, MGDA, Nash-MTL, FAMO, and MOON during multi-objective reinforcement fine-tuning of Qwen3-1.7B-Base on MATH500. MOON improves clarity rapidly while maintaining competitive accuracy and reducing response length, reaching a favorable trade-off among the three objectives in fewer training steps.}
\label{fig:qwen3_math500}
\end{figure}

\subsection{Training Efficiency Comparison}

We evaluate the training efficiency of MOON on MultiMNIST by tracking the average training cross-entropy (CE) loss against wall-clock time, with MGDA and FAMO as baselines. As reported in Table~\ref{tab:training_ce_loss_time}, MOON attains a lower training loss under the same time budget throughout training. The gap widens at later stages: after 1,000 seconds, the average CE loss reaches 0.144 with MOON, compared with 0.215 for MGDA and 0.179 for FAMO.

\begin{table}[H]
\centering
\begin{small}
\begin{sc}
\begin{tabular}{lccc}
\toprule
Time (s) & MGDA & FAMO & MOON \\
\midrule
200  & 0.500 & 0.368 & \textbf{0.362} \\
400  & 0.339 & 0.277 & \textbf{0.259} \\
600  & 0.273 & 0.222 & \textbf{0.199} \\
800  & 0.229 & 0.188 & \textbf{0.164} \\
1000 & 0.215 & 0.179 & \textbf{0.144} \\
\bottomrule
\end{tabular}
\end{sc}
\end{small}
\caption{Average training CE loss vs. training time.}
\label{tab:training_ce_loss_time}
\end{table}

\begin{table}[H]
\centering
\begin{small}
\begin{sc}
\begin{tabular}{lc}
\toprule
Method & Time (s) \\
\midrule
MGDA & 1576 \\
FAMO & 1105 \\
MOON & \textbf{949} \\
\bottomrule
\end{tabular}
\end{sc}
\end{small}
\caption{Training time required to reach an average CE loss of 0.15.}
\label{tab:time_to_ce_015}
\end{table}

We also measure the wall-clock time required to reduce the average training CE loss below a common threshold. As shown in Table~\ref{tab:time_to_ce_015}, MOON reaches an average CE loss of 0.15 approximately 39.8\% faster than MGDA and 14.1\% faster than FAMO. Thus, the additional computation introduced by MOON is offset by faster convergence, resulting in lower end-to-end training time.

We further compare GPU memory usage on MultiMNIST. MGDA, FAMO, and MOON consume 2,076 MB, 2,076 MB, and 2,078 MB, respectively. This indicates that MOON exhibits no significant difference in memory overhead compared with the other MOO methods.

Overall, MOON improves wall-clock convergence while retaining essentially the same memory footprint as MGDA and FAMO.

\FloatBarrier
\subsection{Experimental Results with Error Bars}

In the following tables \ref{table:multimnist_std} to \ref{table:qm9_std}, we report the results of MOON with error bars.

\begin{table}[H]
\centering
\begin{tabular}{@{}lccc@{}}
\toprule
Method
& Left Acc $\uparrow$
& Right Acc $\uparrow$
& Average Acc $\uparrow$ \\
\midrule
MOON (mean)
& \textbf{95.99}
& \textbf{95.31}
& \textbf{95.65} \\
MOON (std)
& $\pm 0.07$
& $\pm 0.10$
& $\pm 0.07$ \\
\bottomrule
\end{tabular}
\caption{Test accuracy (\%) on MultiMNIST (2 tasks). Each experiment is repeated over 3 random seeds. The mean and the standard deviation is reported. Per-task accuracies (left/right) and their average are reported. The best result is marked in bold.}
\label{table:multimnist_std}
\end{table}

\begin{table}[H]
\centering
{\small
\setlength{\tabcolsep}{2.5pt}
\resizebox{\textwidth}{!}{%
\begin{tabular}{@{}lcccccccccc@{}}
\toprule
& \multicolumn{2}{c}{Segmentation}
& \multicolumn{2}{c}{Depth}
& \multicolumn{5}{c}{Surface Normal}
& \\
\cmidrule(lr){2-3}
\cmidrule(lr){4-5}
\cmidrule(lr){6-10}
Method
& mIoU $\uparrow$
& Pix Acc $\uparrow$
& Abs Err $\downarrow$
& Rel Err $\downarrow$
& Mean $\downarrow$
& Median $\downarrow$
& $11.25^\circ \uparrow$
& $22.5^\circ \uparrow$
& $30^\circ \uparrow$
& $\Delta m\% \downarrow$ \\
\midrule
MOON (mean)
& 39.41
& \textbf{67.03}
& \textbf{0.4891}
& \textbf{0.2091}
& 25.61
& 20.27
& 28.89
& 54.87
& 67.32
& \textbf{-4.63} \\
MOON (std)
& $\pm 0.48$
& $\pm 0.24$
& $\pm 0.0024$
& $\pm 0.0030$
& $\pm 0.11$
& $\pm 0.14$
& $\pm 0.19$
& $\pm 0.23$
& $\pm 0.22$
& $\pm 0.22$ \\
\bottomrule
\end{tabular}
}
}
\caption{Results on NYU-v2 (3 tasks) dataset. Each experiment is repeated over 3 random seeds. The mean and the standard deviation is reported. The best average result is marked in bold. The task specific metrics and the average performance drop $\boldsymbol{\Delta m}\%$ are reported.}
\label{table:nyuv2_std}
\end{table}

\begin{table}[H]
\centering
\begin{tabular}{@{}lcccccc@{}}
\toprule
& \multicolumn{5}{c}{CityScapes}
& CelebA \\
\cmidrule(lr){2-6}
\cmidrule(lr){7-7}
& \multicolumn{2}{c}{Segmentation}
& \multicolumn{2}{c}{Depth}
& $\Delta m\% \downarrow$
& $\Delta m\% \downarrow$ \\
\cmidrule(lr){2-3}
\cmidrule(lr){4-5}
Method
& mIoU $\uparrow$
& Pix Acc $\uparrow$
& Abs Err $\downarrow$
& Rel Err $\downarrow$
& & \\
\midrule
MOON (mean)
& \textbf{78.61}
& \textbf{94.36}
& \textbf{0.0126}
& 31.41
& \textbf{1.54}
& \textbf{4.65} \\
MOON (std)
& $\pm 0.24$
& $\pm 0.09$
& $\pm 0.0008$
& $\pm 1.00$
& $\pm 0.60$
& $\pm 0.08$ \\
\bottomrule
\end{tabular}
\caption{Results on CityScapes (2 tasks) and CelebA (40 tasks) dataset. Each experiment is repeated over 3 random seeds. The mean and the standard deviation is reported. The best average result is marked in bold. Task-specific metrics and the average performance drop $\boldsymbol{\Delta m}\%$ are reported.}
\label{table:city_celeba_std}
\end{table}

\begin{table}[H]
\centering
{\small
\setlength{\tabcolsep}{2.2pt}
\begin{tabular}{@{}lcccccccccccc@{}}
\toprule
Method
& $\mu$
& $\alpha$
& $\epsilon_{\mathrm{HOMO}}$
& $\epsilon_{\mathrm{LUMO}}$
& $\langle R^2 \rangle$
& ZPVE
& $U_0$
& $U$
& $H$
& $G$
& $c_v$
& $\Delta m\% \downarrow$ \\
\cmidrule(lr){2-12}
& \multicolumn{11}{c}{MAE $\downarrow$}
& \\
\midrule
MOON (mean)
& \textbf{0.07}
& \textbf{0.24}
& \textbf{52.9}
& \textbf{71.0}
& 3.35
& 5.67
& \textbf{46.4}
& \textbf{46.7}
& \textbf{46.9}
& \textbf{46.3}
& \textbf{0.08}
& \textbf{49.9} \\
MOON (std)
& $\pm 0.006$
& $\pm 0.009$
& $\pm 2.8$
& $\pm 2.3$
& $\pm 0.035$
& $\pm 0.110$
& $\pm 1.9$
& $\pm 2.0$
& $\pm 1.9$
& $\pm 2.0$
& $\pm 0.004$
& $\pm 1.60$ \\
\bottomrule
\end{tabular}
}
\caption{Results on QM9 dataset (11 tasks). Each experiment is repeated over 3 random seeds. The mean and the standard deviation is reported. The best average result is marked in bold. The task specific metrics and the average performance drop $\boldsymbol{\Delta m}\%$ are reported.}
\label{table:qm9_std}
\end{table}

\FloatBarrier
\section{Limitations}
Our theoretical analysis assumes the exact computation of the polar factor, whereas our implementation uses a Newton--Schulz approximation. In practice, the approximation error is small, we therefore omit this error from the current analysis and leave its precise theoretical characterization to future work.

\end{document}